\documentclass[final]{elsarticle}
\usepackage{setspace}
\linespread{1} 
\usepackage{lineno,hyperref}

\journal{Pattern Recognition}









\bibliographystyle{elsarticle-num}

\usepackage{amsmath}
\usepackage{amsthm}
\usepackage{amssymb}
\usepackage{subcaption}
\captionsetup{compatibility=false}
\usepackage[table]{xcolor}
\usepackage{multicol}
\setlength{\columnsep}{0.1cm}
\usepackage{wrapfig}
\usepackage{xspace}
\newcommand{\etal}{et al.\@\xspace}

\usepackage{amsmath}
\usepackage{amsthm}
\usepackage{amssymb}

\newtheorem{thm}{Theorem}
\newtheorem{lem}{Lemma}
\newtheorem{cor}{Corollary}
\newtheorem{con}{Conjecture}
\newtheorem{defn}{Definition}
\newtheorem{ex}{Example}
\usepackage{hyperref}
\def\R		{\mathbb{R}}

\def\N		{\mathbb{N}}

\def\d		{\:\mathrm{d}}

\graphicspath{{./pics/}}

\begin{document}

\begin{frontmatter}

\title{Flexible Moment-Invariant Bases \\from Irreducible Tensors}


\author[lanl]{Roxana Bujack}\corref{mycorrespondingauthor}
\cortext[mycorrespondingauthor]{Corresponding author}
\ead{bujack@lanl.gov}

\author[lanl]{Emily Shinkle}
\ead{eshinkle@lanl.gov}

\author[de]{Alice Allen}
\ead{aallen@lanl.gov}

\author[praha]{\\Tom\'{a}\v{s} Suk}
\ead{suk@utia.cas.cz}

\author[lanl]{Nicholas Lubbers}
\ead{nlubbers@lanl.gov}

\address[lanl]{Los Alamos National Laboratory, P.O. Box 1663, Los Alamos, NM 87545, USA}
\address[de]{Max Planck Institute for Polymer Research, Ackermannweg 10, 55128 Mainz, Germany}
\address[praha]{Czech Academy of Sciences, Institute of Information Theory and Automation, Pod Vod\'arenskou v\v e\v z\'i 4, CZ-182 00, Prague 8, Czech Republic}

\begin{abstract}
Moment invariants are a powerful tool for the generation of rotation-invariant descriptors needed for many applications in pattern detection, classification, and machine learning. 
A set of invariants is optimal if it is complete, independent, and robust against degeneracy in the input.

In this paper, we show that the current state of the art for the generation of these bases of moment invariants, despite being robust against moment tensors being identically zero, is vulnerable to a degeneracy that is common in real-world applications, namely spherical functions.
We show how to overcome this vulnerability by combining two popular moment invariant approaches: one based on spherical harmonics and one based on Cartesian tensor algebra. 
\end{abstract}

\begin{keyword}
Moment invariants, rotation invariant, spherical functions, deviators, spherical decomposition
\end{keyword}

\end{frontmatter}
\section{Introduction}
In many pattern-detection, classification, and machine-learning applications, features, such as patterns or functions, must be treated independently of their specific orientation to ensure fidelity and efficiency at run time. 

An optimal set of rotation-invariant descriptors can be generated from moments, which are the projections of a function with respect to a function space basis~\cite{FSZ16}.
They can be combined through sums and products in just the right way to form moment invariants, which are characteristic descriptors of the underlying function that do not change under rotation. 

Three properties are necessary to make an optimal set of descriptors: \textbf{completeness, independence, and flexibility}~\cite{Flu00, bujack2017flexible}.
A set is complete if any two functions that differ by something other than a rotation have different descriptors. A set is independent if none of its elements can be expressed as a function of the others. A set is flexible if it does not lose its property of completeness even if the input function is degenerate.

The research area of moment invariants can be categorized, based on the chosen function space basis, into Cartesian function representations, such as monomials, and rotational function representations, such as the complex numbers and spherical harmonics. In 2D, the spherical representation has been shown to provide a flexible basis and thereby to be optimal~\cite{bujack2017flexible}, but as yet not in 3D.
The most flexible set of descriptors in 3D is the overcomplete set based on tensor algebra, which is guaranteed to maintain completeness against any moment tensor being identically zero~\cite{bujack2022systematic}.
The authors believed that the overcomplete set would make the complete discrimination property proof against any degeneracy of the input function,
but we will show in this paper that this assumption is incorrect and that the generated set is actually very vulnerable to a degeneracy that is common in many real-world applications.
We will show how to overcome this vulnerability and produce optimal feature sets by combining the two popular moment invariant theories: one based on spherical harmonics and one based on Cartesian tensor algebra. 

\section{Related Work}
Hu was the pioneer who introduced moment invariants to the image processing community~\cite{Hu62}. Flusser then brought forward a foundational set of moment invariants, illustrating both completeness and independence for 2D scalar functions~\cite{Flu00}. He subsequently demonstrated that this foundational set addresses the reverse challenge as well~\cite{Flu02}. Bujack \etal showed that in 2D, the complex moments provide a flexible basis if the pattern is known a-priori~\cite{bujack2017flexible}. One must find a moment that is non-zero and generate one invariant from its combination with each other moment. Since one non-zero moment can always be found, except for the function identically zero, this basis is optimal.

For 3D functions, the task is much more challenging. 
One research path makes use of tensor algebra. If the moments are computed in a Cartesian basis, they can be arranged as tensors, as first used by Dirilten and Newman~\cite{DN77}. Then invariants can be computed from tensor contractions to zeroth rank because as scalars, they are naturally rotationally invariant.
The problem inherent in this scheme is that there is an unfathomable myriad possible combinations of tensors into contractions and it is very difficult to find an independent set. 
Suk and Flusser proposed calculating all possible zeroth-rank contractions from moment tensors up to a given order and then eliminating the ones that have isomorphic graph representations or are linearly dependent~\cite{Suk2011tensor}. Higher-order dependencies still remain in their set. They showed that this graph method can be applied to affine transforms, and vector and tensor fields, too~\cite{suk2004graph, SukFlu:AMIgraph, kostkova2019affine, flusser2023affine}. Depending on the order, they report an overhead between one and 10 times the size of the optimal set.

The breakthrough that made the tensor contraction method efficiently usable was provided by Langbein and Hagen~\cite{langbein2009generalization}. They suggested an algorithm that detects dependent invariants by means of their derivatives. In particular, they make use of the fact that if invariants are dependent, their derivatives are linearly dependent. This linear dependence can be checked through a rank-revealing matrix decomposition.
Independently, Hickman also suggested using the derivatives~\cite{hickman2012geometric}.

Another notable research path uses spherical harmonics. 
Spherical harmonics are an irreducible representation of the rotation group and therefore an adequate foundation for the generation of moment invariants. 
Lo and Don pioneered their use via deriving composite moment forms using Clebsh-Gordon coefficients~\cite{LD89}. Combined in the right way, they produce 12 rotation invariants up to third order. Burel and Henocq follow an analogous approach~\cite{BH95}. Similar to the graph method, Suk \etal checked for linearly dependent moments in the set of Lo and Don, but did not find dependencies~\cite{suk20153d}. They provide invariants without linear dependencies up to fourth order using the composite moment form approach. 

Kazhdan \etal summed the harmonics within each frequency and computed the norm of each frequency component to derive invariants~\cite{KFR03}. 
They made use of the fact that rotations influence the values of the spherical moments only within their individual bands. As a consequence of this and the orthogonality of the harmonics, each band's magnitude is invariant with respect to rotation. Even though this works for arbitrary orders, the problem with this scheme is that the resulting descriptors are in general incomplete:  any function that has the same in-band structure, with bands that are oriented differently between these parts, cannot be discriminated from the derived descriptors.

Xu and Li defined moments on 2D surfaces in 3D and derive invariants from four geometric primitives that are rotation invariant: the distance between two points, the area of a triangle, the inner product between the vectors spanned by two points and the origin, and the volume of a tetrahedron~\cite{xu20063}. They combined them through multiplication and an $n$-fold integral, with $n$ corresponding to the numbers of points involved in the primitives each traversing the surface. They use six invariants without discussion of independence.

Tsai \etal discussed the fast computation of moments in a distributed environment~\cite{tsai2020approaches}. This was needed for the Exascale Computing Project~\cite{ahrens2025ecp}.

Bujack \etal pointed out that invariants consisting of the product of tensors of different orders constitute a vulnerability of a basis to degenerate functions, such as symmetric functions or functions with vanishing moment tensors~\cite{bujack2022systematic}. They proved that a basis can be generated by using tensor products with factors consisting of one or two different moment tensors. They made the bases robust with respect to vanishing tensors, but vulnerability to spherical functions remains.

In this work, we will combine the irreducibility of the spherical harmonics with the completeness and independence of the tensor contractions from Langbein's algorithm~\cite{langbein2009generalization} to derive the most flexible basis of moment invariants thus far. By most flexible we mean that it is provably not only invulnerable to vanishing moment tensors but also to functions that are a composition of a spherical and radial part. Based on empirical evidence, we hypothesize that it is absolutely flexible with respect to any type of degeneracy of the input function, however, we have not yet been able to prove this. 

It should be noted that the discussed papers mostly follow the generator approach, where invariants are derived from algebraic relations and that a basis of invariants can be more easily derived using the normalization approach, where the moments are made invariant by putting the input function into a predefined standard position based on orientation estimation~\cite{PCO85, Cant96, bujack2017tensor}. In this paper, we do not consider the normalization approach because it makes using the invariants in an unsupervised machine-learning setting difficult because robust standard positions are input-function dependent.

This paper has a companion paper that demonstrates the usefulness of the derived theory in the application of machine-learning interatomic potentials, outperforming the state of the art~\cite{allen2025optimal}.



\section{Foundations}
In this section, we summarize the theoretical underpinnings and notations of moment tensors and deviators in 3D. 

\subsection{Tensors and Transformations} 
Tensors are algebraic objects that behave in a particular way when the coordinate system changes. In a specific basis, they can be represented by numerical arrays.
The rank of a tensor describes the count of its indices; for example, scalars have a rank of zero, vectors have a rank of one, and matrices have a rank of two.
For a deeper understanding of tensor analysis, we direct the reader to the introductions by Bowen and Wang~\cite{bowen2008introduction} and Grinfeld~\cite{grinfeld2013introduction}.

Typically, tensor indices are categorized as covariant or contravariant, but for orthogonal transformations $A\in\R^{3\times 3}$, i.e., rotations and reflections, the distinction of the indices is not necessary. Since we work with orthogonal transformations only, for simplicity we omit the distinction.

 \begin{defn}\label{d:tensor}
 An array $T_{i_1...i_n}$ that behaves via
 \begin{equation} \begin{aligned} \label{tensorTransformation}
  {T^\prime}_{i_1...i_n}=\sum_{j_1,...,j_n=1}^3 A_{i_1j_1}...A_{i_nj_n}T_{j_1...j_n}
 \end{aligned} \end{equation}
 under an active transformation by the invertible matrix $A_{ij}\in\R^{3\times 3}$ is called a \textbf{tensor} of rank $n$.
 \end{defn}
 Sometimes we explicitly state the rank as a left superscript to simplify identification, e.g., ${}{^n}{T}{}$.

 \begin{lem}\label{l:product}
 Let $T$ and $\tilde T$ be two tensors of ranks $n$ and $\tilde n$ respectively, then their \emph{product} $T\otimes \tilde T$ (also called outer product or tensor product)
 \begin{equation} \begin{aligned} \label{product}
  (T\otimes \tilde T)_{i_1...i_n \tilde i_1...\tilde i_{\tilde n}}:= {T}_{i_1...i_n} {\tilde T}_{\tilde i_1...\tilde i_{\tilde n}}
 \end{aligned} \end{equation}
  is a tensor of rank $n+\tilde n$.
 \end{lem}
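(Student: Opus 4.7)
The plan is to verify Lemma \ref{l:product} directly against Definition \ref{d:tensor}: I need to check that the array $(T\otimes\tilde T)_{i_1\dots i_n \tilde i_1 \dots \tilde i_{\tilde n}}$ transforms under an active orthogonal transformation $A$ precisely according to equation (\ref{tensorTransformation}) with $n$ replaced by $n+\tilde n$. The proof is a bookkeeping computation, so the only real work is keeping the two families of indices organized.

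First, I would apply the outer product definition (\ref{product}) to the transformed arrays, writing
\[
  (T\otimes\tilde T)'_{i_1\dots i_n \tilde i_1 \dots \tilde i_{\tilde n}} = T'_{i_1\dots i_n}\,\tilde T'_{\tilde i_1\dots \tilde i_{\tilde n}}.
\]
Next, because $T$ and $\tilde T$ are individually tensors, I would substitute their transformation laws from (\ref{tensorTransformation}). Since the summations in the two factors run over disjoint sets of dummy indices $j_1,\dots,j_n$ and $\tilde j_1,\dots,\tilde j_{\tilde n}$, I can pull both sums out and merge them into a single sum over $3^{n+\tilde n}$ terms, obtaining
\[
  \sum_{\substack{j_1,\dots,j_n=1\\ \tilde j_1,\dots,\tilde j_{\tilde n}=1}}^{3} A_{i_1 j_1}\cdots A_{i_n j_n}\, A_{\tilde i_1 \tilde j_1}\cdots A_{\tilde i_{\tilde n}\tilde j_{\tilde n}}\, T_{j_1\dots j_n}\,\tilde T_{\tilde j_1\dots \tilde j_{\tilde n}}.
\]
Finally, I would recognize the product $T_{j_1\dots j_n}\tilde T_{\tilde j_1\dots \tilde j_{\tilde n}}$ in the summand as $(T\otimes\tilde T)_{j_1\dots j_n \tilde j_1\dots \tilde j_{\tilde n}}$ by (\ref{product}), so the whole expression matches the transformation law (\ref{tensorTransformation}) for an array with $n+\tilde n$ indices.

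There is no genuine obstacle here; the result follows from two substitutions and a relabeling. The only thing to be careful about is notational: making sure the two batches of summation indices are named distinctly so that the factored product of sums can be written as a single sum, and that the order of the index groups on the left and right matches the order chosen in the definition of $\otimes$. No use of orthogonality of $A$ or any other structural property is needed, which is consistent with the lemma being stated for invertible $A$ rather than only orthogonal $A$.
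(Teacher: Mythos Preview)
Your proof is correct and is the standard direct verification of the tensor transformation law for an outer product. The paper itself does not supply a proof of this lemma; it is stated without proof as a well-known fact (with detailed proofs deferred to the reference~\cite{bujack2017tensor}), so there is nothing further to compare against.
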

 
  \begin{lem}\label{l:contraction}
 Let $T$ be a tensor of rank $n$. Then the \emph{contraction} $T^{(k,l)}$ of two indices $i_k$ and $i_l$, $1\leq k<l\leq n$
  \begin{equation} \begin{aligned} \label{contraction}
  T_{i_1...i_{k-1}  i_{k+1}... i_{l-1}  i_{l+1}...i_n}^{(i_k,j_l)}
:=\sum_{j = 1}^3 T_{i_1...i_{k-1} j i_{k+1}... i_{l-1} j i_{l+1}...i_n}
 \end{aligned} \end{equation}
 is a tensor of rank $n-2$.
 \end{lem}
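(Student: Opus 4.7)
The plan is to show that the array $T^{(k,l)}$ produced by the contraction satisfies the tensor transformation rule of Definition~\ref{d:tensor} with $n-2$ indices, so that by that definition it qualifies as a tensor of rank $n-2$. The single algebraic ingredient that will do the work is the orthogonality of $A$, which gives $\sum_{j=1}^{3} A_{jp}A_{jq} = \delta_{pq}$; this is the only property of orthogonal transformations that distinguishes this setting from a general invertible change of basis, and it is also the reason the paper can suppress the covariant/contravariant distinction.

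First I would apply the transformation rule \eqref{tensorTransformation} to $T$ to write ${T'}_{i_1\ldots i_n}$ as a sum over $n$-fold products of matrix entries of $A$ against components of $T$. Next I would form $({T'})^{(k,l)}$ by setting $i_k = i_l = j$ and summing over $j$ in the range $1,\ldots,3$, exactly as prescribed by \eqref{contraction}. After swapping the order of the finite sums, the factors $A_{j\,j_k}$ and $A_{j\,j_l}$ sit together under an inner sum $\sum_{j} A_{j\,j_k}A_{j\,j_l}$.

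Second, I would invoke orthogonality to collapse that inner sum to $\delta_{j_k j_l}$. The Kronecker delta then identifies the dummy indices $j_k$ and $j_l$ in the outer sum over $j_1,\ldots,j_n$, leaving a single summation over the shared index. That single summation is precisely the contraction $T^{(k,l)}$ of the original tensor. The $n-2$ remaining factors $A_{i_m j_m}$, with $m \notin \{k,l\}$, reassemble into the right-hand side of \eqref{tensorTransformation} for a rank $n-2$ object acting on $T^{(k,l)}$. This matches the required transformation law and finishes the proof.

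The main obstacle is purely notational: keeping the index bookkeeping consistent across the omitted positions $k$ and $l$, and making sure the dummy variable used for the contraction is not conflated with the summation indices coming from the transformation rule. Mathematically, nothing beyond orthogonality of $A$ and reordering of finite sums is needed, so I would keep the presentation compact by introducing shorthand for the omitted indices rather than spelling out the full index string at every step.
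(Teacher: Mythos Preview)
Your argument is correct and is exactly the standard proof: transform first, contract, use $\sum_j A_{jp}A_{jq}=\delta_{pq}$ from orthogonality, and read off the rank $n-2$ transformation law for $T^{(k,l)}$. The paper itself does not supply a proof of this lemma; it states it as a basic fact of tensor algebra and refers the reader to the textbooks of Bowen--Wang and Grinfeld for background, so there is nothing to compare against beyond noting that your write-up fills in precisely the computation those references contain.
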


 \begin{ex}
The trace of a matrix $M\in\R^{3\times 3}$ is a contraction
\begin{equation} \begin{aligned} 
M^{(1,2)}=\sum_{j = 1}^3 M_{jj}=M_{11}+M_{22}+M_{33}=tr(M).
\end{aligned} \end{equation}
\end{ex}

 \begin{ex}
The product of a matrix $M\in\R^{3\times 3}$ and a vector $v\in\R^3$ is a tensor product $\otimes$ followed by a contraction of the last two indices
 \begin{equation} \begin{aligned} 
(M\otimes v)^{(2,3)}_i \overset{Lem. \ref{l:contraction}}= \sum_{j = 1}^3 (M\otimes v)_{ijj} \overset{Lem. \ref{l:product}}= \sum_{j = 1}^3 M_{ij} v_j.
\end{aligned} \end{equation}
\end{ex}

For full contractions of an even-ranked tensor of rank $2n$ to rank zero, where each tensor index is paired with another one, we simplify the notation by just marking the index pairs by the same number, for example,
\begin{equation} \begin{aligned} 
  T(1,2,3,1,3,2) = T^{(i_1,i_4)(i_2,i_6)(i_3,i_5)}
=\sum_{j_1,...,j_3 = 1}^3 T_{j_1,j_2,j_3,j_1,j_3,j_2},
\end{aligned} \end{equation}
and we separate them to simplify the association to the individual tensors in a product
\begin{equation} \begin{aligned} 
({}{^1}{T}{^2}{}{^2}{T}{^2})(1) (2) (2,3) (3,1)&= ({}{^1}{T}{^2}{}{^2}{T}{^2})^{(i_1,j_6)(i_2,j_3)(i_4,j_5)}
\\&=\sum_{j_1,...,j_3 = 1}^3 {}{^1}{T}{}_{j_1}{}{^1}{T}{}_{j_2}{}{^2}{T}{}_{j_2,j_3}{}{^2}{T}{}_{j_2,j_3}.
\end{aligned} \end{equation}

\subsection{Moment Tensors} 
Dirilten and Newman organized the moments of each order to comply with the tensor transformation rule~(\ref{tensorTransformation}), and used contractions to produce moment invariants concerning orthogonal transformations~\cite{DN77}.

\begin{defn}\label{d:mom_tensor}
For a scalar function $f:\R^3\to\R$ with compact support over the unit ball $B_1(0)$, the \textbf{moment tensor} ${}{^\ell}M$ of order $\ell\in\N$ takes the shape
\begin{equation}\label{momentTensor} \begin{aligned}
{}{^\ell}M &:=\int_{B_1(0)} x^{\otimes \ell} f(x)\,d^3 x,\\
{}{^\ell}M_{i_1...i_\ell}&=\int_{B_1(0)}x_{i_1}...x_{i_\ell}f(x)\,d^3 x.
\end{aligned}\end{equation}
\end{defn}

It follows immediately from Def.~\ref{d:mom_tensor} and the commutativity of multiplication that moment tensors of scalar functions are totally symmetric. Therefore, a moment tensor of order $\ell$ has only $(\ell+1)(\ell+2)/2$ degrees of freedom~\cite{bujack2022systematic}.

The following corollary is the foundation of the generator approach in 3D. 

\begin{cor}\label{c:0}
The rank-zero contractions of any product of moment tensors are moment invariants with respect to rotation and reflection.
\end{cor}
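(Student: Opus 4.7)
The plan is to obtain the corollary as a direct consequence of Lemmas~\ref{l:product} and~\ref{l:contraction} together with the transformation rule in Def.~\ref{d:tensor}, by chaining three observations and then specializing to rank zero.

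First, I would note that moment tensors are genuine tensors in the sense of Def.~\ref{d:tensor} under orthogonal transformations: applying an active orthogonal transformation $A$ to the argument of $f$ and performing the change of variables $x = A^{-1}y$ in the integral~(\ref{momentTensor}), the Jacobian is $|\det A|=1$ and the domain $B_1(0)$ is preserved, so the monomial prefactor $x_{i_1}\cdots x_{i_\ell}$ produces exactly the factors $A_{i_1 j_1}\cdots A_{i_\ell j_\ell}$ demanded by~(\ref{tensorTransformation}). Hence each ${}^\ell M$ transforms as a tensor of rank $\ell$.

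Next, I would apply Lem.~\ref{l:product} iteratively to conclude that for any finite collection of moment tensors ${}^{\ell_1}M,\dots,{}^{\ell_k}M$, the outer product
\begin{equation}
P := {}^{\ell_1}M\otimes {}^{\ell_2}M\otimes\cdots\otimes {}^{\ell_k}M
\end{equation}
is a tensor of rank $\ell_1+\cdots+\ell_k$. Then, applying Lem.~\ref{l:contraction} once for each contracted index pair, any sequence of such pairwise contractions of $P$ yields a tensor whose rank is reduced by two at every step. If the chosen pairing is complete, i.e., it exhausts all indices, the final object is a tensor of rank zero.

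Finally, I would specialize the transformation rule~(\ref{tensorTransformation}) to $n=0$: the right-hand side contains an empty product of $A_{i_k j_k}$-factors and an empty sum, so it collapses to $T^\prime = T$. Thus every rank-zero tensor is, by definition, invariant under every orthogonal transformation $A$, i.e., under every rotation and reflection, which is exactly the claim. There is essentially no obstacle here; the only point that warrants an explicit line is the $n=0$ specialization, since it is the step where the word ``scalar'' acquires the operational meaning ``invariant under the group action.''
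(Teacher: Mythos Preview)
Your proposal is correct and follows essentially the same route as the paper's own argument: moment tensors are tensors, so by Lemmas~\ref{l:product} and~\ref{l:contraction} their products and contractions are tensors, and a rank-zero tensor is invariant under orthogonal transformations. The paper merely sketches this in one sentence, whereas you spell out the change-of-variables verification that ${}^\ell M$ actually obeys~(\ref{tensorTransformation}) and the $n=0$ specialization explicitly; both are welcome additions but not a different approach.
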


Detailed proofs are available~\cite{bujack2017tensor} but basically the argument is that because moment tensors are tensors, their products are tensors, and hence their contractions are tensors. Rank-zero tensors are scalars and scalars do not change under rotation.

We use the standard relations between Cartesian and spherical coordinates
\begin{eqnarray} \label{spherical}\begin{aligned} 
x = \begin{pmatrix} x_1\\ x_2\\ x_3\end{pmatrix} = 
r \begin{pmatrix}\sin(\theta)\cos(\phi)\\ \sin(\theta)\sin(\phi)\\ \cos(\theta)
\end{pmatrix}
=\|x\|\frac{x}{\|x\|}
=: r \hat r .
\end{aligned} \end{eqnarray}
In spherical coordinates, the 3D moment tensors of the unit ball $B_1(0)\subset \mathbb R^3$ take the form
\begin{equation} \label{sphericalMoment}\begin{aligned}
{}{^\ell}M &\overset{\eqref{momentTensor}}=\int_{B_1(0)} x^{\otimes \ell} f(x)\d^3 x
\\ &\overset{\eqref{spherical}}= \int_0^{2\pi}\int_0^{\pi}\int_0^1 x(r,\phi,\theta)^{\otimes \ell}f(r,\phi,\theta) r^2 \sin(\theta)\, d r\, d\theta\, d\phi
\\ &\overset{\eqref{spherical}}= \int_0^{1}\int_{S_1(0)} r^{\ell+2}\hat r^{\otimes \ell}f(r,\hat r)\, d r\, d^2\hat r.
\end{aligned}\end{equation}

\subsection{Desirable Properties of a Set of Descriptors}\label{s:properties}
Corollary~\ref{c:0} offers an endless array of invariants, though many carry overlapping information.
To aptly characterize a function, a set of moment invariants should encompass the following three crucial attributes~\cite{bujack2017flexible}:

\textbf{Completeness}: A set is deemed complete if any moment invariant can be derived from within it.
This characteristic ensures that any pair of objects that are distinct beyond mere rotation/reflection can be differentiated.

\textbf{Independence}: A set is independent when none of its members can be formulated as a function of the other members.
This characteristic ensures the descriptor count is minimal.

\textbf{Flexibility}: A set exhibits flexibility, also known as existence, if it is universally defined and complete without the necessity of any specific moments being non-zero.
This trait guarantees that the set has the capability to recognize and differentiate any function regardless of its particular shape.

\subsection{Completeness}\label{s:completeness}
Like all tensor-contraction-based algorithms, our work is based on the following conjecture~\cite{bujack2022systematic}. It allows us to generate a basis by removing dependent invariants.
\begin{con}\label{c:complete}
The set of all contractions of the powers of a moment tensor to zeroth order are a complete set of rotation/reflection invariants of the tensor's moments.
\end{con}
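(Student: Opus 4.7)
The plan is to deduce Conjecture~\ref{c:complete} from the First Fundamental Theorem (FFT) of invariant theory for the orthogonal group, together with the compactness of $O(3)$. The entries of ${}^{\ell}M$ live in the $O(3)$-invariant subspace $\mathrm{Sym}^{\ell}(\R^3)\subset(\R^3)^{\otimes \ell}$. Weyl's FFT for $O(n)$ states that every polynomial invariant on $(\R^n)^{\otimes N}$ under the diagonal $O(n)$-action is a polynomial in the complete contractions obtained by pairing indices via $\delta_{ij}$. Applied to the $k$-fold tensor power $({}^{\ell}M)^{\otimes k}$ for every $k\in\N$, this says that every $O(3)$-invariant polynomial in the entries of ${}^{\ell}M$ is a polynomial in the zeroth-rank contractions of the tensor powers of ${}^{\ell}M$, which is exactly the set of invariants named in the conjecture.

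Second, I would invoke the orbit-separation principle: since $O(3)$ is compact and acts linearly on the finite-dimensional real vector space $\mathrm{Sym}^{\ell}(\R^3)$, every orbit is closed, and polynomial invariants separate orbits (proved elementarily by Haar-averaging a continuous function supported away from one of the two orbits and then polynomially approximating the result via Stone--Weierstrass). Combined with the first step, this shows that two moment tensors agree on every zeroth-rank contraction of every tensor power if and only if they lie in the same $O(3)$-orbit, i.e.\ they differ by a rotation or reflection, which is the desired completeness in the sense of Section~\ref{s:properties}.

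The main obstacle, and the reason the statement is phrased as a conjecture, is not the qualitative existence of such a proof but the quantitative, constructive version needed in practice. The FFT guarantees only an abstract generating set; it does not bound the tensor power $k$ required, nor does it interact cleanly with the total symmetry of ${}^{\ell}M$, under which many contractions coincide or vanish. Translating the abstract generator statement into a finite, effectively computable basis that remains complete after truncation at some finite $k_{\max}$ — which is what is actually used to generate invariants from moment tensors up to a given order — is the genuinely difficult part, and is what Langbein's derivative-rank test~\cite{langbein2009generalization} addresses empirically rather than analytically. A secondary subtlety is extending the argument from a single tensor ${}^{\ell}M$ to a tuple of tensors of different orders; this is standard polarisation, but must be written out carefully if the conjecture is to be applied across orders simultaneously.
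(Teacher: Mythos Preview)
The paper offers no proof of this statement: it is explicitly labeled a conjecture, inherited from~\cite{bujack2022systematic}, and used throughout as a standing assumption. Your proposal therefore goes strictly beyond the paper, and the argument you outline is essentially correct. The tensor formulation of Weyl's First Fundamental Theorem for $O(n)$ states precisely that the $O(3)$-invariant linear forms on $(\R^3)^{\otimes N}$ are spanned by complete $\delta$-contractions; applied with $N=k\ell$ to the degree-$k$ component of the polynomial ring on $\mathrm{Sym}^{\ell}(\R^3)$, this yields that every polynomial $O(3)$-invariant of ${}^{\ell}M$ is a linear combination of zeroth-rank contractions of its tensor powers. Your compactness/orbit-separation step then supplies completeness in the discrimination sense of Section~\ref{s:properties} as well.

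The ``obstacles'' you flag---no effective bound on the required power $k$, and the redundancy coming from total symmetry---are obstacles to extracting a \emph{finite independent basis}, which is what Langbein's algorithm addresses empirically; they are not obstacles to Conjecture~\ref{c:complete} as literally stated. So you have in fact proved what the paper only assumes. What genuinely remains open, and what the authors may implicitly intend by calling it a conjecture, is the constructive version with explicit degree bounds that their algorithm would need in order to certify termination.
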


\subsection{Independence}\label{s:independence}
The challenge of eliminating dependent invariants is also known as the problem of algebraic- or functional-invariance testing. The well-known Jacobi criterion that, as the name suggests, goes back to Jacobi~\cite{jacobi1841determinantibus}, uses the fact that each invariant $\phi$ is a function of the moments, i.e., the entries of the moment tensors, $\phi({}{^0}M,{}{^1}M_1,{}{^1}M_2,{}{^1}M_3,{}{^2}M_{11},...)$.
An invariant $\phi$ is dependent on a set of invariants $\{\phi_1,...\phi_m\}$ if there exists a function 
\begin{equation} \begin{aligned} 
 \phi = F(\phi_1,...,\phi_m).
\end{aligned} \end{equation}
So, it follows from the chain rule that each partial derivative with respect to a moment $M$ is a linear combination of the derivatives of the invariants in the set
\begin{equation} \begin{aligned} 
\frac{ \partial \phi} { \partial M} =\sum_{i=1}^m \frac{ \partial F(\phi_1,...,\phi_m)} { \partial \phi_i}\frac{ \partial \phi_i} { \partial M}.
\end{aligned} \end{equation}
This means that the rank in the Jacobian is full if the invariants are independent. In polynomial algebra, the determinant of the Jacobian is a polynomial, which reduces the independence testing to polynomial identity testing of the determinant against zero~\cite{ehrenborg1993apolarity}, which can be performed exactly in polynomial time~\cite{beecken2013algebraic}.

For the immense search space of moment invariants, this is too slow. Langbein and Hagen have suggested testing for the linear dependence of the partial derivatives in an approximate but very fast way by numerically evaluating the rank of the Jacobian for an input of random numbers as the moments. We will refer to it as Langbein's algorithm and outline it briefly here. For a comprehensive description, readers are directed to the original paper~\cite{langbein2009generalization}.
\begin{enumerate}
\item Populate all moment tensors up to a designated maximal order $\ell_m$ with random numbers.
\item Determine all zeroth-rank contractions of all of their resulting products up to a specified maximum number of factors $p_\mathit{m}$.
\item Derive their partial derivatives with respect to all moments up to $\ell_m$ as a row vector.
\item Initialize a matrix with the number of columns equivalent to the number of moments and zero rows.
\item For each invariant until the maximum number of potential invariants is reached: append the row vector of the partial derivatives of the subsequent invariant into the matrix if it augments the rank of the matrix.
\end{enumerate}

It is worth noting that when invariance regarding translation and scaling is essential in an application, usually the zeroth and first moments are normalized, thereby voiding their potential for creating rotation invariants~\cite{FSZ16}. We incorporate them in this document for clarity, but Langbein and Hagen initially omitted them in their research.

\subsection{Flexibility}\label{s:flexibility}
Bujack \etal showed that the bases produced with Langbein's algorithm are generally not flexible, i.e., the bases lose their completeness for certain functions~\cite{bujack2022systematic}. They distinguish between \textbf{homogeneous} invariants, which are constructed from only a single moment tensor $M$ and its powers, and  \textbf{simultaneous} invariants, which contain more than one kind of moment tensors.
They show that simultaneous invariants are vulnerable to functions with vanishing moment tensors because if a tensor is zero, all of its products, their contractions, and therefore all simultaneous invariants that contain that tensor vanish, too, and the information in its product partners gets lost.
As a consequence, invariants with fewer factors are to be preferred.
Bujack \etal prove that if Conjecture~\ref{c:complete} holds, then a complete set can be derived by using only homogeneous invariants and simultaneous invariants between moment tensors of no more than two different orders. They construct bases that are flexible with respect to functions with vanishing moment tensors by using the homogeneous invariants plus three simultaneous invariants of one non-vanishing tensor combined with each other order.

In this paper, we show that these bases are vulnerable to a different degeneracy, which is common in real-world applications, namely spherical functions.

\section{Problem Statement}\label{s:problem}
In this section, we motivate the work in this paper by demonstrating where the most flexible bases proposed thus far fail.

\subsection{Example of the Problem with Flexibility}\label{l:example}

\begin{figure}[ht]
\captionsetup[subfigure]{labelformat=empty}
\centering
\subcaptionbox{View along the x-axis.}{\includegraphics[width=0.32\linewidth]{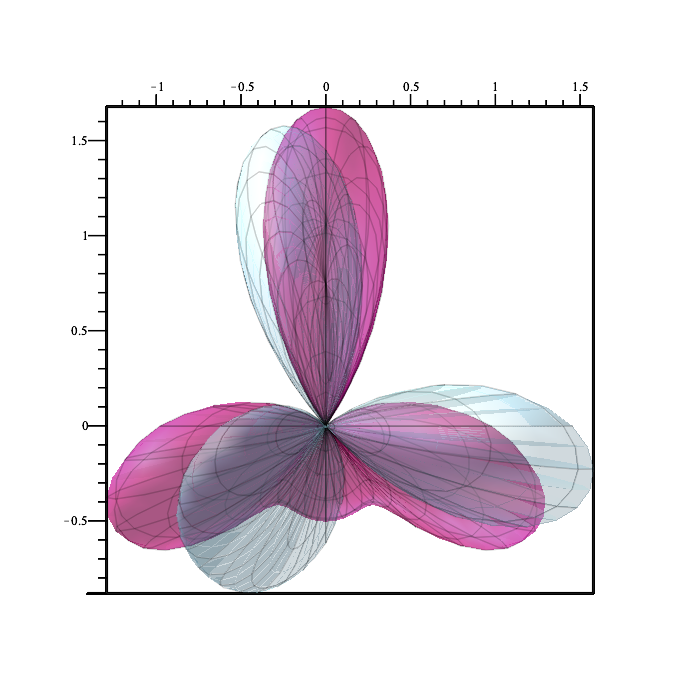}}\hfill
\subcaptionbox{View along the y-axis.}{\includegraphics[width=0.32\linewidth]{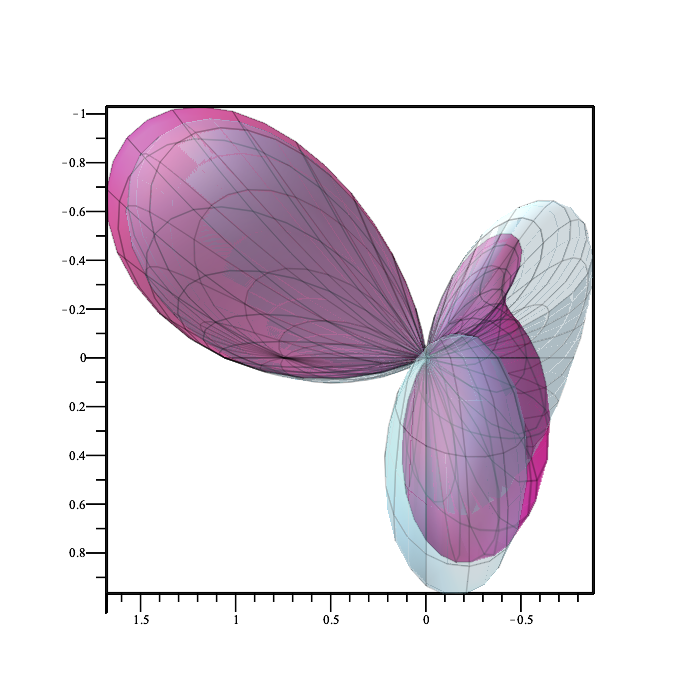}}\hfill
\subcaptionbox{View along the z-axis.}{\includegraphics[width=0.32\linewidth]{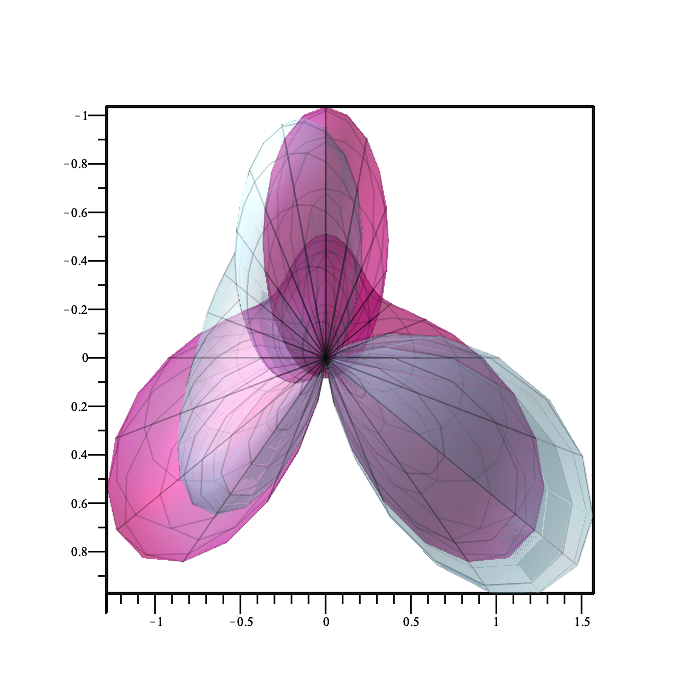}}
\caption{The two functions from~\eqref{3dScalarF} are different but cannot be distinguished by Bujack's basis. $f_1$: pink, $f_2$: blue, with their magnitude depending on the angle visualized by the distance from the origin.\label{f:scalar3d}} 
\end{figure}

The bases by Bujack \etal are not fully flexible~\cite{bujack2022systematic}. They lose their completeness if the input function is spherical. As an example, consider the two cubic functions
\begin{equation} \label{3dScalarF}\begin{aligned} 
f_1(\phi,\theta) &= 3xy^2 - 3xz^2 - 3\sqrt2 y^2z + \sqrt2z^3,\\
f_2(\phi,\theta) &= 3xy^2 - 3xz^2 + y^3 -3y^2z -3yz^2  + z^3.
\end{aligned} \end{equation}
Their moments up to second order are identically zero and so are their corresponding invariants. The third order moments satisfy with $c:=\frac{8\pi}{315}$
\begin{equation}\begin{aligned}
f_1:&\;{}{^3}{M}{}_{1,2,2}=c,\quad {}{^3}{M}{}_{1,3,3}=-c,\quad {}{^3}{M}{}_{2,2,3}=-\sqrt2\, c,\quad {}{^3}{M}{}_{3,3,3}=\sqrt2\, c,\\
f_2:&\;{}{^3}{M}{}_{1,2,2}=c,\quad {}{^3}{M}{}_{1,3,3}=-c,\quad {}{^3}{M}{}_{2,2,2}=c,\quad {}{^3}{M}{}_{2,2,3}={}{^3}{M}{}_{2,3,3}=-c,\\
&\;{}{^3}{M}{}_{3,3,3}=c,
\end{aligned}\end{equation}
and the third order homogeneous invariants for both functions are identical
\begin{equation}\label{bujack}\begin{aligned}
{}{^3}{M}{^2}(1,1,2),(2,3,3)&=0,\\
{}{^3}{M}{^2}(1,2,3),(1,2,3)&=14(c)^2,\\
{}{^3}{M}{^4}(1,1,2),(2,3,4),(3,5,5),(4,6,6)&=0,\\
{}{^3}{M}{^4}(1,1,2),(2,3,4),(3,5,6),(4,5,6)&=0,\\
{}{^3}{M}{^4}(1,2,3),(1,2,4),(3,5,6),(4,5,6)&=92(c)^4,\\
{}{^3}{M}{^4}(1,1,2),(2,3,4),(3,4,5),(5,6,6)&=0,\\
{}{^3}{M}{^6}(1,1,2),(2,3,4),(3,4,5),(5,6,7),(6,7,8),(8,9,9)&=0.
\end{aligned}\end{equation}
Fig.~\ref{f:bujack} shows a graph visualization of Bujack's homogeneous third order invariants~\cite{suk2004graph}. Each node with its inscribed number corresponds to a tensor with its order. An edge with its inscribed number $n$ connecting two nodes corresponds to the contraction of $n$ of their indices. We have omitted plotting self-loops. If the number of incident edges is smaller than the order of the node, all remaining indices are traces, i.e., self contractions.
All simultaneous invariants up to third order are zero, too, because of the vanishing moments of order two and one.
As a result, Bujack's basis up to third order cannot discriminate these two functions even though they are third-order polynomials that differ by other than just a rotation or reflection.

\begin{figure}[ht]
\centering
{\includegraphics[width=0.24\linewidth]{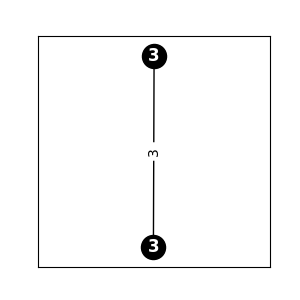}}\hfill
{\includegraphics[width=0.24\linewidth]{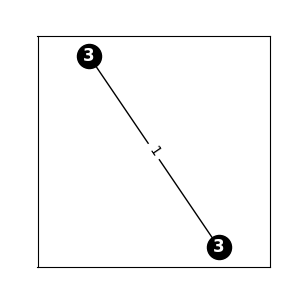}}\hfill
{\includegraphics[width=0.24\linewidth]{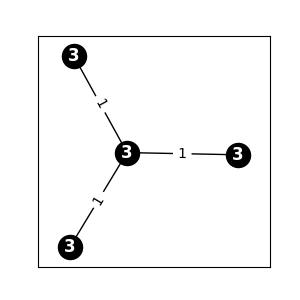}}\hfill
{\includegraphics[width=0.24\linewidth]{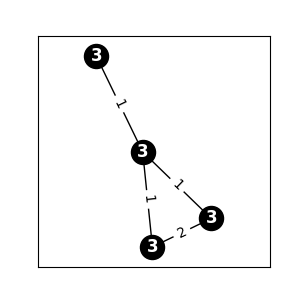}}\\ \hfill
{\includegraphics[width=0.24\linewidth]{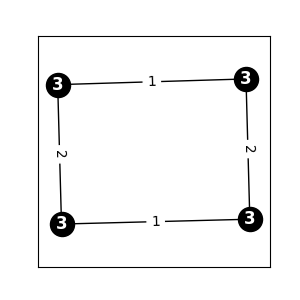}}\hfill
{\includegraphics[width=0.24\linewidth]{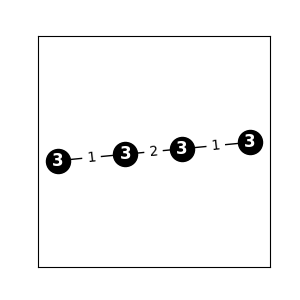}}\hfill
{\includegraphics[width=0.24\linewidth]{pics/bujackNeu0.png}}\hfill \hfill
\caption{The homogeneous third-order invariants in Bujack's basis in the same order as in Eq.~\eqref{bujack}.\label{f:bujack}} 
\end{figure}

\begin{figure}[ht]
\centering
{\includegraphics[width=0.24\linewidth]{pics/langbein0.png}}\hfill
{\includegraphics[width=0.24\linewidth]{pics/langbein1.png}}\hfill
{\includegraphics[width=0.24\linewidth]{pics/langbein5.png}}\\ \hfill
{\includegraphics[width=0.24\linewidth]{pics/langbein4.png}}\hfill
{\includegraphics[width=0.24\linewidth]{pics/langbein2.png}}\hfill
{\includegraphics[width=0.24\linewidth]{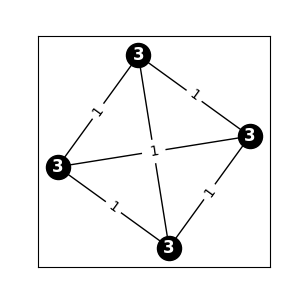}}\hfill\hfill
\caption{The homogeneous third-order invariants in Langbein's basis in the same order as in Eq.~\eqref{langbein}.\label{f:langbein}} 
\end{figure}
Langbein's basis behaves similarly. It contains only six homogeneous third-order invariants, which also coincide for the two functions
\begin{equation}\label{langbein}\begin{aligned}
{}{^3}{M}{^2}(1,1,2),(2,3,3)&=0,\\
{}{^3}{M}{^2}(1,2,3),(1,2,3)&=14(c)^2,\\
{}{^3}{M}{^4}(1,1,2),(2,3,4),(3,5,5),(4,6,6)&=0,\\
{}{^3}{M}{^4}(1,1,2),(2,3,4),(3,5,6),(4,5,6)&=0,\\
{}{^3}{M}{^4}(1,2,3),(1,2,4),(3,5,6),(4,5,6)&=92(c)^4,\\
{}{^3}{M}{^4}(1,2,3),(1,4,5),(2,5,6),(3,4,6)&=6(c)^4.
\end{aligned}\end{equation}
What seem like three independent moments actually also reduces to two because the three non-zero invariants become dependent via the relation 
$({}{^3}{M}{^2})^2 = \allowbreak 2({}{^3}{M}{^4}...(4,5,6)+{}{^3}{M}{^4}...(3,4,6))$
for spherical functions. Fig.~\ref{f:langbein} shows a graph visualization of Langbein's homogeneous third-order invariants.

\subsection{Analysis of the Problem with Flexibility}\label{s:analysis}
If a function can be factored into its spherical and radial components, i.e., if $f(x)=f_r(r)f_s(\phi,\theta)=f_r(r)f_s(\hat r)$, its moment tensor of order $\ell$ satisfies
\begin{equation} \label{radialSphericalFunction}\begin{aligned} 
 {}{^\ell}M &\overset{\eqref{sphericalMoment}}=  \int_0^{1}\int_{S_1(0)} r^{\ell+2}\hat r^{\otimes \ell}f_r(r)f_s(\hat r) \,d r\, d^2\hat r\\
  &\quad = \int_0^1 r^{\ell+2}f_r(r)\, d r \int_{S_1(0)} \hat r^{\otimes \ell}f_s(\hat r)\, d^2\hat r.
\end{aligned} \end{equation}
and its traces become related to the moment tensor of order $\ell-2$ via
\begin{equation} \label{radialSphericalTrace}\begin{aligned} 
{}{^\ell}M^{(1,2)}_{i_3...i_\ell} &\overset{\eqref{contraction}}= \sum_{i=1}^3{}{^\ell}M_{i,i,i_3...i_\ell}\\
 &\overset{\eqref{momentTensor}}= \sum_{i=1}^3\int_{B_1(0)} x_i x_i x_{i_3}...x_{i_\ell} f_r(r)f_s(\hat r)  \,d^3 x\\
 &\quad = \int_{B_1(0)} \sum_{i=1}^3 x_i^2 x_{i_3}...x_{i_\ell} f_r(r)f_s(\hat r)  \,d^3 x\\
 &\overset{\eqref{spherical}}= \int_{B_1(0)} r^2 x_{i_3}...x_{i_\ell} f_r(r)f_s(\hat r)  \,d^3 x\\
 &\overset{\eqref{sphericalMoment}}= \int_0^{1}\int_{S_1(0)} r^4 x_{i_3}...x_{i_\ell} f_r(r)f_s(\hat r)  \,d r\, d^2\hat r\\
 &\overset{\eqref{spherical}}= \int_0^{1}\int_{S_1(0)} r^4 r^{\ell-2}\hat r_{i_3}...\hat r_{i_\ell} f_r(r)f_s(\hat r)  \,d r\, d^2\hat r\\
 &\quad = \int_0^{1}\int_{S_1(0)} r^{\ell+2} \hat r_{i_3}...\hat r_{i_\ell} f_r(r)f_s(\hat r)\, d r\, d^2\hat r\\
 &\quad = \int_0^{1} r^{\ell+2}f_r(r)\, d r \int_{S_1(0)} \hat r_{i_3}...\hat r_{i_\ell} f_s(\hat r)  \,d^2\hat r\\
 &\overset{\eqref{radialSphericalFunction}}=\frac{\int_0^{1} r^{\ell+2}f_r(r)\, d r}{\int_0^{1} r^{\ell}f_r(r)\, d r} {}{^{\ell-2}}M_{i_3...i_\ell}.
\end{aligned} \end{equation}
Because of the symmetry of the moment tensors, this result for the contraction of the first two indices extends to the contraction between any pair of indices.
This means that the homogeneous moment invariants that contain a trace, by which we refer to a contraction that involves two indices in the same tensor, become dependent on lower-order invariants. In our example from the previous subsection, all third-order invariants that start with a contraction, such as $(1,1,2)$, contain a trace. They become dependent on the first-order tensor and thereby on its invariants. Since the first-order tensor is zero, they all become zero, too. The first occurrence of this structure still contributes the information that we have a radial-spherical function, but none of the others contain any additional information. It, together with the two non-vanishing moments, provide three degrees of information, which is not enough to reconstruct the seven degrees of freedom (which result from the 10 degrees of freedom of a symmetric third rank tensor, less the three degrees of freedom of the rotation) that are needed to discriminate any two third-order functions.

\section{Irreducible Tensor Moment Bases}\label{s:theory}
It is possible to overcome the problem described in Sec.~\ref{s:problem} by making use of the irreducibility of the spherical harmonics as the bases for moments~\cite{LD89}, but that comes with some inconveniences. First, in standard applications, functions are usually given in Cartesian coordinates and must be converted using special functions. Second, the spherical-harmonics bases do not form Cartesian moment tensors, and their products and contractions are computed with Clebsh–Gordon coefficients. While all of this is well-defined and well-established, it is inconvenient to compute and implement, and unintuitive~\cite{coope1965irreducible}. Third, using spherical tensors does not automatically solve the problem of the redundancies in the set of possible contractions requires rewriting Langbein's algorithm to the spherical basis.%
We will instead make use of the advantages of the spherical harmonics, without leaving the Cartesian basis, using the irreducible tensor decomposition.

\subsection{The Irreducible Tensor Decomposition} 
There is a well-known isomorphism between totally symmetric tensors and homogeneous polynomials, and between their subspaces of totally symmetric, traceless tensors and spherical harmonics~\cite{backus1970geometrical}. It allows us to identify the irreducible decomposition of polynomials into unique spherical harmonics with the irreducible decomposition of tensors into unique totally symmetric, traceless tensors. Totally symmetric, traceless tensors are also called irreducible tensors, irreducible Cartesian tensors~\cite{coope1965irreducible}, harmonic tensors~\cite{backus1970geometrical}, or deviators~\cite{hergl2020introduction}.

We will use the arguably most common term \textbf{irreducible tensor} and the symbol $H$ to refer to a totally symmetric, traceless tensor, i.e., a tensor where all its contractions are zero.

Every totally symmetric tensor can be decomposed into a unique series of irreducible tensors~\cite{backus1970geometrical}
\begin{equation} \label{decomposition}\begin{aligned} 
{}{^\ell}{M}_{i_1 ... i_\ell}
=
\sum_{k=0}^{\lfloor \ell/2 \rfloor}
{}{^\ell_{\ell-2k}}{H}_{(i_1 ... i_{\ell-2k}} 
\underbrace{
   {\delta_{i_{\ell-2k+1}i_{\ell-2k+2}}
   \dots
   \delta_{i_{\ell-1}i_{\ell})}}
}_{k \text{ pairs of $\delta$}},
\end{aligned} \end{equation}
where we use parentheses $(i_1 ... i_\ell)$ to indicate symmetrization
\begin{equation} \begin{aligned} 
T_{(i_1 ... i_\ell)} 
:= \frac1{\ell!}
\sum_{\pi \in S_\ell} 
T_{\pi(i_1)\dots\pi(i_\ell)},
\end{aligned} \end{equation}
where the sum is taken over all permutations $\pi$ in the symmetric group $S_\ell$. This ensures that $T_{(i_1 ... i_\ell)}$ is fully symmetric in its $\ell$ indices.

The individual irreducible tensors are determined by removing their traces top down. 
If $\ell$ is even, we have $\frac{\ell}{2}$ components, and if $\ell$ is odd, we have $\frac{\ell-1}{2}$.

\begin{ex}\label{ex:irreducible}
Assume that we have moment tensors up to a maximal order $\ell_m=3$, then we have $6=1+1+2+2$ irreducible tensors total. For orders 0 and 1, the decompositions are trivial because the tensors are themselves traceless
\begin{equation} \begin{aligned} 
{}{^0}M&={}{^0_0}H\\
{}{^1}M_i&={}{^1_1}H_i.
\end{aligned} \end{equation}
For order 2, we get
\begin{equation} \begin{aligned} 
{}{^2}M_{ij}&={}{^2_2}H_{ij}+{}{_{0}^{2}}H\delta_{ij}
\end{aligned} \end{equation}
with a rank 2 and a rank 0 part
\begin{equation} \begin{aligned} 
{}{^2_2}H_{ij} &={}{^2}M_{ij} - \frac13{}{^2}M_{\iota\iota}\delta_{ij}\\
{}{_{0}^{2}}H &=\frac13{}{^2}M_{\iota\iota}.
\end{aligned} \end{equation}
The factor $\frac13$ in front of the symmetrically subtracted trace ensures that the trace of the 3D tensor ${}{^2_2}H$ is zero because 
\begin{equation} \begin{aligned} 
\delta_{ij}{}{^2_2}H_{ij} &=\delta_{ij}{}{^2}M_{ij} - \frac13{}{^2}M_{\iota\iota}\delta_{ij}\delta_{ij}
\\&={}{^2}M_{\iota\iota} - \frac13{}{^2}M_{\iota\iota}\sum_{i=1}^3\delta_{ii}
\\&={}{^2}M_{\iota\iota} - \frac13{}{^2}M_{\iota\iota}\cdot 3
\\&=0.
\end{aligned} \end{equation}
For order 3, we get
\begin{equation}\label{decomp3} \begin{aligned} 
{}{^3}M_{ijk}&={}{^3_3}H_{ijk}+
{}{^3_{1}}{H}_{(i }
   \delta_{jk)}
={}{^3_3}H_{ijk}+{}{_{1}^{3}}H_{i}\delta_{jk}+{}{_{1}^{3}}H_{j}\delta_{ik}+{}{_{1}^{3}}H_{k}\delta_{ij}.
\end{aligned} \end{equation}
with rank 3 and a rank 1 parts
\begin{equation} \begin{aligned} 
{}{^3_3}H_{ijk}&={}{^3}M_{ijk} - \frac15({}{^3}M_{i\iota\iota}\delta_{jk}+{}{^3}M_{\iota j\iota}\delta_{ik}+{}{^3}M_{\iota\iota k}\delta_{ij}),\\
{}{_{1}^{3}}H_{i} &= \frac15{}{^3}M_{i\iota\iota}.
\end{aligned} \end{equation}
Again, the factor $\frac15$ in front of the symmetrically subtracted trace ensures that the trace of the 3D tensor ${}{^3_3}H$ is zero because
\begin{equation} \begin{aligned} 
\delta_{ij}{}{^3_3}H_{ijk}&=\delta_{ij}{}{^3}M_{ijk} - \frac15({}{^3}M_{i\iota\iota}\delta_{ij}\delta_{jk}+{}{^3}M_{\iota j\iota}\delta_{ij}\delta_{ik}+{}{^3}M_{\iota\iota k}\delta_{ij}\delta_{ij})
\\&= {}{^3}M_{\iota\iota k} - \frac15({}{^3}M_{k\iota\iota}+{}{^3}M_{\iota k\iota}+{}{^3}M_{\iota\iota k}\cdot 3)
 \\&=0.
\end{aligned} \end{equation}
Note that we used the total symmetry. 
In total, two of the irreducible tensors are of zeroth rank, two are of first rank, one is of second rank, and one is of third rank.
\end{ex}

Bujack \etal had pointed out that simultaneous invariants constitute a vulnerability of a basis to degenerate functions~\cite{bujack2022systematic}, but now we see that using the moment tensors as wholes is similar to using simultaneous invariants if we view a moment tensor as a combination of its irreducible tensors. We will now show how to overcome this vulnerability by following their general idea, but instead of applying it to the tensors themselves, we apply it to the irreducible tensors of the moment tensors. In other words, we will compute a moment basis from the elements of the irreducible tensor decomposition of the Cartesian moment tensors.
They construct bases that are flexible with respect to functions with vanishing moment tensors by using the homogeneous ones together with three simultaneous invariants of one selected non-vanishing tensor in combination with each other tensor. We will show that the same approach extends to constructing a basis from the irreducible tensors of all moment tensors up to a given order.

\subsection{Invariants of the Irreducible Tensors of a Single Moment Tensor}
We will use the terms \textbf{homogeneous} and \textbf{simultaneous} as before to refer to invariants that contain only one moment tensor or multiple moment tensors, respectively, and introduce the terms \textbf{pure} and \textbf{mixed} to refer to invariants that contain only one irreducible tensor or multiple irreducible tensors, respectively. 

\begin{table}[ht]
 \begin{small}
\begin{tabular}{|p{1.0cm}|p{1.3cm}|p{1.3cm}|p{1.3cm}|p{1.4cm}|p{1.4cm}|p{1.3cm}|}\hline
order & moments of an irr.\ tensor & invariants of an irr.\ tensor &  mom.\ of a mom.\ tensor & inv.\ of a moment tensor & pure inv.\  of a mom.\ tensor & mixed inv.\ needed \\\hline
$\ell$  & $2\ell+1$ & $2\ell -2$  & $(\ell+1)$ $(\ell+2)/2$    &$ (\ell+1)(\ell+2)/2-3$    & $(\ell^2+2-\ell\mod 2) /2$ & $(3\ell-6+\ell\mod 2)/2$  
\\\hline
{0} & 1    & \cellcolor{black!20}1   & 1   & \cellcolor{black!20}1 & 1   & \cellcolor{black!20}0 \\
{1} & 3    & \cellcolor{black!20}1   & 3   & \cellcolor{black!20}1 & 1   & \cellcolor{black!20}0 \\
{2} & 5    & 2   & 6   & 3 & 3   & 0 \\
{3} & 7    & 4   & 10  & 7 & 5   & 2 \\
{4} & 9    & 6   & 15  & 12    & 9   & 3 \\
{5} & 11   & 8   & 21  & 18    & 13  & 5 \\
{6} & 13   & 10  & 28  & 25    & 19  & 6 \\
\hline
 Ref.  & Lem.~\ref{l:dof}  & Lem.~\ref{l:pure}   & Lem.~\ref{l:dof2} &  Lem.~\ref{l:dof2}$-3$  & Theorem~\ref{t:pure2}   & Lem.~\ref{l:mixed}    
\\\hline
\end{tabular}
\end{small}
\caption{Summary of the degrees of freedom and numbers of invariants for irreducible tensors, moment tensors, and their irreducible tensor decompositions, as a function of the order. The gray cells indicate the exceptions to the general rule in the first row. The last row contains the references to the respective lemmata and theorems.\label{t:l}}
\end{table}

In these terms, we demonstrate that a basis of homogeneous invariants of a given moment tensor of order $\ell$ can be constructed from the pure invariants of its irreducible tensors, plus three mixed invariants between a selected non-vanishing irreducible tensor, in combination with each of the other irreducible tensors in the irreducible tensor decomposition, Theorem~\ref{t:pure2}.
We show this in the remainder of this subsection by proving that the degrees of freedom add up to the total number of invariants. Table~\ref{t:l} lists the degrees of freedom and references the relevant lemmata and theorems.

\begin{lem}[Independent moments of an irreducible tensor]\label{l:dof}
The number of independent moments of an irreducible tensor $H$ of rank $\ell$ is $2\ell+1$.
\end{lem}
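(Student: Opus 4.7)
The plan is to derive the count $2\ell+1$ by a direct dimension count, combining two elementary facts: the size of the space of totally symmetric rank-$\ell$ tensors on $\R^3$, and the codimension cut out by the traceless constraint. Concretely, I want to apply the rank--nullity theorem to the single-trace map
\[
\mathrm{tr}:\mathrm{Sym}^\ell(\R^3)\to\mathrm{Sym}^{\ell-2}(\R^3),
\]
whose kernel is by definition the space of rank-$\ell$ irreducible tensors.

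First, a totally symmetric rank-$\ell$ tensor on $\R^3$ has its components indexed by multisets of size $\ell$ drawn from $\{1,2,3\}$, so $\mathrm{Sym}^\ell(\R^3)$ has dimension $\binom{\ell+2}{2}=(\ell+1)(\ell+2)/2$. Because $H$ is totally symmetric, a contraction over \emph{any} two of its indices yields the same rank-$(\ell-2)$ symmetric tensor (up to a relabeling of the surviving indices), so tracelessness in the sense used in the paper is equivalent to the single condition $\mathrm{tr}(H)=0$. Since the target $\mathrm{Sym}^{\ell-2}(\R^3)$ has dimension $(\ell-1)\ell/2$, this is at most $(\ell-1)\ell/2$ linear constraints on $H$.

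The key step is to verify that $\mathrm{tr}$ is \emph{surjective}, so that those constraints are actually independent. I would exhibit an explicit right inverse: given $S\in\mathrm{Sym}^{\ell-2}(\R^3)$, set $T_{i_1\dots i_\ell}:=S_{(i_1\dots i_{\ell-2}}\delta_{i_{\ell-1}i_\ell)}$ and check by a direct index calculation that $\mathrm{tr}(T)$ is a nonzero scalar multiple of $S$. This is the same symmetrization-bookkeeping argument that fixes the prefactors $\tfrac{1}{3}$ and $\tfrac{1}{5}$ in Example~\ref{ex:irreducible}, and it is precisely what makes the decomposition~\eqref{decomposition} unique. With surjectivity in hand, rank--nullity gives
\[
\dim H=\frac{(\ell+1)(\ell+2)}{2}-\frac{(\ell-1)\ell}{2}=\frac{4\ell+2}{2}=2\ell+1,
\]
and the edge cases $\ell\in\{0,1\}$ are automatic since tracelessness is vacuous there while $(\ell+1)(\ell+2)/2$ already equals $2\ell+1$. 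The only real obstacle is the combinatorial bookkeeping for the surjectivity coefficient in arbitrary rank; a cleaner alternative is to invoke the isomorphism with degree-$\ell$ spherical harmonics on $S^2$ already cited in the paper, for which $\dim=2\ell+1$ is classical.
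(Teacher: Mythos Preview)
Your approach is essentially the same as the paper's: both count $\dim\mathrm{Sym}^\ell(\R^3)=(\ell+1)(\ell+2)/2$, subtract the $(\ell-1)\ell/2$ trace constraints, and arrive at $2\ell+1$. You are actually more careful than the paper, which silently assumes the trace constraints are independent, whereas you explicitly justify this via surjectivity of $\mathrm{tr}$ (or the spherical-harmonics isomorphism).
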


\begin{proof}
It follows immediately from Def.~\ref{d:mom_tensor} and the commutativity of multiplication that moment tensors of scalar functions are totally symmetric. Therefore, a moment tensor of order $\ell$ has $3^\ell$ entries but only $\frac{(\ell+1)(\ell+2)}2$ degrees of freedom. 
Because of the symmetry, it has exactly one trace, which is a symmetric tensor of order $\ell-2$, i.e., it has $\frac{(\ell-1)\ell}{2}$ degrees of freedom. As a result, an irreducible tensor, i.e., a traceless symmetric tensor, has  $\frac{(\ell+1)(\ell+2)-(\ell-1)\ell}{2}=2\ell+1$ independent moments.
\end{proof}

\begin{lem}[Independent moments of the decomposition]\label{l:dof2}
The number of independent moments of a moment tensor equals the number of independent moments of its irreducible tensor decomposition, namely $\frac{(\ell+1)(\ell+2)}2$.
\end{lem}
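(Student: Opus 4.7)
The plan is to verify that the irreducible decomposition~\eqref{decomposition} is a dimension-preserving bijection, and then reduce the claim to a summation identity. The first equality --- that a totally symmetric tensor of rank $\ell$ has $\frac{(\ell+1)(\ell+2)}{2}$ independent entries --- is already recorded in the proof of Lemma~\ref{l:dof} and follows from counting multisets of size $\ell$ drawn from three indices.

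For the second equality, I would invoke the uniqueness of the decomposition in~\eqref{decomposition} (cited from Backus~\cite{backus1970geometrical}): every totally symmetric rank-$\ell$ tensor corresponds to exactly one tuple $({}{^\ell_{\ell}}H,\,{}{^\ell_{\ell-2}}H,\dots)$ of irreducible tensors ${}{^\ell_{\ell-2k}}H$ for $k=0,1,\dots,\lfloor \ell/2 \rfloor$, and conversely any such tuple yields a distinct symmetric tensor. Hence the space of symmetric tensors splits as a direct sum of the spaces of the irreducible components, and the total number of independent moments is additive. By Lemma~\ref{l:dof}, the $k$-th component ${}{^\ell_{\ell-2k}}H$ contributes $2(\ell-2k)+1$ degrees of freedom.

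It then remains to verify the identity
\begin{equation*}
\sum_{k=0}^{\lfloor \ell/2 \rfloor} \bigl(2(\ell-2k)+1\bigr) \;=\; \frac{(\ell+1)(\ell+2)}{2},
\end{equation*}
which I would handle by splitting into the even and odd cases. For $\ell=2m$ the left side runs over the odd integers $1,5,9,\dots,4m+1$, summing to $(m+1)(2m+1)$, which matches $\frac{(2m+1)(2m+2)}{2}$. For $\ell=2m+1$ it runs over $3,7,\dots,4m+3$, summing to $(m+1)(2m+3)=\frac{(2m+2)(2m+3)}{2}$. Both cases coincide with $\frac{(\ell+1)(\ell+2)}{2}$, closing the argument.

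The only nontrivial ingredient is the uniqueness and exhaustiveness of~\eqref{decomposition}; once that is accepted (as it is in the cited literature), the lemma reduces to an elementary arithmetic sum. I would take care with the edge cases $\ell=0$ and $\ell=1$, where the decomposition collapses to a single irreducible tensor and Lemma~\ref{l:dof}'s formula $2\ell+1$ still returns the correct values $1$ and $3$, so no separate treatment is required.
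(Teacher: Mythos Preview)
Your proposal is correct and follows essentially the same approach as the paper: both invoke Lemma~\ref{l:dof} for the degrees of freedom of each irreducible component, split into even and odd $\ell$, and verify the arithmetic sum equals $\frac{(\ell+1)(\ell+2)}{2}$. Your version is slightly more explicit in articulating why the counts are additive (via the bijectivity of~\eqref{decomposition}) and in handling the edge cases $\ell=0,1$, but the substance is identical.
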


\begin{proof}
We have seen in the proof of Lem.~\ref{l:dof} that a moment tensor of order $\ell$ has $(\ell+1)(\ell+2)/2$ degrees of freedom. Now we consider the irreducible tensors in its decomposition~\eqref{decomposition}. 
If $\ell$ is even, we get from the decomposition~\eqref{decomposition}, Lem.~\ref{l:dof}, and straightforward calculation, the number of degrees of freedom
\begin{equation}
\sum_{p=0}^{\ell/2}\Bigl(2(2p)+1\Bigr)=\frac12(\ell+1)(\ell+2).
\end{equation}
If $\ell$ is odd, we get from the decomposition~\eqref{decomposition}, Lem.~\ref{l:dof}, and straightforward calculation, the number of degrees of freedom: 
\begin{equation}
\sum_{p=0}^{(\ell-1)/2}\Bigl(2(2p+1)+1\Bigr)=\frac12(\ell+1)(\ell+2).
\end{equation}
\end{proof}

\begin{lem}[Invariants of an irreducible tensor]\label{l:pure}
The number of independent invariants of an irreducible tensor $H$ of rank $\ell > 1$ is $2\ell-2$.

For $\ell=0$ and $\ell=1$, the number of independent pure invariants of an irreducible tensor is one.
\end{lem}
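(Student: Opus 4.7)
My plan is to count dimensions. By Lemma~\ref{l:dof}, an irreducible tensor $H$ of rank $\ell$ carries $2\ell+1$ functionally independent moments, and the number of independent rotation invariants on this moment space equals the codimension of a generic $SO(3)$-orbit, namely $(2\ell+1) - (3-s)$, where $s$ is the dimension of the stabilizer in $SO(3)$ of a generic tensor. So the task reduces to determining $s$ in each case.

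I would first dispatch the degenerate cases. For $\ell = 0$ the tensor is a scalar, fixed by all of $SO(3)$, so $s=3$ and there is a single invariant (the scalar itself). For $\ell = 1$ the tensor is a vector, and a generic nonzero vector is fixed precisely by the one-parameter subgroup of rotations about its own axis, so $s=1$ and the count gives $3 - 2 = 1$ invariant, realized by the Euclidean norm. Both match the claim.

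For $\ell \geq 2$, the target is $s=0$, which would give $(2\ell+1) - 3 = 2\ell - 2$. The hard part is establishing this generic triviality of the stabilizer, and I would approach it via the isomorphism between rank-$\ell$ irreducible tensors and degree-$\ell$ spherical harmonics recalled at the start of Sec.~\ref{s:theory}. Any positive-dimensional closed subgroup of $SO(3)$ contains a full one-parameter rotation group about some axis, so a tensor with $s > 0$ corresponds to an axially symmetric harmonic. For a fixed axis, the axially symmetric harmonics of degree $\ell$ form the one-dimensional line spanned by the zonal harmonic with respect to that axis; letting the axis range over $S^2$ traces out a subset of the $(2\ell+1)$-dimensional harmonic space of dimension at most $1 + 2 = 3$. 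Since $2\ell+1 \geq 5$ for $\ell \geq 2$, this axially symmetric locus is a proper subvariety, generic tensors avoid it, and hence have finite stabilizer $s=0$. Plugging back into the dimension count yields $2\ell - 2$ invariants; the identification of this generic codimension with the count of functionally independent invariants is the same fact that underlies the Jacobi/Langbein criterion used in Sec.~\ref{s:independence}, so no additional machinery is required to close the argument.
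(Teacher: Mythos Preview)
Your argument is correct and follows the same dimension-counting strategy as the paper: both compute the invariant count as $(2\ell+1)$ minus the effective degrees of freedom of the rotation action, and both handle $\ell=0,1$ by noting the enlarged stabilizer (full $SO(3)$ for a scalar, a circle for a vector). The one substantive difference is that the paper simply asserts that ``in general the degrees of freedom of a 3D rotation is 3'' for $\ell>1$, whereas you actually justify this by showing that the locus of tensors with positive-dimensional stabilizer---the axially symmetric ones---has dimension at most $3<2\ell+1$ via the zonal-harmonic description. So your version is a strict refinement of the paper's proof rather than a different route: same idea, but you close a gap the paper leaves implicit.
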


\begin{proof}
The number of possible independent invariants is the number of independent moments minus the three degrees of freedom of a rotation. In general the degrees of freedom of a 3D rotation is 3, which concludes the proof for Lemma~\ref{l:dof}
\begin{equation}
2\ell+1-3=2\ell-2.
\end{equation}
The exceptions $\ell=0$ and $\ell=1$ follow because a zeroth-rank irreducible tensor is a scalar and therefore rotationally invariant. So, for it, the rotation has no degree of freedom. Further, a first-order irreducible tensor is a vector, for which a 3D rotation has only two degrees of freedom.
\end{proof}

\begin{thm}[Pure invariants of a decomposition]\label{t:pure2}
The number of independent pure invariants of an irreducible tensor decomposition of a moment tensor of rank $\ell > 1$ is $\frac{\ell^2}2+1$ if $\ell$ is even and $\frac{\ell^2+1}2$ if $\ell$ is odd.
For $\ell=0$ and $\ell=1$, we have one pure invariant each.
\end{thm}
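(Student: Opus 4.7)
The plan is to reduce the count to a rank-by-rank sum over the irreducible components of the decomposition~\eqref{decomposition} and invoke Lemma~\ref{l:pure} on each one. The strategy has two parts: first, a block-independence argument establishing that pure invariants contributed by distinct irreducible pieces cannot be functionally dependent on each other; second, a routine arithmetic evaluation of the resulting rank sum, split by the parity of $\ell$.

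For the independence step, I would use the fact that the decomposition~\eqref{decomposition} is an invertible linear change of coordinates on the symmetric part of ${}{^\ell}M$. Lemma~\ref{l:dof2} already showed that the total number of degrees of freedom is preserved, and concretely the irreducible pieces ${}{^\ell_{\ell-2k}}H$ for $k=0,\dots,\lfloor \ell/2\rfloor$ have disjoint entry sets. A pure invariant of ${}{^\ell_r}H$ depends only on the $2r+1$ entries of that one piece, so when Jacobians are expressed in irreducible-tensor coordinates, pure invariants of different components have their nonzero columns in disjoint index blocks. Stacking maximal independent sets from each component therefore yields a matrix whose row rank is the sum of the per-component ranks, which by the Jacobi criterion recalled in Sec.~\ref{s:independence} is exactly what it means for the combined family to be functionally independent.

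For the arithmetic step, I would substitute the per-component counts from Lemma~\ref{l:pure} into this sum. Each irreducible tensor of rank $r>1$ contributes $2r-2$ independent pure invariants, while the unique rank-$0$ (for even $\ell$) or rank-$1$ (for odd $\ell$) piece contributes a single one. For even $\ell$ the ranks appearing in~\eqref{decomposition} are $0,2,\dots,\ell$, giving $1+\sum_{k=1}^{\ell/2}(4k-2)=\ell^2/2+1$; for odd $\ell\ge 3$ the ranks are $1,3,\dots,\ell$, giving $1+\sum_{k=1}^{(\ell-1)/2}4k=(\ell^2+1)/2$. The edge cases $\ell=0$ and $\ell=1$ are immediate from Lemma~\ref{l:pure} because the decomposition is trivial and $H=M$.

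The main obstacle is the block-independence claim. It is not a priori obvious that no algebraic relation in the moments forces pure invariants of different irreducible pieces to become dependent once we evaluate on an actual decomposition. The resolution is that~\eqref{decomposition} is a bijective linear reparameterization rather than a quotient, so there is no such restriction: the disjoint blocks of irreducible entries are independent coordinates, and column-disjoint Jacobian blocks are automatically row-independent. Once this is in hand, the theorem reduces to the two short arithmetic sums above.
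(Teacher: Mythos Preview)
Your proposal is correct and takes essentially the same approach as the paper: both reduce the count to a sum over the irreducible pieces using Lemma~\ref{l:pure} and then evaluate the same arithmetic series split by parity. The only difference is that you spell out a Jacobian block-independence argument for why pure invariants from distinct irreducible components cannot be functionally dependent, whereas the paper treats this as implicit and goes straight to the summation.
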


\begin{proof}
We know from Lem.~\ref{l:pure} that an irreducible tensor of rank $p>1$ has $2p-2$ pure invariants, and for $p=0$ or $p=1$, one pure invariant each.
If $\ell$ is even, the decomposition~\eqref{decomposition} has one irreducible tensor of rank $p=0$ and the others are $p>1$, which with $p=2i$ leads to the number of degrees of freedom
\begin{equation}
1+\sum_{i=1}^{\ell/2}\Bigl[2(2i)-2\Bigr] = \frac{\ell^2}2+1.
\end{equation}
If $\ell$ is odd, the decomposition~\eqref{decomposition} has one irreducible tensor of rank $p=1$ and the others are $p>1$, which with $p=2i+1$ leads to the number of degrees of freedom
\begin{equation}
1+\sum_{i=1}^{(\ell-1)/2}\Bigl[2(2i+1)-2\Bigr] =\frac{\ell^2+1}2.
\end{equation}

For $\ell=0$ and $\ell=1$, the moment tensors are irreducible tensors themselves and their norms are automatically their homogeneous, as well as their pure, invariants.
\end{proof}

\begin{lem}[Mixed homogeneous invariants of a decomposition]\label{l:mixed}
If we use all pure invariants of the irreducible tensors in the decomposition~\eqref{decomposition} of a moment tensor of order $\ell>1$, we are $\frac{3\ell-6}2$ degrees of freedom short for even $\ell$, and $\frac{3\ell-5}2$ for odd $\ell$, to get to the total number of independent homogeneous invariants.
We can produce these missing degrees of freedom by choosing one rank $p_0>1$ such that ${}{^\ell_{p_0}}H\neq 0$ and adding three mixed invariants of it paired with any other irreducible tensor of rank $p>1$ if $\ell$ is even, or two mixed invariants of it paired with ${}{^\ell_{1}}H$ if $\ell$ is odd.
For $\ell=0$ and $\ell=1$, we need no mixed invariants.
\end{lem}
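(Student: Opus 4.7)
The plan is to split the lemma into two claims: a counting claim (the gap between pure invariants and the total) and an existence claim (that the gap can be filled by the prescribed mixed invariants). The counting claim is arithmetic; the existence claim requires a dimension-counting argument about orientations of irreducible tensors.

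First I would compute the shortfall directly. From Lem.~\ref{l:dof2} the number of independent moments of ${}{^\ell}M$ is $(\ell+1)(\ell+2)/2$, and since $\ell>1$ the full rotation group $SO(3)$ acts with three degrees of freedom, so the total number of independent homogeneous invariants equals $(\ell+1)(\ell+2)/2-3$. Subtracting the counts of pure invariants from Theorem~\ref{t:pure2} gives
\begin{equation}
\frac{(\ell+1)(\ell+2)}{2}-3-\Bigl(\tfrac{\ell^2}{2}+1\Bigr)=\tfrac{3\ell-6}{2}\quad\text{(even $\ell$),}
\end{equation}
and similarly $(3\ell-5)/2$ for odd $\ell$. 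That settles the first half of the lemma.

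For the second half I would argue as follows. By hypothesis some ${}{^\ell_{p_0}}H\neq 0$ with $p_0>1$, so its stabilizer in $SO(3)$ is finite and it effectively carries the full three rotational degrees of freedom. The pure invariants of each remaining irreducible component ${}{^\ell_{p}}H$ already capture its $2p-2$ shape parameters (Lem.~\ref{l:pure}), leaving only the parameters that describe its orientation relative to ${}{^\ell_{p_0}}H$: three for $p>1$ (since the action of $SO(3)$ on a generic irreducible tensor of rank $\geq 2$ is free), and two for $p=1$ (the rotation about the vector's own axis being trivial). Hence pairing ${}{^\ell_{p_0}}H$ with each other component contributes exactly the required 3 or 2 extra degrees of freedom. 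For even $\ell$ the decomposition~\eqref{decomposition} consists of one rank-0 part (needing no mixed invariants, as a scalar has no orientation) and $\ell/2$ parts of rank $\geq 2$; removing the chosen $p_0$ leaves $\ell/2-1$ partners, contributing $3(\ell/2-1)=(3\ell-6)/2$ mixed invariants, matching the shortfall. For odd $\ell$ there is one rank-1 part and $(\ell-1)/2$ parts of rank $\geq 3$; removing $p_0$ leaves $(\ell-3)/2$ rank-$>1$ partners and the rank-1 partner, giving $3(\ell-3)/2+2=(3\ell-5)/2$, again matching. The cases $\ell=0,1$ need no mixed invariants because the moment tensor is itself irreducible and its single pure invariant (the norm) already exhausts the count in Table~\ref{t:l}.

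The main obstacle is the existence step: one must guarantee that among the many mixed contractions of ${}{^\ell_{p_0}}H$ with ${}{^\ell_{p}}H$ there actually exist 3 (respectively 2) that are functionally independent and independent of the pure invariants. The counting above shows the deficit is exactly $3$ or $2$, which is consistent, but does not by itself produce the invariants. I would justify this by appealing to Conjecture~\ref{c:complete} applied to the combined tensor list, so that the complete set of contractions of products separates orbits, together with the observation that once the pure invariants are fixed the remaining Jacobian rows corresponding to mixed contractions must attain the full relative-orientation rank at a generic point where ${}{^\ell_{p_0}}H\neq 0$. This is the same mechanism Bujack \etal used for whole moment tensors, lifted here to the irreducible components, and it is the only step that goes beyond bookkeeping.
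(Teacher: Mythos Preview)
Your proposal is correct and follows essentially the same route as the paper: both compute the shortfall by subtracting Theorem~\ref{t:pure2} from $(\ell+1)(\ell+2)/2-3$, and both verify that the prescribed pairings supply exactly $(3\ell-6)/2$ resp.\ $(3\ell-5)/2$ mixed invariants by counting the irreducible components of rank $>1$ and rank $1$ in the decomposition. Your stabilizer discussion and explicit flagging of the existence step go slightly beyond the paper's proof, which is purely arithmetic and leaves the independence of the chosen mixed contractions implicit (to be resolved in practice via Langbein's algorithm under Conjecture~\ref{c:complete}).
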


\begin{proof}
We know that the moment tensor of order $\ell>1$ has $\frac12(\ell+1)  (\ell+2)-3$ independent invariants. Theorem~\ref{t:pure2} shows that this differs from the number of degrees of freedom of the pure invariants by
\begin{equation}
\frac12(\ell+1)  (\ell+2)-3 - \left(\frac{\ell^2}2+1\right) = \frac{3\ell-6}2
\end{equation}
if $\ell$ is even.
If we choose one rank $p_0>1$ and add 3 mixed invariants of it paired with any other irreducible tensor of rank $p>1$, we get to this number exactly because 
\begin{equation}
\sum_{q=2}^{\ell/2}3 = \frac{3\ell-6}2.
\end{equation}
If $\ell$ is odd, Theorem~\ref{t:pure2} shows that this differs from the number of degrees of freedom of the pure invariants by
\begin{equation}
\frac12(\ell+1)  (\ell+2)-3 - \left(\frac{\ell^2+1}2\right) = \frac{3\ell-5}2.
\end{equation}
If we choose one rank $p_0>1$ and add three mixed invariants of it paired with any other irreducible tensor of rank $p>1$ and two mixed invariants of it paired with $p=1$, we get to this number exactly because 
\begin{equation}
2+\sum_{q=2}^{(\ell_m-1)/2}3 = \frac{3\ell-5}2.
\end{equation}

For $\ell=0$ and $\ell=1$, we have one homogeneous and one pure invariant each and therefore do not need additional mixed ones.
\end{proof}

The fact that we need three mixed invariants for $p>1$ and two for $p=1$ is not happenstance. These numbers are identical to the degrees of freedom of rotation in 3D and are necessary to couple the pure invariants of each pair orientationally to each other. We can think of it this way: without the mixed invariants, we can always construct functions that consist of two parts that show up separately in the pure invariants. We can rotate these parts against each other producing arbitrarily many new functions that cannot be distinguished by the pure invariants. The three mixed invariants couple the pure ones such that any interior rotation is detected by them. 


\subsection{Invariants of the Irreducible Tensors of the Moment Tensors}
Next, we will not only consider the irreducible tensor decomposition of a single moment tensor, but all irreducible tensors ${}{^{\ell}_{p}}H$, $p=o,\ell-2,...$, from the decompositions of all moment tensors ${}{^{\ell}}M$ up to a maximum order $\ell\leq \ell_m\in \mathbb N$, simultaneously.

\begin{table}[ht]
 \begin{small}
\begin{tabular}{|p{0.7cm}|p{1.9cm}|p{2.2cm}|p{3cm}|p{2.3cm}|}\hline
max.\ order & moments up to max.\ order & invariants up to max.\ order & pure inv.\ up to max.\ order & mixed invariants needed
\\\hline
$\ell_m$ & $(\ell_{m}+1)(\ell_{m}+2)(\ell_{m}+3)/6$ & $(\ell_{m}+1)(\ell_{m}+2)(\ell_{m}+3)/6-3$ & $(2\ell_m^3+3\ell_m^2+10o_m+12-3(\ell_m\mod2))/12$ & $(3\ell_m^2+4\ell_m-12+(\ell_m\mod2))/4$ \\\hline
0     & 1       & \cellcolor{black!20}1  & 1   & \cellcolor{black!20}0    \\
1     & 4       & \cellcolor{black!20}2  & 2   & \cellcolor{black!20}0    \\
2     & 10  & 7     & 5      & 2       \\
3     & 20  & 17    & 10     & 7       \\
4     & 35  & 32    & 19     & 13      \\
5     & 56  & 53    & 32     & 21      \\
6     & 84  & 81    & 51     & 30      \\
\hline
Ref.  & $\sum$ Lem.~\ref{l:dof2}    &  $\sum$ Lem.~\ref{l:dof2} \;$-3$     & $\sum$ Theorem~\ref{t:pure2} & Theorem~\ref{t:flexible}      
\\\hline
\end{tabular}
\end{small}
\caption{Summary of the degrees of freedom and numbers of invariants for moment tensors and their irreducible tensor decompositions depending on the maximal order. The gray cells indicate the exceptions from the general rule in the first row. The last row contains the references to the respective lemmata and theorems.\label{t:lm}}
\end{table}

In this subsection, we show that a basis of invariants of moment tensors up to a given maximal order $\ell_m$ can be constructed from the pure invariants of the irreducible tensors of all tensors plus three mixed invariants between a selected non-vanishing irreducible tensor in combination with all other irreducible tensors from all irreducible tensor decompositions, Theorem~\ref{t:flexible}.
Table~\ref{t:lm} lists the degrees of freedom and references the relevant lemmata and theorems.

\begin{thm}[Flexible basis]\label{t:flexible}
For any chosen irreducible tensor ${}{^{\ell_0}_{p_0}}H$ of rank $p_0>1$ and order $1<\ell_0\leq \ell_m$, we can find a basis of invariants using all pure invariants, three mixed invariants for each combination of ${}{^{\ell_0}_{p_0}}H$ with all other irreducible tensors ${}{^{\ell}_{p}}H$ with $p>1$, plus two mixed invariants for each combination with $p=1$.

For $\ell_m=1$, the pure invariants are a basis.
\end{thm}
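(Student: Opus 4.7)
The plan is to mirror Lemma~\ref{l:mixed} at the level of the full joint moment system, matching degrees of freedom between the target number of invariants and the union of pure and mixed invariants the theorem prescribes. First I dispose of the base case $\ell_m=1$: by Example~\ref{ex:irreducible}, ${}{^0}M$ and ${}{^1}M$ are already irreducible, and their two pure invariants (the scalar itself and the norm of the vector) exhaust all rotation-invariant degrees of freedom, as recorded in the gray cells of Table~\ref{t:lm}. No mixed invariants are needed because there is only one non-scalar tensor and therefore no pair to couple.

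For $\ell_m\geq 2$ I would proceed by a direct degree-of-freedom count in three stages. The target count is obtained by summing Lemma~\ref{l:dof2} over $\ell=0,\dots,\ell_m$ and subtracting the three rotational degrees of freedom, yielding $(\ell_m+1)(\ell_m+2)(\ell_m+3)/6-3$; for $\ell_m\geq 2$ at least one tensor of rank $\geq 2$ is present, so the stabilizer of a generic moment configuration in $\mathrm{SO}(3)$ is trivial and the full subtraction is justified. The pure contribution is obtained by summing Theorem~\ref{t:pure2} across orders, split by parity. The mixed contribution is counted by enumerating the irreducible tensors ${}{^{\ell}_{p}}H$ with $\ell\leq \ell_m$ other than the chosen ${}{^{\ell_0}_{p_0}}H$: each partner with $p>1$ contributes three invariants and each partner with $p=1$ contributes two, while scalar partners $p=0$ contribute nothing since they are already accounted for as pure invariants. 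A case split on the parity of $\ell_m$, combined with the formulas of Theorem~\ref{t:pure2}, then confirms that pure plus mixed equals the target, which is exactly what the last column of Table~\ref{t:lm} records.

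The hard part will be lifting this dimensional match to a genuine basis, since counting alone only gives an upper bound on the rank of the invariant set. I would argue it in the spirit of Lemma~\ref{l:mixed}: the pure invariants determine each ${}{^{\ell}_{p}}H$ up to an independent $\mathrm{SO}(3)$ rotation, and the three mixed invariants per $p>1$ partner (two per $p=1$ partner) supply precisely the number of scalar functions needed to pin down the relative orientation of each partner against the reference ${}{^{\ell_0}_{p_0}}H$. Because $p_0>1$, the stabilizer of a generic ${}{^{\ell_0}_{p_0}}H$ in $\mathrm{SO}(3)$ is trivial, so this orientational coupling is nondegenerate. Together with Conjecture~\ref{c:complete}, which supplies completeness of the pool of contraction-based invariants, and with Langbein's algorithm adapted to the irreducible-tensor contractions to certify numerical independence, the dimensional match upgrades to an actual basis of rotation and reflection invariants on the full joint system.
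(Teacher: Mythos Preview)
Your proposal is correct and follows essentially the same route as the paper: a parity-split degree-of-freedom count that matches the total number of independent invariants against the sum of pure invariants plus the prescribed number of mixed pairings. Your additional paragraph on why the dimensional match upgrades to a genuine basis (via trivial stabilizer of ${}{^{\ell_0}_{p_0}}H$, Conjecture~\ref{c:complete}, and Langbein's algorithm) is more explicit than the paper's own proof, which stops at the counting and leaves that justification implicit in the surrounding framework.
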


\begin{proof}
We know from Theorem~\ref{t:pure2} that we have $\frac{\ell^2}2+1$ pure invariants if $\ell$ is even, and $\frac{\ell^2+1}2$ if $\ell$ is odd.
First assume $\ell_m$ is even. Then we have $\frac{\ell_m}2+1$ even and $\frac{\ell_m}2$ odd order tensors. Subtracting the number of their pure invariants from the total number of invariants, we get the number of mixed invariants needed
\begin{equation}\begin{aligned}
\underset{\text{inv.\ of all tensors}}{\underbrace{\sum_{\ell=0}^{\ell_m}\frac{(\ell+1)(\ell+2)}2- 3}}  - 
\underset{\text{pure inv.\ of even tensors}}{\underbrace{\sum_{i=0}^{\ell_m/2} \frac{(2i)^2+2}2}} - 
\underset{\text{pure inv.\ of odd tensors}}{\underbrace{\sum_{i=0}^{\ell_m/2-1} \frac{(2i+1)^2+1}2}}\\  = \frac{3\ell_m^2+4\ell_m-12}{4}.
\end{aligned}\end{equation}
On the other hand, for even orders $\ell$, we have a certain number of irreducible tensors, one of them of rank 0, and for odd orders $\ell$, we have $\frac{\ell+1}2$ irreducible tensors, one of them of rank 1. That leads to a total of 
\begin{equation}
\sum_{i=0}^{\ell_m/2} 1 = \frac {\ell_m}2+1
\end{equation}
irreducible tensors of rank 0,
\begin{equation}
\sum_{i=0}^{\ell_m/2} 1 = \frac {\ell_m}2
\end{equation}
irreducible tensors of rank 1, and a total of
\begin{equation}
\sum_{i=0}^{\ell_m/2} i + \sum_{i=0}^{\ell_m/2} i = \frac {\ell_m^2}4
\end{equation}
irreducible tensors of rank larger than 1 up to $\ell_m$.
Selecting one of the latter, removing it, and pairing it, gives us two mixed invariants for each pairing with first order and three mixed invariants for each pairing with the remaining higher-order irreducible tensors, which exactly coincides with the number of independent invariants needed
\begin{equation}
2\frac {\ell_m}2 + 3 \left(\frac {\ell_m^2}4-1\right) = \frac{3\ell_m^2+4\ell_m-12}{4}.
\end{equation}

Now assume $\ell_m$ is odd. Then, we have $\frac{\ell_m+1}2$ even and odd order tensors. Subtracting these from the total number of invariants, we get
\begin{equation}\begin{aligned}
\underset{\text{inv.\ of all tensors}}{\underbrace{\sum_{\ell=0}^{\ell_m}(\ell+1)(\ell+2)/2- 3}}  
 - \underset{\text{pure inv.\ of even tensors}}{\underbrace{\sum_{i=0}^{(\ell_m-1)/2} \frac{(2i)^2+2}2}}
 - \underset{\text{pure inv.\ of odd tensors}}{\underbrace{\sum_{i=0}^{(\ell_m-1)/2} \frac{(2i+1)^2+1}2}} 
 \\ = \frac{3\ell_m^2+4\ell_m-11}{4}.
\end{aligned}\end{equation}
On the other hand, we have a certain number of irreducible tensors for even orders $\ell$, one of them of rank 0, and for odd orders $\ell$, one of them of rank 1. That leads to a total of 
\begin{equation}
\sum_{i=0}^{(\ell_m-1)/2} 1 = \frac {\ell_m+1}2
\end{equation}
irreducible tensors of rank 0,
\begin{equation}
\sum_{i=0}^{\ell_m/2} 1 =\frac {\ell_m+1}2
\end{equation}
irreducible tensors of rank 1, and a total of
\begin{equation}
\sum_{i=0}^{\ell_m/2} i + \sum_{i=0}^{\ell_m/2} i = \frac {\ell_m^2-1}4
\end{equation}
irreducible tensors of rank larger than 1 up to $\ell_m$.
Selecting one of the latter, removing it, and pairing it, gives us two mixed invariants for each pairing with first order and three mixed invariants for each pairing with the remaining higher-order irreducible tensors, which again exactly coincides with the number of independent invariants needed:
\begin{equation}
2\frac {\ell_m+1}2 + 3 \left(\frac {\ell_m^2-1}4-1\right) = \frac{3\ell_m^2+4\ell_m-11}{4}.
\end{equation}
The exception for $\ell_m=1$ follows from the exception in Lem.~\ref{l:mixed}.
\end{proof}

\subsection{Example of Flexibility}
We now revisit the example from Sec.~\ref{l:example} and show that the irreducible tensor bases up to third order are able to discriminate the two functions from that example. Since the zeroth-to-second order moments of both functions vanish, it is sufficient to look at the third-order tensor. Their irreducible tensor decompositions become trivial because both are traceless:
\begin{equation} \begin{aligned} 
{}{^3}M_{ijk}&={}{^3_3}H_{ijk}+
{}{^3_{1}}{H}_{(i }
   \delta_{jk)}
={}{^3_3}H_{ijk}+0_{i}\delta_{jk}
\end{aligned} \end{equation}
Consequently, the pure invariant of the first-rank irreducible tensor and the two mixed invariants between ranks three and one are zero. As expected, the seven invariants of the third-order moment tensor reduce to the four invariants of the third-rank irreducible tensor, ref.\ Table~\ref{t:l}.

Langbein's algorithm provides us with four independent invariants of the third-rank irreducible tensor. The first three are the same for both functions:
\begin{equation}\begin{aligned}\label{irreducibleTensor}
{}{^3}{M}{^2}(1,2,3) (1,2,3)&=14\left(\frac{8\pi}{315}\right)^2\\
{}{^3}{M}{^4}(1,2,3) (1,2,4) (3,5,6) (4,5,6)&=92\left(\frac{8\pi}{315}\right)^4\\
{}{^3}{M}{^6}(1, 2, 3) (2, 3, 4) (1, 4, 5) (5, 6, 7) (7, 8, 9) (6, 8, 9)&=32\left(\frac{8\pi}{315}\right)^6,
\end{aligned}\end{equation}
but the last one,
\begin{equation}\begin{aligned}\label{irreducibleTensor2}
{}{^3}{M}{^{10}}(1, 2, 3) (2, 3, 4) (1, 4, 5) (5, 6, 7) (6, 7, 8) (8, 9, 10) (9, 10, 11) (11, 12, 13)\\ (13, 14, 15) (12, 14, 15),
\end{aligned}\end{equation}
distinguishes them taking the values $1418\left(\frac{8\pi}{315}\right)^{10}$ for $f_1$ and $1152\left(\frac{8\pi}{315}\right)^{10}$ for $f_2$. Fig.~\ref{f:irreducibleTensor} shows a graph visualization of the homogeneous third-order invariants of the irreducible tensor decomposition.

\begin{figure}[ht]
\centering
{\includegraphics[width=0.24\linewidth]{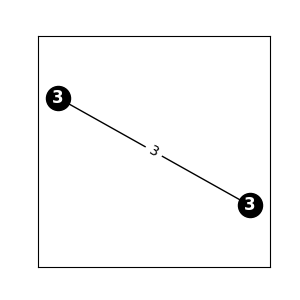}}\hfill
{\includegraphics[width=0.24\linewidth]{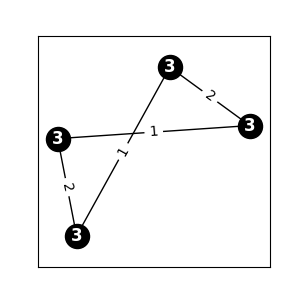}}\hfill
{\includegraphics[width=0.24\linewidth]{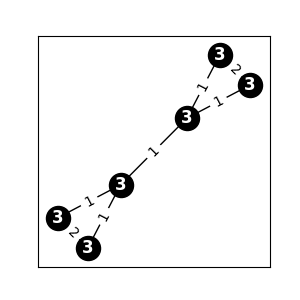}}\hfill
{\includegraphics[width=0.24\linewidth]{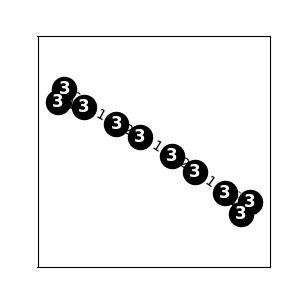}}\\ \hfill
{\includegraphics[width=0.24\linewidth]{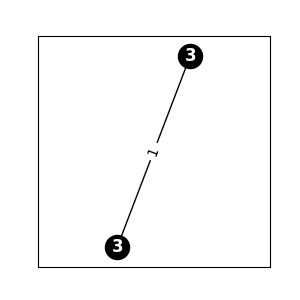}}\hfill
{\includegraphics[width=0.24\linewidth]{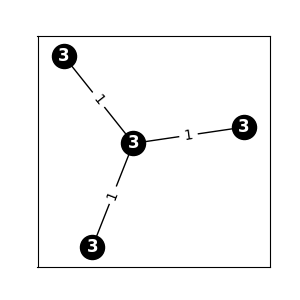}}\hfill
{\includegraphics[width=0.24\linewidth]{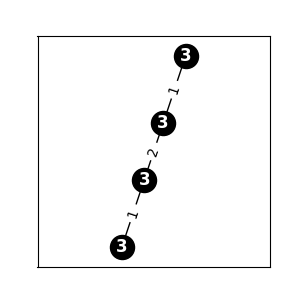}}\hfill\hfill
\caption{A set of independent homogeneous third-order invariants in the basis formed by the irreducible tensor decomposition. The nodes all show order 3. The number of incident edges reveals their rank. The first four are the non-zero entries in the same order as in Eqs.~\eqref{irreducibleTensor} and~\eqref{irreducibleTensor2}. The last three involve the first rank irreducible tensor, which is zero in this case. \label{f:irreducibleTensor}} 
\end{figure}

\section{The Generation of Optimal Flexible Feature Sets}\label{s:sets}
A useful way to think about the theory from the previous section, and in particular the main result, Thm.~\ref{t:flexible}, and the way in which we generate flexible feature sets, is that they consist of two parts. First, the pure invariants fully describe each irreducible tensor in an invariant way, and second, the mixed invariants anchor the mutual orientations between them.  Because the number of degrees of freedom of a 3D rotation is three, we typically need three mixed invariants for each anchor (with the exception of first order), and if we have $n$ irreducible tensors, then we need $n-1$ anchors. Fig.~\ref{f:featureSets} illustrates two different ways in which this anchoring can be achieved, each optimal in a its own way, as we will show.

\begin{figure}[ht]
\centering
\subcaptionbox{The specific flexible basis is complete, independent, and flexible, but relies on a central irreducible tensor whose vanishing would catastrophically annihilate  all mutual alignment information.\label{f:specific}}{\includegraphics[width=0.49\linewidth]{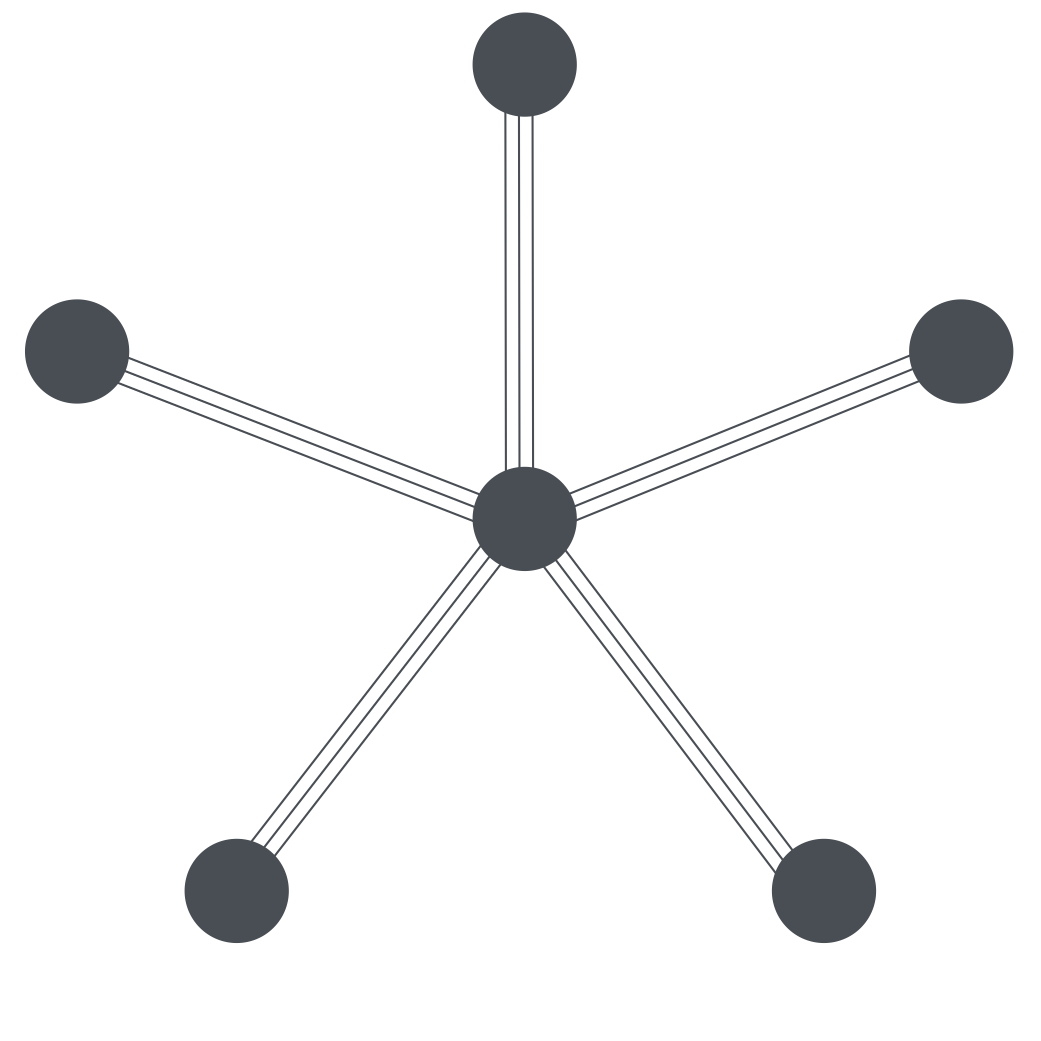}}\hfill
\subcaptionbox{The minimal flexible set is dependent but complete and absolutely flexible. The mutual alignment information is preserved for any arbitrary input function no matter its degeneracy.\label{f:minimal}}{\includegraphics[width=0.49\linewidth]{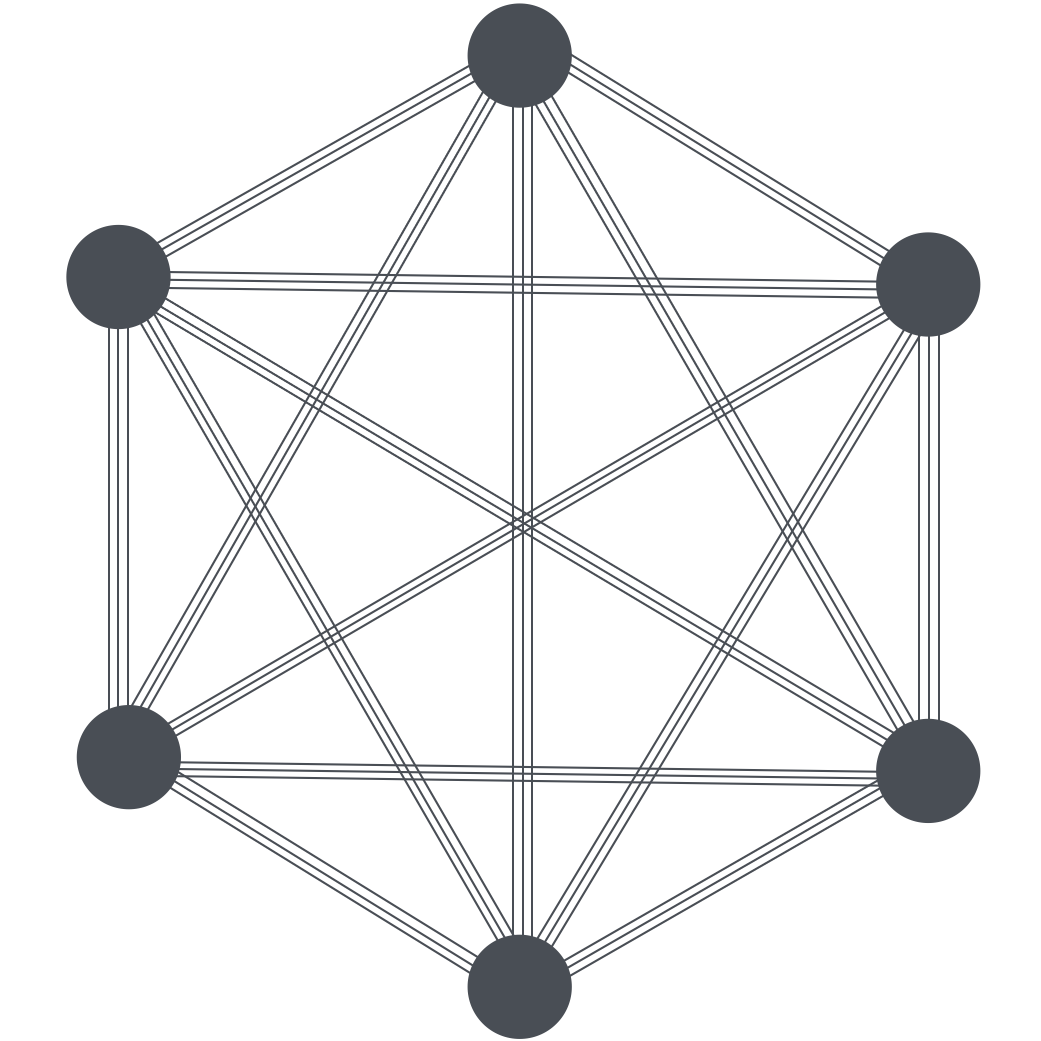}}\hfill
\caption{Schematic illustration of the different flexible feature sets. The nodes are the irreducible tensors and the edges indicate the presence of a mixed invariant from a product of the two edges it connects. Three mixed invariants are typically needed to anchor the mutual alignment between two irreducible tensors.\label{f:featureSets}} 
\end{figure}

\subsection{The Specific Flexible Basis}\label{s:specific}
There are applications in which a robust irreducible tensor can be preselected, for example in pattern detection applications once the pattern is known~\cite{bujack2017flexible}. An irreducible tensor is robust if its norm is large and its rank is low. For example, the smallest rank-irreducible tensor that exceeds the average norm of all irreducible tensors present is a good candidate. It should have rank of at least two, though, to not fall into the category of exceptions for rank 0 and rank 1 tensors.   

The presence of a robust irreducible tensor allows robustly setting the $n-1$ anchors by placing it as the center and connecting it to every other irreducible tensor, as illustrated in Fig.~\ref{f:specific}. 

The following algorithm computes a flexible basis if a robust irreducible tensor $D_r$ can be preselected. 
\begin{enumerate}
    \item Compute moment tensors up to a given maximal order $l_m$.
    \item Compute the irreducible tensoric decompositions of all tensors by iteratively subtracting the trace.
    \item For each irreducible tensor, apply Langbein's algorithm to its powers (sorted increasingly by exponent) to compute its independent pure invariants up to the number given in Table~\ref{t:l}, and add them to the basis.
    \item Select the robust irreducible tensor $D_r$.
    \item For each other irreducible tensor $D_i,i\neq m$, prefill the rows of the Jacobian with the derivatives of the pure invariants already selected for $D_i$ and $D_r$ and apply Langbein's algorithm to all products of $D_i$ and $D_r$ (sorted increasingly by the order of the product) until three (two if the order of $D_i$ is 1) mixed invariants are found and add them to the basis.
\end{enumerate}

\begin{ex}\label{ex:22}
Let ${}{^0}M,{}{^1}M,{}{^2}M,{}{^3}M$ be the moment tensors up to order $l_m=3$. Example~\ref{ex:irreducible} shows the six irreducible tensors in the decompositions: ${}{^0_0}H$, ${}{^1_1}H$, ${}{^2_2}H$, ${}{_{0}^{2}}H$, ${}{^3_3}H$, ${}{_{1}^{3}}H$. There are always 10 pure invariants, depicted in Fig.~\ref{f:exPure}. They do not depend on the choice of a robust irreducible tensor. 
Let's assume a scenario in which the irreducible tensor ${}{^2_2}H$ is robustly non-zero. Then we add the seven mixed invariants that contain ${}{^2_2}H$, depicted in Fig.~\ref{f:exMixed1}. This totals 17 invariants, which correctly matches the predicted number from Table~\ref{t:lm}. The complete list of invariants, with their contraction information as formulas, are given in the appendix.

\begin{figure}
\centering
{\includegraphics[width=0.19\linewidth]{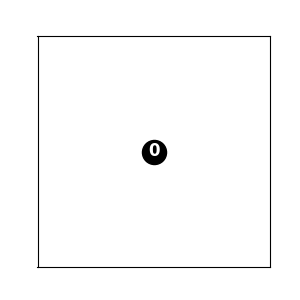}}\hfill
{\includegraphics[width=0.19\linewidth]{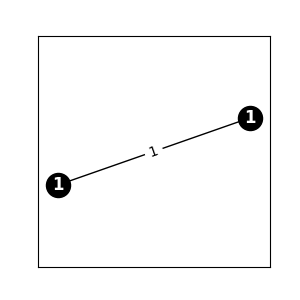}}\hfill
{\includegraphics[width=0.19\linewidth]{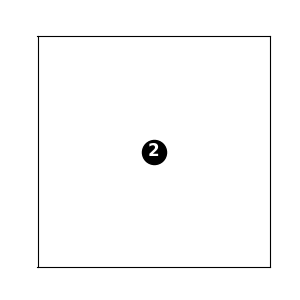}}\hfill
{\includegraphics[width=0.19\linewidth]{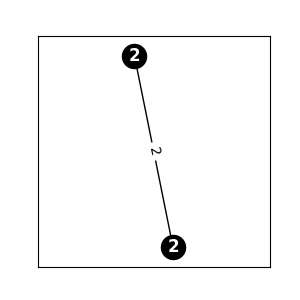}}\hfill
{\includegraphics[width=0.19\linewidth]{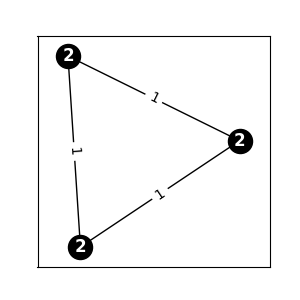}}\\
{\includegraphics[width=0.19\linewidth]{pics/pure_1_1_0.png}}\hfill
{\includegraphics[width=0.19\linewidth]{pics/pure_3_1_0.png}}\hfill
{\includegraphics[width=0.19\linewidth]{pics/pure_3_3_0.png}}\hfill
{\includegraphics[width=0.19\linewidth]{pics/pure_3_3_2.png}}\hfill
{\includegraphics[width=0.19\linewidth]{pics/pure_3_3_3.png}}
\caption{The 10 pure invariants up to $l_m=3$: one of ${}{^0}M={}{^0_0}H$, one of ${}{^1}M={}{^1_1}H$, three of ${}{^2}M$ (one of ${}{_{0}^{2}}H$ and two of ${}{^2_2}H$), and five of ${}{^3}M$ (one of ${}{_{1}^{3}}H$ and four of ${}{_{3}^{3}}H$).\label{f:exPure}} 
\end{figure}

\begin{figure}
\centering
{\includegraphics[width=0.24\linewidth]{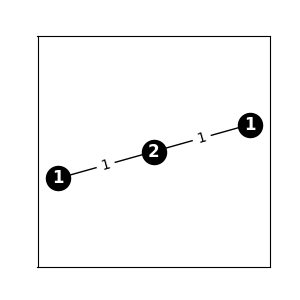}}\hfill
{\includegraphics[width=0.24\linewidth]{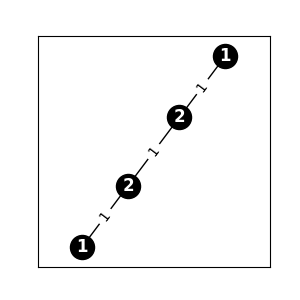}}\hfill
{\includegraphics[width=0.24\linewidth]{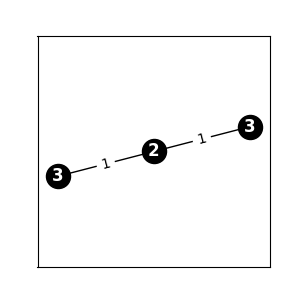}}\hfill
{\includegraphics[width=0.24\linewidth]{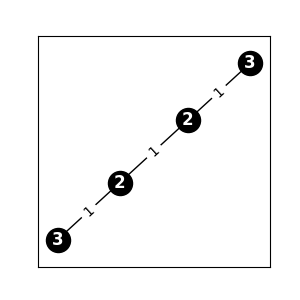}}\\ \hfill
{\includegraphics[width=0.24\linewidth]{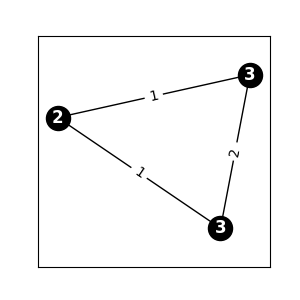}}\hfill
{\includegraphics[width=0.24\linewidth]{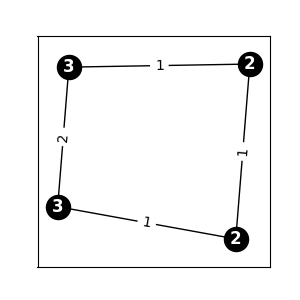}}\hfill
{\includegraphics[width=0.24\linewidth]{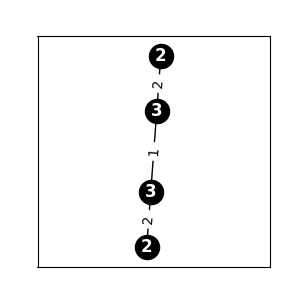}} \hfill\hfill
\caption{The seven mixed invariants up to $l_m=3$ centered around ${}{^2_2}H$: two with ${}{^1_1}H$, two with ${}{_{1}^{3}}H$, and three with ${}{_{3}^{3}}H$.\label{f:exMixed1}} 
\end{figure}
\end{ex}

\begin{ex}\label{ex:33}
If, on the other hand, we had chosen ${}{^3_3}H$ to be the robustly non-zero irreducible tensor, we would have added the seven mixed invariants containing it, depicted in Fig.~\ref{f:exMixed2}, to the pure invariants from Fig.~\ref{f:exPure}.

\begin{figure}

\centering
{\includegraphics[width=0.24\linewidth]{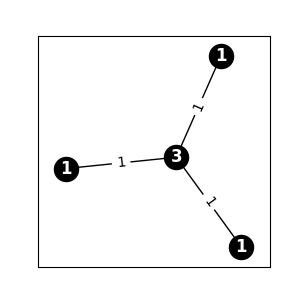}}\hfill
{\includegraphics[width=0.24\linewidth]{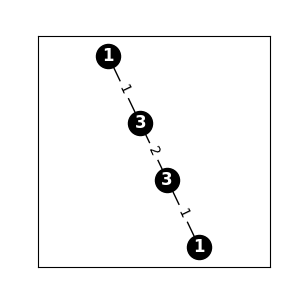}}\hfill
{\includegraphics[width=0.24\linewidth]{pics/mixed_3_1_3_3_0.png}}\hfill
{\includegraphics[width=0.24\linewidth]{pics/mixed_3_1_3_3_1.png}}\\ \hfill
{\includegraphics[width=0.24\linewidth]{pics/mixed_2_2_3_3_0.png}}\hfill
{\includegraphics[width=0.24\linewidth]{pics/mixed_2_2_3_3_1.png}}\hfill
{\includegraphics[width=0.24\linewidth]{pics/mixed_2_2_3_3_2.png}}\hfill\hfill
\caption{The seven mixed invariants up to $l_m=3$ centered around ${}{^3_3}H$: 2 with ${}{^1_1}H$, 2 with ${}{_{1}^{3}}H$, and 3 with ${}{_{2}^{2}}H$.\label{f:exMixed2}} 
\end{figure}
\end{ex}

\subsection{The Minimal Flexible Set}\label{s:minimal}
There are applications that do not allow preselecting one irreducible tensor, such as clustering~\cite{bujack2015clustering} and machine learning~\cite{allen2025optimal}, that deal with a large set of input functions rather than one specific one. To guarantee flexibility for any input function, Bujack \etal suggested using the smallest set that guarantees full flexibility instead of a basis~\cite{bujack2022systematic}. They called it the overcomplete set. We will call it the \emph{minimal flexible set} to avoid confusion with other overcomplete sets such as the atomic cluster expansion set (ACE)~\cite{AtomicClusterExpansion}. The minimal flexible set is not a basis in the proper sense because it is dependent, but it is the smallest set that is complete for any input function. It contains pairwise combinations of all moment tensors. We now extend this idea from moment tensors to their individual irreducible components.

In the anchoring metaphor, the minimal flexible set corresponds to mutually anchoring every irreducible tensor to every other irreducible tensor, which is illustrated in Fig.~\ref{f:minimal}. No matter which irreducible tensors vanish, the mutual orientation information is guaranteed to be preserved between the non-zero irreducible tensors.

To build a minimal flexible set, we combine each irreducible tensor with each other irreducible tensor. Pairing two tensors of ranks greater than one always has three independent invariants. Pairing one of rank greater than one with one of rank one yields two independent invariants. Pairing two first-rank tensors yields one independent invariant, Table~\ref{t:lm}. 

For $l_m$ even, we have $\frac {l_m} 2$ even-rank tensors larger than zeroth rank and $\frac {l_m} 2$ odd-rank tensors.
If $l$ is even, the irreducible decomposition has $\frac l2$ components larger than zeroth rank, and if l is odd, it has $\frac{l-1}2$ of rank larger than one and one of rank one. This means we have $n_1=\frac {l_m} 2$ irreducible tensors of first rank, and 
\begin{equation}
n_2=\sum_{q=1}^{\frac {l_m}2}\frac{2q}2+\sum_{q=1}^{\frac {l_m}2}\frac{(2q+1)-1}2=\frac{l_m(l_m + 2)}4.
\end{equation}
The total number of invariants in the minimal flexible set is then the number of pure invariants (from the third column of Table~\ref{t:lm}), plus one times the combination of every first rank irreducible tensor with every other first-rank tensor, plus twice the combination of every first-rank irreducible tensor with every one of rank greater than one, and three times the combination of all irreducible tensors with rank greater than one with each other, which totals
\begin{equation}\begin{aligned}
&\frac{2l_m^3+3l_m^2+10l_m+12}12 +\frac{n_1(n_1-1)}2+2n_1n_2+3\frac{n_2(n_2-1)}2 \\&= \frac1{96}(9l_m^4 + 76l_m^3 + 84l_m^2 - 16l_m + 96).
\end{aligned}\end{equation}
The behavior of odd $l_m$ is similar. Here, we have $\frac {l_m-1} 2$ even-rank tensors larger than zeroth rank and $\frac {l_m-1} 2$ odd-rank tensors larger than first rank. Therefore, we get $n_1=\frac {l_m+1} 2$ and $n_2=((lm - 1)*(lm + 1))/4$, which together with the pure invariants gives
\begin{equation}\begin{aligned}
&\frac{2l_m^3+3l_m^2+10l_m+9}12 +\frac{n_1(n_1-1)}2+2n_1n_2+3\frac{n_2(n_2-1)}2 
\\&= \frac1{96}(9l_m^4 + 40l_m^3 + 6l_m^2 + 56l_m + 81).
\end{aligned}\end{equation}
This shows that the minimal flexible set between irreducible tensors constitutes a computational cost going from $\Theta(l_m^3)$, as shown in the third column of Table~\ref{t:lm}, to $\Theta(l_m^4)$, and so might not be feasible for all applications.
The first values for $l_m=0,...,5$ for the specific flexible basis are $1,2,7,17,32,53$, the numbers for the minimal flexible set are $1,2,12,22,89,116$.

\begin{ex}\label{ex:minimal}
Assume a scenario in which no irreducible tensor can be chosen robustly. Then we add all individually-independent sets of mixed invariants between two irreducible tensors to the 10 pure invariants from Fig.~\ref{f:exPure}. The 12 mixed invariants of the minimal flexible set are depicted in Fig.~\ref{f:exMixed3}. This leads to a total of 22 invariants that allow minimal flexibility, i.e., five additional invariants more than a basis.
The complete list of invariants with their contraction information as formulas are given in the appendix.

\begin{figure}
\centering
{\includegraphics[width=0.24\linewidth]{pics/mixed_1_1_2_2_0.png}}\hfill
{\includegraphics[width=0.24\linewidth]{pics/mixed_1_1_2_2_1.png}}\hfill
{\includegraphics[width=0.24\linewidth]{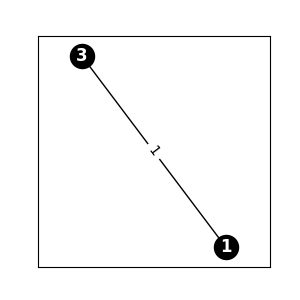}}\hfill
{\includegraphics[width=0.24\linewidth]{pics/mixed_1_1_3_3_0.png}}\\
{\includegraphics[width=0.24\linewidth]{pics/mixed_1_1_3_3_1.png}}\hfill
{\includegraphics[width=0.24\linewidth]{pics/mixed_2_2_3_3_0.png}}\hfill
{\includegraphics[width=0.24\linewidth]{pics/mixed_2_2_3_3_1.png}}\hfill
{\includegraphics[width=0.24\linewidth]{pics/mixed_2_2_3_3_0.png}}\\
{\includegraphics[width=0.24\linewidth]{pics/mixed_2_2_3_3_1.png}}\hfill
{\includegraphics[width=0.24\linewidth]{pics/mixed_2_2_3_3_2.png}}\hfill
{\includegraphics[width=0.24\linewidth]{pics/mixed_3_1_3_3_0.png}}\hfill
{\includegraphics[width=0.24\linewidth]{pics/mixed_3_1_3_3_1.png}}
\caption{The 12 pairwise-independent mixed invariants of the minimal flexible set up to order $l_m=3$: one between ${}{^1_1}H$ and ${}{_{1}^{3}}H$, two between ${}{^1_1}H$ and ${}{_{2}^{2}}H$, two between ${}{_{1}^{3}}H$ and ${}{_{2}^{2}}H$, two between ${}{_{1}^{1}}H$ and ${}{_{3}^{3}}H$, two between ${}{_{1}^{3}}H$ and ${}{_{3}^{3}}H$, and three between ${}{_{2}^{2}}H$ and ${}{_{3}^{3}}H$.\label{f:exMixed3}} 
\end{figure}
\end{ex}

\section{Cartesian Spherical Moments}\label{s:spherical}
In the previous sections, we have seen that the irreducible tensors can form flexible bases of moment invariants that are invulnerable to vanishing tensors and functions that separate into a spherical and a radial part. 

Even though it is possible to treat spherical functions, i.e., $f(\phi,\theta)=f(\hat r)$, in our flexible feature sets now, the volumetric moments are not meant to be applied to spherical functions. The flexible feature sets are supposed to ensure that the full discriminative power is maintained if a volumetric function happens to degenerate to a spherical or a spherical-radial function. If it is clear from the start that spherical functions are considered exclusively, such as in machine learning atomic potentials~\cite{allen2025optimal}, then spherical moments should be preferred to the volumetric moments because it saves computation during integration. Also the complexity of the flexible feature sets can be reduced a-priori in this setting. 
In this section, we derive this tailored theory of Cartesian spherical moment invariants.

\subsection{Definition}
\begin{defn}\label{d:sph_mom_tensor}
Let $f(\hat r):S_1(0)\to \mathbb R$ be a spherical scalar function over the unit sphere $S_1(0)\subset \mathbb R^3$. The Cartesian \textbf{spherical moment tensor} ${}{^\ell}{\hat M}$ of order $\ell\in\N$ is defined by
\begin{equation} \label{d:sphericalMoment}\begin{aligned}
{}{^\ell}{\hat M} &:= \int_{S_1(0)} \hat r^{\otimes \ell}f(\hat r) \d^2\hat r,\\
{}{^\ell}{\hat M}_{i_1\ldots i_\ell}&=\int_{S_1(0)}\hat r_{i_1}\ldots\hat r_{i_\ell}f(x)\d^2\hat r.
\end{aligned}\end{equation}
\end{defn}
Note that for spherically symmetric volumetric functions, spherical moments and the classical 3D moments are not identical, but differ by a factor of
\begin{equation}\label{volumetricSpherical} \begin{aligned}
{}{^\ell}M &\overset{\eqref{sphericalMoment}}=\int_0^{1}\int_{S_1(0)} r^{\ell+2}\hat r^{\otimes \ell}f(\hat r)\,dr\,d^2\hat r\\
 &=\int_0^{1}r^{\ell+2}H r\int_{S_1(0)} \hat r^{\otimes \ell}f(\hat r)\, d^2\hat r\\
 &\overset{\eqref{d:sphericalMoment}}= \frac1{\ell+3}{}{^\ell}{\hat M}.
\end{aligned}\end{equation}
This close relation lets us directly translate all of our findings to the invariants of spherical moments. 

\subsection{Irreducible Tensor Moments Bases}
We know from Equation~\eqref{radialSphericalTrace} that the traces of moment tensors of radial-spherical-functions are dependent on lower-rank tensors. For spherical functions, this simplifies to
\begin{equation}\label{sphericalTrace} \begin{aligned} 
{}{^\ell}M^{(1,2)}_{i_3\ldots i_\ell} &\overset{\eqref{radialSphericalTrace}}=
\frac{\ell+1}{\ell+3} {}{^{\ell-2}}M_{i_3\ldots i_\ell},
\end{aligned} \end{equation}
and using the relation between volumetric and spherical moments, we even get the identity for the traces of spherical moment tensors,
\begin{equation} \label{traceIdentity}\begin{aligned} 
{}{^\ell}{\hat M}^{(1,2)}_{i_3\ldots i_\ell}&\overset{\eqref{volumetricSpherical}}
=(\ell+3){}{^\ell}M^{(1,2)}_{i_3\ldots i_\ell}\\
&\overset{\eqref{sphericalTrace}}
=\frac{\ell+1}{\ell+3} (\ell+3){}{^{\ell-2}}M_{i_3\ldots i_\ell}\\
&\overset{\eqref{volumetricSpherical}}
={}{^{\ell-2}}{\hat M}_{i_3\ldots i_\ell}.
\end{aligned} \end{equation}
Consequently, we can omit all traces from the feature sets from the start, i.e., we can omit all irreducible tensors except for the full-order one from the irreducible tensoric decomposition.

\begin{ex}\label{ex:irreducibleTensors2}
Example~\ref{ex:irreducible} showed the six irreducible tensors in the decompositions for volumetric moments up to $\ell_m=3$: ${}{^0_0}H,\;{}{^1_1}H,\;{}{^2_2}H,\;{}{_{0}^{2}}H,\;{}{^3_3}H,\;{}{_{1}^{3}}H$.
For spherical moments, we only need to consider four: ${}{^0_0}H,\;{}{^1_1}H,\;{}{^2_2}H,\;{}{^3_3}H$, one for each order because of the identity in Equation~\eqref{traceIdentity}:
\begin{equation} \begin{aligned} 
{}{_{0}^{2}}H^{(1,2)} &= {}{^0_0}H,\\
{}{_{1}^{3}}H^{(1,2)} &= {}{^1_1}H.
\end{aligned} \end{equation}
\end{ex}

Because the degrees of freedom of the volumetric and spherical moments for a spherical function are the same, the results from Section~\ref{s:theory} can be adapted to the case of spherical functions as summarized in Table~\ref{t:spherical}. The lower number of independent moments that comes from the lack of a radial dependence adds up exactly to the smaller number of invariants that come from omitting the irreducible tensors whose orders differ from the orders of their moment tensor. This means that the flexible sets can be generated in exactly the same way as for volumetric functions described in Section~\ref{s:sets} after dropping the reduced-order irreducible tensors from the initial set of all irreducible tensors.

\begin{table}[ht]
 \begin{small}
\begin{tabular}{|p{1.25cm}|p{1.25cm}|p{1.25cm}|p{1.25cm}|p{1.25cm}|p{1.26cm}|p{1.55cm}|}\hline
order $\ell$ or max.\ order $\ell_m$ & moments of order $\ell$ & moments up to order $\ell_m$ & hom.\ inv.\ of order $\ell$ & hom.\ inv.\ up to $\ell_m$ & invariants up to order $\ell_m$ & mixed inv.\ needed up to order $\ell_m$ 
\\\hline
0  & 1 & 1 &\cellcolor{black!20} 1 &\cellcolor{black!20} 1 & \cellcolor{black!20} 1 &\cellcolor{black!20} 0  \\
1  & 3 & 4 &\cellcolor{black!20} 1 &                     2 & \cellcolor{black!20} 3 &\cellcolor{black!20} 0  \\
2  & 5 & 9                     & 2                     & 4                      & 6                    & 2    \\
3  & 7 & 16                    & 4                     & 8                      & 13                   & 5    \\
4  & 9 & 25                    & 6                     & 14                     & 22                   & 8   \\
5  & 11& 36                    & 8                     & 22                     & 33                   & 11   \\
6  & 13& 49                    & 10                    & 32                     & 46                   & 14   \\\hline
$\ell,\ell_m$ & $2\ell+1$ & $(\ell_m+1)^2$ & $2\ell-2$ & $\ell_m^2-\ell_m+2$ & $\ell_m^2+2\ell_m-2$ & $3\ell_m-4$ \\\hline
& {Lem.~\ref{l:dof}} & $\sum$ Lem.~\ref{l:dof} & Lem.~\ref{l:pure} & $\sum$ Lem.~\ref{l:pure} & $\sum$ Lem.~\ref{l:dof}\;$-3$  & \scriptsize{$\sum$ Lem.~\ref{l:dof}$-3$ \newline-$\sum$ Lem.~\ref{l:pure}}
\\\hline
\end{tabular}
\end{small}
\caption{Summary of the degrees of freedom and numbers of invariants for a spherical function. The grey cells indicate the exceptions to the rule on the bottom.\label{t:spherical}}
\end{table}

\subsection{Feature Sets}
A specific flexible basis for spherical functions is generated analogously to Section~\ref{s:specific} by taking all pure invariants and adding three mixed invariants paired with the chosen robust irreducible tensor.

\begin{ex}\label{ex:spherical22}
We construct an example of a specific flexible basis analogous to Example~\ref{ex:22}. Let ${}{^\ell}{\hat M}$ be the spherical moment tensors up to order $\ell\leq \ell_m=3$.  Example~\ref{ex:irreducibleTensors2} shows the four full-order irreducible tensors in the decompositions: ${}{^0_0}H$, ${}{^1_1}H$, ${}{^2_2}H$, ${}{^3_3}H$. Therefore, there are always eight pure invariants as depicted in Fig.~\ref{f:exSphericalPure}. They do not depend on the choice of a robust irreducible tensor. 
Let's assume a scenario in which the irreducible tensor ${}{^2_2}H$ is robustly non-zero. Then we add the five mixed invariants that contain ${}{^2_2}H$, depicted in Fig.~\ref{f:exSphericalMixed1}. This totals to 13 invariants, which correctly matches the predicted number from Table~\ref{t:spherical}. The complete list of invariants with their contraction information as formulas is given in the appendix.


\begin{figure}
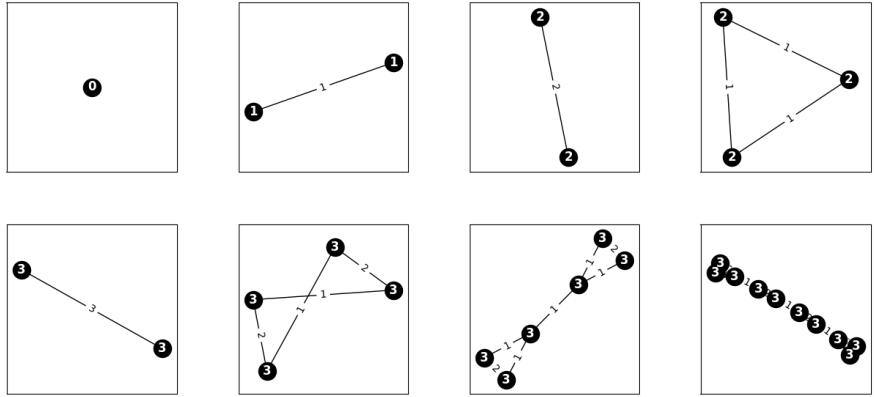

\centering
{\includegraphics[width=0.24\linewidth]{pics/pure_0_0_0.png}}\hfill
{\includegraphics[width=0.24\linewidth]{pics/pure_1_1_0.png}}\hfill
{\includegraphics[width=0.24\linewidth]{pics/pure_2_2_0.png}}\hfill
{\includegraphics[width=0.24\linewidth]{pics/pure_2_2_1.png}} \\
{\includegraphics[width=0.24\linewidth]{pics/pure_3_3_0.png}}\hfill
{\includegraphics[width=0.24\linewidth]{pics/pure_3_3_1.png}}\hfill
{\includegraphics[width=0.24\linewidth]{pics/pure_3_3_2.png}}\hfill
{\includegraphics[width=0.24\linewidth]{pics/pure_3_3_3.png}}
\caption{The 8 pure invariants of a spherical function up to $\ell_m=3$: 1 of ${}{^0_0}H$, 1 of ${}{^1_1}H$,  2 of ${}{^2_2}H$, and 4 of ${}{_{3}^{3}}H$. \label{f:exSphericalPure}} 
\end{figure}

\begin{figure}
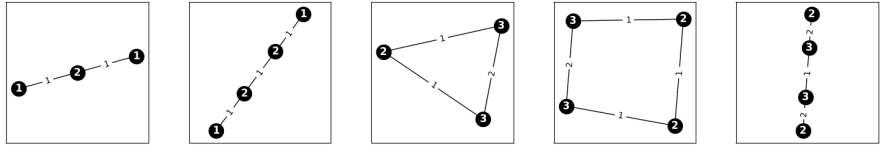

\centering
{\includegraphics[width=0.20\linewidth]{pics/mixed_1_1_2_2_0.png}}\hfill
{\includegraphics[width=0.20\linewidth]{pics/mixed_1_1_2_2_1.png}}\hfill
{\includegraphics[width=0.20\linewidth]{pics/mixed_2_2_3_3_0.png}}\hfill
{\includegraphics[width=0.20\linewidth]{pics/mixed_2_2_3_3_1.png}}\hfill
{\includegraphics[width=0.20\linewidth]{pics/mixed_2_2_3_3_2.png}}
\caption{The 5 mixed invariants of a spherical function up to $\ell_m=3$ centered around ${}{^2_2}H$: two with ${}{^1_1}H$ and three with ${}{_{3}^{3}}H$. \label{f:exSphericalMixed1}} 
\end{figure}
\end{ex}

A minimal flexible set for spherical functions is generated analogously to Section~\ref{s:minimal} by taking all pure invariants and adding three mixed invariants for all pairs of irreducible tensors. 

For spherical functions, because the irreducible decompositions have only one component that needs to be considered, for any $\ell_m>0$ we have $n_1=1$ irreducible tensor of first rank and $n_2=\ell_m-1$ irreducible tensors of rank greater than one, giving a total of 
\begin{equation}
\ell_m^2 + 2\ell_m - 2 + 2n_1n_2 + 3\frac{n_2(n_2 - 1)}2 = \frac{5}{2}\ell_m^2-\frac{1}{2}\ell_m-1.
\end{equation}
The first values for $\ell_m=0,\ldots,5$ for the specific flexible basis are $1$, $3$, $6$, $13$, $22$, $33$, and the numbers for the minimal flexible set are $1$, $3$, $8$, $20$, $37$, $59$.
The overhead for computational complexity of using the minimal flexible set over the specific flexible basis for spherical functions is significantly less than for volumetric functions. With $\frac{5}{2}\ell_m^2-\frac{1}{2}\ell_m-1$ descriptors for the minimal flexible set and $\ell_m^2+2\ell_m-2$ (from column 6 of Table~\ref{t:spherical}) for the specific flexible basis; both fall into $\Theta(\ell_m^2)$.

\begin{ex}\label{ex:minimalSpherical}
We construct an example of a minimal flexible set analogous to Example~\ref{ex:minimal} for a
scenario in which no irreducible tensor can be chosen robustly. We add all individually independent sets of mixed invariants between two irreducible tensors to the eight pure invariants from Fig.~\ref{f:exSphericalPure}. The seven mixed invariants of the minimal flexible set are depicted in Fig.~\ref{f:exSphericalMixed3}. This leads to a total of 15 invariants that allow minimal flexibility, i.e., two additional invariants compared to a basis.
The complete list of invariants with their contraction information as formulas is given in the appendix.

\begin{figure}
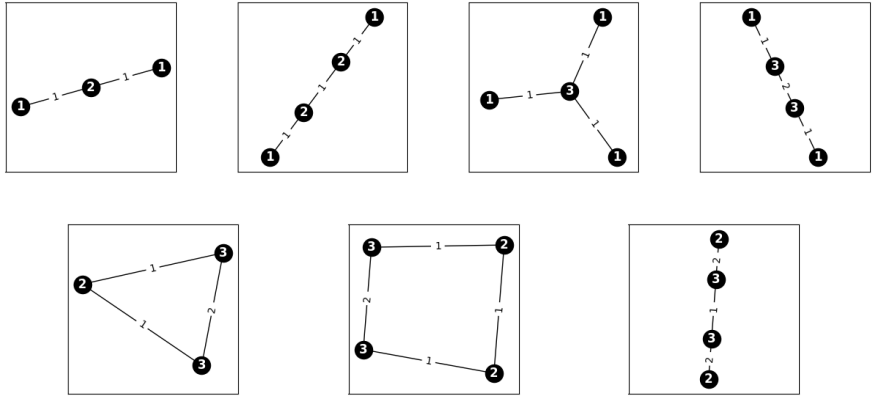

\centering
{\includegraphics[width=0.24\linewidth]{pics/mixed_1_1_2_2_0.png}}\hfill
{\includegraphics[width=0.24\linewidth]{pics/mixed_1_1_2_2_1.png}}\hfill
{\includegraphics[width=0.24\linewidth]{pics/mixed_1_1_3_3_0.png}}\hfill
{\includegraphics[width=0.24\linewidth]{pics/mixed_1_1_3_3_1.png}}\\ 
\hfill
{\includegraphics[width=0.24\linewidth]{pics/mixed_2_2_3_3_0.png}}\hfill
{\includegraphics[width=0.24\linewidth]{pics/mixed_2_2_3_3_1.png}}\hfill
{\includegraphics[width=0.24\linewidth]{pics/mixed_2_2_3_3_2.png}} \hfill\hfill
\caption{The seven pairwise independent mixed invariants of the minimal flexible set of a spherical function up to order $\ell_m=3$: two between ${}{^1_1}H$ and ${}{_{2}^{2}}H$, two between ${}{_{1}^{1}}H$ and ${}{_{3}^{3}}H$, and three between ${}{_{2}^{2}}H$ and ${}{_{3}^{3}}H$. \label{f:exSphericalMixed3}} 
\end{figure}
\end{ex}

\section{Application}
Understanding how material and chemical systems behave under different conditions (e.g., temperatures and pressures) is of interest in numerous fields in the natural sciences~\cite{Ponder2003}. One way to do this is to perform computational simulations that explicitly describe the atoms present in a material and model how the energy of a system changes with the atomic positions~\cite{Frenkel2002}. The energy of a system can then be used to find the structures that are preferred under different conditions. For example, under high pressures, a tetrahedral arrangement (as diamond) is the lowest energy configuration that can be formed using carbon atoms.

To perform these simulations, a model is needed to describe the interactions between different atoms and molecules. Atoms interact with one another according to various physical phenomena, such as the presence of bonds between atoms or electrical charges attracting/repelling one another~\cite{Frenkel2002}. Machine learning has been used to describe these interactions because it provides a faster solution than traditional quantum mechanical approaches---these models are known as machine-learning interatomic potentials (MLIPs)~\cite{Behler2016,Kulichenko2021rise}.

A fundamental component when building MLIPs is ensuring that physical invariants are satisfied~\cite{Behler2016}. Translations, rotations, and permutational symmetries must be respected by the underlying model. Building physical invariants into a model is not only important to ensure that non-physical behaviour does not occur, but also to increase the performance of the model. The required invariants can be built into a model in various ways, and in recent years many members of the MLIP community have moved towards using tensor products~\cite{Shapeev2016,Musaelian2023}. However, how to build the most effective descriptor set from tensor products remains an open question. This is an important consideration as smaller sets result in faster models, and removing redundancy may improve extrapolation performance.

In a companion paper~\cite{allen2025optimal}, the authors have applied the theory from this paper to generate a flexible set of invariants as the descriptors to a state-of-the-art MLIP neural network~\cite{Chigaev2023}. 
They incorporate the theory from this paper into the Hierarchically Interacting Particle Neural Network (HIP-NN)~\cite{lubbers2018hierarchical}, yielding a variant they call Hierarchically Interacting Particle Higher Order Polynomial Neural Network (HIP-HOP-NN) that can include arbitrarily-high body-order information in a single interaction layer. HIP-HOP-NN outperforms state-of-the-art MLIPs neural networks, such as HIP-NN and HIP-NN-TS~\cite{Chigaev2023}, which use incomplete feature sets. In particular, on a large dataset of methane configurations, the HIP-HOP-NN model was found to be roughly twice as accurate as HIP-NN-TS and roughly twenty times as accurate as HIP-NN when using one layer of message passing~\cite{allen2025optimal}. 
This application and associated experiments are described in detail in Allen et al.'s paper~\cite{allen2025optimal}.

\section{Discussion and Conclusion}
We have presented a systematic approach to finding flexible, complete, and independent sets of moment invariants with respect to orthogonal transformations by using the irreducible tensor decomposition in Cartesian coordinates.

Provided that our conjecture holds, i.e., that the set of all tensor contractions is complete, we have shown that it is always possible to construct a basis using all pure invariants, i.e., invariants that are built from a single irreducible tensor, and mixed invariants with no more than two different irreducible tensors and with one of the factors fixed per pattern. We have also introduced the general flexible set, which is the smallest set that remains complete for any input function, which can be applied if no pattern is known a priori.

We have shown that our approach is superior to the previous methods based on spherical harmonics because they do not provide means to eliminate higher-order dependencies~\cite{suk20153d}, and to previous methods based on Cartesian tensors because they are vulnerable to functions that can be split into a product of a radial and a spherical function~\cite{bujack2022systematic}.
Based on empirical evidence, we hypothesize that it is fully flexible to any form of degeneracy in the input function; however, we have yet to prove this.

We have shown that this improvement is relevant in practical applications. In particular in machine learning interatomic potentials, the functions are purely spherical, for which the Cartesian method loses its completeness. The integration of flexible, complete, and independent descriptor sets outperforms state-of-the-art models~\cite{allen2025optimal}.

An open-source implementation for the generation of the invariant feature sets is available at \url{github.com/lanl/rotation-invariant-neural-networks}.

\section*{Acknowledgments}
We would like to thank Kei Davis for his feedback on this manuscript and Pieter Swart for helpful discussions. We gratefully acknowledge the support of the U.S. Department of Energy through the LANL Laboratory Directed Research Development Program under project number 20250145ER for this work. It is published under LA-UR-25-22957.

\bibliography{references}

\begin{thebibliography}{10}
\expandafter\ifx\csname url\endcsname\relax
  \def\url#1{\texttt{#1}}\fi
\expandafter\ifx\csname urlprefix\endcsname\relax\def\urlprefix{URL }\fi
\expandafter\ifx\csname href\endcsname\relax
  \def\href#1#2{#2} \def\path#1{#1}\fi

\bibitem{FSZ16}
J.~Flusser, B.~Zitova, T.~Suk, {2D and 3D Image Analysis by Moments}, John
  Wiley \& Sons, 2016.

\bibitem{Flu00}
J.~Flusser, {On the independence of rotation moment invariants}, Pattern
  Recognition 33~(9) (2000) 1405--1410.

\bibitem{bujack2017flexible}
R.~Bujack, J.~Flusser, Flexible moment invariant bases for {2D} scalar and
  vector fields, in: Proceedings of International Conference in Central Europe
  on Computer Graphics, Visualization and Computer Vision (WSCG), 2017, pp.
  11--20.

\bibitem{bujack2022systematic}
R.~Bujack, X.~Zhang, T.~Suk, D.~Rogers, Systematic generation of moment
  invariant bases for 2d and 3d tensor fields, Pattern Recognition 123 (2022)
  108313.

\bibitem{Hu62}
M.-K. Hu, {Visual pattern recognition by moment invariants}, IRE Transactions
  on Information Theory 8~(2) (1962) 179--187.

\bibitem{Flu02}
J.~Flusser, {On the inverse problem of rotation moment invariants}, Pattern
  Recognition 35 (2002) 3015--3017.

\bibitem{DN77}
H.~Dirilten, T.~G. Newman, Pattern matching under affine transformations,
  Computers, IEEE Transactions on 100~(3) (1977) 314--317.

\bibitem{Suk2011tensor}
T.~Suk, J.~Flusser, {Tensor Method for Constructing 3D Moment Invariants}, in:
  Computer Analysis of Images and Patterns, Vol. 6855 of Lecture Notes in
  Computer Science, Springer Berlin, Heidelberg, 2011, pp. 212--219.

\bibitem{suk2004graph}
T.~Suk, J.~Flusser, Graph method for generating affine moment invariants, in:
  Pattern Recognition, 2004. ICPR 2004. Proceedings of the 17th International
  Conference on, Vol.~2, IEEE, 2004, pp. 192--195.

\bibitem{SukFlu:AMIgraph}
T.~Suk, J.~Flusser, Affine moment invariants generated by graph method, Pattern
  Recognition 44~(9) (2011) 2047--2056.

\bibitem{kostkova2019affine}
J.~Kostkov{\'a}, T.~Suk, J.~Flusser, Affine invariants of vector fields, IEEE
  Transactions on Pattern Analysis and Machine Intelligence 43~(4) (2019)
  1140--1155.

\bibitem{flusser2023affine}
J.~Flusser, T.~Suk, M.~L{\'e}bl, R.~Bujack, I.~Ibrahim, Affine moment
  invariants of tensor fields, in: Scandinavian Conference on Image Analysis,
  Springer, 2023, pp. 299--313.

\bibitem{langbein2009generalization}
M.~Langbein, H.~Hagen, A generalization of moment invariants on 2d vector
  fields to tensor fields of arbitrary order and dimension, in: International
  Symposium on Visual Computing, Springer, 2009, pp. 1151--1160.

\bibitem{hickman2012geometric}
M.~S. Hickman, Geometric moments and their invariants, Journal of Mathematical
  Imaging and Vision 44~(3) (2012) 223--235.

\bibitem{LD89}
C.~Lo, H.~Don, {3-D Moment Forms: Their Construction and Application to Object
  Identification and Positioning}, IEEE Trans. Pattern Anal. Mach. Intell.
  11~(10) (1989) 1053--1064.

\bibitem{BH95}
G.~Burel, H.~Henocq, {3D Invariants and their Application to Object
  Recognition}, Signal procesing 45~(1) (1995) 1--22.

\bibitem{suk20153d}
T.~Suk, J.~Flusser, J.~Boldy{\v{s}}, {3D} rotation invariants by complex
  moments, Pattern Recognition 48~(11) (2015) 3516--3526.

\bibitem{KFR03}
M.~Kazhdan, T.~Funkhouser, S.~Rusinkiewicz, {Rotation Invariant Spherical
  Harmonic Representation of 3D Shape Descriptors}, in: Symposium on Geometry
  Processing, 2003.

\bibitem{xu20063}
D.~Xu, H.~Li, 3-d surface moment invariants, in: 18th International Conference
  on Pattern Recognition (ICPR'06), Vol.~4, IEEE, 2006, pp. 173--176.

\bibitem{tsai2020approaches}
K.~C. Tsai, R.~Bujack, B.~Geveci, U.~Ayachit, J.~P. Ahrens, Approaches for in
  situ computation of moments in a data-parallel environment., in: EGPGV@
  Eurographics/EuroVis, 2020, pp. 57--68.

\bibitem{ahrens2025ecp}
J.~Ahrens, M.~Arienti, U.~Ayachit, J.~Bennett, R.~Binyahib, A.~Biswas, P.-T.
  Bremer, E.~Brugger, R.~Bujack, H.~Carr, et~al., The ecp alpine project: In
  situ and post hoc visualization infrastructure and analysis capabilities for
  exascale, The International Journal of High Performance Computing
  Applications 39~(1) (2025) 32--51.

\bibitem{PCO85}
Z.~Pinjo, D.~Cyganski, J.~A. Orr, {Determination of 3-D object orientation from
  projections}, Pattern Recognition Letters 3~(5) (1985) 351--356.

\bibitem{Cant96}
N.~Canterakis, {Complete moment invariants and pose determination for
  orthogonal transformations of 3D objects}, in: Mustererkennung 1996, 18. DAGM
  Symposium, Informatik aktuell, Springer, 1996, pp. 339--350.

\bibitem{bujack2017tensor}
R.~Bujack, H.~Hagen, {Moment Invariants for Multi-Dimensional Data}, in:
  E.~Ozerslan, T.~Schultz, I.~Hotz (Eds.), {Modelling, Analysis, and
  Visualization of Anisotropy}, Mathematica and Visualization, Springer Basel
  AG, 2017.

\bibitem{allen2025optimal}
A.~E.~A. Allen, E.~Shinkle, R.~Bujack, N.~Lubbers,
  \href{https://arxiv.org/abs/2503.23515}{Optimal invariant bases for atomistic
  machine learning} (2025).
\newblock \href {http://arxiv.org/abs/2503.23515} {\path{arXiv:2503.23515}}.
\newline\urlprefix\url{https://arxiv.org/abs/2503.23515}

\bibitem{jacobi1841determinantibus}
C.~G.~J. Jacobi, De determinantibus functionalibus, Journal f{\"u}r die reine
  und angewandte Mathematik (Crelles Journal) 1841~(22) (1841) 319--359.

\bibitem{ehrenborg1993apolarity}
R.~Ehrenborg, G.-C. Rota, Apolarity and canonical forms for homogeneous
  polynomials, European Journal of Combinatorics 14~(3) (1993) 157--181.

\bibitem{beecken2013algebraic}
M.~Beecken, J.~Mittmann, N.~Saxena, Algebraic independence and blackbox
  identity testing, Information and Computation 222 (2013) 2--19.

\bibitem{coope1965irreducible}
J.~Coope, R.~Snider, F.~McCourt, Irreducible cartesian tensors, The Journal of
  Chemical Physics 43~(7) (1965) 2269--2275.

\bibitem{backus1970geometrical}
G.~Backus, A geometrical picture of anisotropic elastic tensors, Reviews of
  geophysics 8~(3) (1970) 633--671.

\bibitem{hergl2020introduction}
C.~Hergl, T.~Nagel, G.~Scheuermann, An introduction to the deviatoric tensor
  decomposition in three dimensions and its multipole representation, arXiv
  preprint arXiv:2009.11723 (2020).

\bibitem{bujack2015clustering}
R.~Bujack, J.~Kasten, V.~Natarajan, G.~Scheuermann, K.~I. Joy, {Clustering
  Moment Invariants to Identify Similarity within 2D Flow Fields}, in:
  E.~Bertini, J.~Kennedy, E.~Puppo (Eds.), Eurographics Conference on
  Visualization (EuroVis) - Short Papers, The Eurographics Association, 2015,
  pp. 31--35.
\newblock \href {https://doi.org/10.2312/eurovisshort.20151121}
  {\path{doi:10.2312/eurovisshort.20151121}}.

\bibitem{AtomicClusterExpansion}
R.~Drautz, Atomic cluster expansion for accurate and transferable interatomic
  potentials, Physical Review B 99 (2019) 014104.

\bibitem{Ponder2003}
J.~W. Ponder, D.~A. Case, {Force Fields for Protein Simulations}, in: {Protein
  Simulations}, Vol.~66 of {Advances in Protein Chemistry}, Academic Press,
  2003, pp. 27--85.

\bibitem{Frenkel2002}
D.~Frenkel, B.~Smit, Understanding molecular simulation (second edition), in:
  Understanding Molecular Simulation (Second Edition), Academic Press, 2002.

\bibitem{Behler2016}
J.~Behler, Perspective: Machine learning potentials for atomistic simulations,
  J. Chem. Phys 145~(17) (2016) 170901.

\bibitem{Kulichenko2021rise}
M.~Kulichenko, J.~S. Smith, B.~Nebgen, Y.~W. Li, N.~Fedik, A.~I. Boldyrev,
  N.~Lubbers, K.~Barros, S.~Tretiak, The rise of neural networks for materials
  and chemical dynamics, The Journal of Physical Chemistry Letters 12~(26)
  (2021) 6227--6243.

\bibitem{Shapeev2016}
A.~V. Shapeev, Moment tensor potentials: A class of systematically improvable
  interatomic potentials, Multiscale Model. Simul. 14~(3) (2016) 1153--1173.

\bibitem{Musaelian2023}
A.~Musaelian, S.~Batzner, A.~Johansson, L.~Sun, C.~J. Owen, M.~Kornbluth,
  B.~Kozinsky, Learning local equivariant representations for large-scale
  atomistic dynamics, Nat Commun 14~(1) (2023).

\bibitem{Chigaev2023}
M.~Chigaev, J.~S. Smith, S.~Anaya, B.~Nebgen, M.~Bettencourt, K.~Barros,
  N.~Lubbers, {Lightweight and effective tensor sensitivity for atomistic
  neural networks}, J. Chem. Phys. 158~(18) (2023) 184108.

\bibitem{lubbers2018hierarchical}
N.~Lubbers, J.~S. Smith, K.~Barros, Hierarchical modeling of molecular energies
  using a deep neural network, The Journal of chemical physics 148~(24) (2018).

\end{thebibliography}
\clearpage
\appendix

\section{Formulas for Example~\ref{ex:22}}
In this section we provide the the complete list of invariants with
their contraction information as formulas for Example~\ref{ex:22}, in
which the irreducible tensor ${}{^2_2}H$ is robustly non-zero.
Let ${}{^0}M,{}{^1}M,{}{^2}M,{}{^3}M$ be the
moment tensors up to order $l_m=3$. Example~\ref{ex:irreducible} shows
the six irreducible tensors in the decompositions:
${}{^0_0}H,{}{^1_1}H,{}{^2_2}H,{}{_{0}^{2}}H,{}{^3_3}H,{}{_{1}^{3}}H$.

There are always 10 pure invariants, which do not depend on the choice
of the robust irreducible tensor. We then add the seven mixed
invariants that contain ${}{^2_2}H$. This totals 17 invariants,
which correctly matches the predicted number from Table~\ref{t:lm}.

\subsection{Pure Invariants: 1+1+3+5}

\begin{equation}
   \raisebox{-0.4\height}{\includegraphics[height=6\baselineskip]{pics/pure_0_0_0.png}}
   \begin{aligned}
      \quad
      {}{^0_0}H
   \end{aligned}
\end{equation}

\begin{equation}
   \raisebox{-0.4\height}{\includegraphics[height=6\baselineskip]{pics/pure_1_1_0.png}}
   \begin{aligned}
      \quad
      {}{^1_1}H^2(1)  (1)
   \end{aligned}
\end{equation}

\begin{equation}
   \raisebox{-0.4\height}{\includegraphics[height=6\baselineskip]{pics/pure_2_0_0.png}}
   \begin{aligned}
      \quad
      {}{^2_0}H
   \end{aligned}
\end{equation}

\begin{equation}
   \raisebox{-0.4\height}{\includegraphics[height=6\baselineskip]{pics/pure_2_2_0.png}}
   \begin{aligned}
      \quad
      {}{^2_2}H^2(1,2)(1,2)
   \end{aligned}
\end{equation}

\begin{equation}
   \raisebox{-0.4\height}{\includegraphics[height=6\baselineskip]{pics/pure_2_2_1.png}}
   \begin{aligned}
      \quad
      {}{^2_2}H^3(1,2)(2,3)(1,3)
   \end{aligned}
\end{equation}

\begin{equation}
   \raisebox{-0.4\height}{\includegraphics[height=6\baselineskip]{pics/pure_3_1_0.png}}
   \begin{aligned}
      \quad
      {}{^3_1}H^2(1)(1)
   \end{aligned}
\end{equation}

\begin{equation}
   \raisebox{-0.4\height}{\includegraphics[height=6\baselineskip]{pics/pure_3_3_0.png}}
   \begin{aligned}
      \quad
      {}{^3_3}H^2(1,2,3)(1,2,3)
   \end{aligned}
\end{equation}

\begin{equation}
   \raisebox{-0.4\height}{\includegraphics[height=6\baselineskip]{pics/pure_3_3_1.png}}
   \begin{aligned}
      \quad
      {}{^3_3}H^4(1,2,3)  (1,2,4)  (3,5,6)  (4,5,6)
   \end{aligned}
\end{equation}

\begin{equation}
   \raisebox{-0.4\height}{\includegraphics[height=6\baselineskip]{pics/pure_3_3_2.png}}
   \begin{aligned}
      \quad
      {}{^3_3}H^6(1, 2, 3)  (2, 3, 4)  (1, 4, 5)  (5, 6, 7)  (7, 8, 9)  (6, 8, 9)
   \end{aligned}
\end{equation}

\begin{equation}
   \raisebox{-0.4\height}{\includegraphics[height=6\baselineskip]{pics/pure_3_3_3.png}}
   \begin{aligned}
      \quad
      {}{^3_3}H^{10}(1, 2, 3)  (2, 3, 4)  (1, 4, 5)  (5, 6, 7)  (6, 7, 8)  (8, 9, 10) \\
      (9, 10, 11)  (11, 12, 13)  (13, 14, 15)  (12, 14, 15)
   \end{aligned}
\end{equation}

\subsection{Mixed Invariants: 2+2+3}

\begin{equation}
   \raisebox{-0.4\height}{\includegraphics[height=6\baselineskip]{pics/mixed_1_1_2_2_0.png}}
   \begin{aligned}
      \quad
      {}{^1_1}H^2{}{^2_2}H(1)(2)(1,2)
   \end{aligned}
\end{equation}

\begin{equation}
   \raisebox{-0.4\height}{\includegraphics[height=6\baselineskip]{pics/mixed_1_1_2_2_1.png}}
   \begin{aligned}
      \quad
      {}{^1_1}H^2{}{^2_2}H^2(1)(2)(1,3)(2,3)
   \end{aligned}
\end{equation}

\begin{equation}
   \raisebox{-0.4\height}{\includegraphics[height=6\baselineskip]{pics/mixed_3_1_2_2_0.png}}
   \begin{aligned}
      \quad
      {}{^3_1}H^2{}{^2_2}H(1)(2)(1,2)
   \end{aligned}
\end{equation}

\begin{equation}
   \raisebox{-0.4\height}{\includegraphics[height=6\baselineskip]{pics/mixed_3_1_2_2_1.png}}
   \begin{aligned}
      \quad
      {}{^3_1}H^2{}{^2_2}H^2(1)(2)(1,3)(2,3)
   \end{aligned}
\end{equation}

\begin{equation}
   \raisebox{-0.4\height}{\includegraphics[height=6\baselineskip]{pics/mixed_2_2_3_3_0.png}}
   \begin{aligned}
      \quad
      {}{^2_2}H{}{^3_3}H^2(1, 2)  (2, 3, 4)  (1, 3, 4)
   \end{aligned}
\end{equation}

\begin{equation}
   \raisebox{-0.4\height}{\includegraphics[height=6\baselineskip]{pics/mixed_2_2_3_3_1.png}}
   \begin{aligned}
      \quad
      {}{^2_2}H^2{}{^3_3}H^2(1, 2)  (2, 3)  (1, 4, 5)  (3, 4, 5)
   \end{aligned}
\end{equation}

\begin{equation}
   \raisebox{-0.4\height}{\includegraphics[height=6\baselineskip]{pics/mixed_2_2_3_3_2.png}}
   \begin{aligned}
      \quad
      {}{^2_2}H^2{}{^3_3}H^2(1, 2)  (3, 4)  (1, 2, 5)  (3, 4, 5)
   \end{aligned}
\end{equation}

\section{Formulas for Example~\ref{ex:33}}
In this section, we provide the the complete list of invariants with
their contraction information as formulas for Example~\ref{ex:33}, in
which the irreducible tensor ${}{^3_3}H$ is robustly non-zero.

There are always 10 pure invariants, which are the same as in the
previous example. Then we add the seven mixed invariants that contain
${}{^3_3}H$. This totals 17 invariants, which correctly matches
the predicted number from Table~\ref{t:lm}.

\subsection{Mixed Invariants: 2+2+3}

\begin{equation}
   \raisebox{-0.4\height}{\includegraphics[height=6\baselineskip]{pics/mixed_1_1_3_3_0.png}}
   \begin{aligned}
      \quad
      {}{^1_1}H^3{}{^3_3}H(1)(2)(3)(1, 2, 3)
   \end{aligned}
\end{equation}

\begin{equation}
   \raisebox{-0.4\height}{\includegraphics[height=6\baselineskip]{pics/mixed_1_1_3_3_1.png}}
   \begin{aligned}
      \quad
      {}{^1_1}H^2{}{^3_3}H^2(1)  (2)  (1,3,4)  (2,3,4)
   \end{aligned}
\end{equation}

\begin{equation}
   \raisebox{-0.4\height}{\includegraphics[height=6\baselineskip]{pics/mixed_3_1_3_3_0.png}}
   \begin{aligned}
      \quad
      {}{^3_1}H^3{}{^3_3}H(1)(2)(3)(1, 2, 3)
   \end{aligned}
\end{equation}

\begin{equation}
   \raisebox{-0.4\height}{\includegraphics[height=6\baselineskip]{pics/mixed_3_1_3_3_1.png}}
   \begin{aligned}
      \quad
      {}{^3_1}H^2{}{^3_3}H^2(1)  (2)  (1,3,4)  (2,3,4)
   \end{aligned}
\end{equation}

\begin{equation}
   \raisebox{-0.4\height}{\includegraphics[height=6\baselineskip]{pics/mixed_2_2_3_3_0.png}}
   \begin{aligned}
      \quad
      {}{^2_2}H{}{^3_3}H^2(1, 2)  (2, 3, 4)  (1, 3, 4)
   \end{aligned}
\end{equation}

\begin{equation}
   \raisebox{-0.4\height}{\includegraphics[height=6\baselineskip]{pics/mixed_2_2_3_3_1.png}}
   \begin{aligned}
      \quad
      {}{^2_2}H^2{}{^3_3}H^2(1, 2)  (2, 3)  (1, 4, 5)  (3, 4, 5)
   \end{aligned}
\end{equation}

\begin{equation}
   \raisebox{-0.4\height}{\includegraphics[height=6\baselineskip]{pics/mixed_2_2_3_3_2.png}}
   \begin{aligned}
      \quad
      {}{^2_2}H^2{}{^3_3}H^2(1, 2)  (3, 4)  (1, 2, 5)  (3, 4, 5)
   \end{aligned}
\end{equation}

\section{Formulas for Example~\ref{ex:minimal}}
The formulas for the minimal flexible set of Example~\ref{ex:minimal}
are exactly the ones in \ref{a:minimal} up to order $l_m=3$.

\section{Formulas for Example~\ref{ex:spherical22}}
In this section, we provide the complete list of invariants with their
contraction information as formulas for Example~\ref{ex:spherical22},
in which the irreducible tensor ${}{^2_2}H$ is robustly non-zero
for a spherical function. Example~\ref{ex:irreducibleTensors2} showed
the four full-order irreducible tensors in the decompositions:
${}{^0_0}H,{}{^1_1}H,{}{^2_2}H,{}{^3_3}H$.

There are always eight pure invariants, which do not depend on the
choice of the robust irreducible tensor. Then we add the five mixed
invariants that contain ${}{^2_2}H$. This totals 13 invariants,
which correctly matches the predicted number from
Table~\ref{t:spherical}.

\subsection{Pure Invariants: 1+1+2+4}
\begin{equation}
   \raisebox{-0.4\height}{\includegraphics[height=6\baselineskip]{pics/pure_0_0_0.png}}
   \begin{aligned}
      \quad
      {}{^0_0}H
   \end{aligned}
\end{equation}

\begin{equation}
   \raisebox{-0.4\height}{\includegraphics[height=6\baselineskip]{pics/pure_1_1_0.png}}
   \begin{aligned}
      \quad
      {}{^1_1}H^2(1)  (1)
   \end{aligned}
\end{equation}

\begin{equation}
   \raisebox{-0.4\height}{\includegraphics[height=6\baselineskip]{pics/pure_2_2_0.png}}
   \begin{aligned}
      \quad
      {}{^2_2}H^2(1,2)(1,2)
   \end{aligned}
\end{equation}

\begin{equation}
   \raisebox{-0.4\height}{\includegraphics[height=6\baselineskip]{pics/pure_2_2_1.png}}
   \begin{aligned}
      \quad
      {}{^2_2}H^3(1,2)(2,3)(1,3)
   \end{aligned}
\end{equation}

\begin{equation}
   \raisebox{-0.4\height}{\includegraphics[height=6\baselineskip]{pics/pure_3_3_0.png}}
   \begin{aligned}
      \quad
      {}{^3_3}H^2(1,2,3)(1,2,3)
   \end{aligned}
\end{equation}

\begin{equation}
   \raisebox{-0.4\height}{\includegraphics[height=6\baselineskip]{pics/pure_3_3_1.png}}
   \begin{aligned}
      \quad
      {}{^3_3}H^4(1,2,3)  (1,2,4)  (3,5,6)  (4,5,6)
   \end{aligned}
\end{equation}

\begin{equation}
   \raisebox{-0.4\height}{\includegraphics[height=6\baselineskip]{pics/pure_3_3_2.png}}
   \begin{aligned}
      \quad
      {}{^3_3}H^6(1, 2, 3)  (2, 3, 4)  (1, 4, 5)  (5, 6, 7)  (7, 8, 9)  (6, 8, 9)
   \end{aligned}
\end{equation}

\begin{equation}
   \raisebox{-0.4\height}{\includegraphics[height=6\baselineskip]{pics/pure_3_3_3.png}}
   \begin{aligned}
      \quad
      {}{^3_3}H^{10}(1, 2, 3)  (2, 3, 4)  (1, 4, 5)  (5, 6, 7)  (6, 7, 8)  (8, 9, 10) \\
      (9, 10, 11)  (11, 12, 13)  (13, 14, 15)  (12, 14, 15)
   \end{aligned}
\end{equation}


\subsubsection{Mixed Invariants: 2+3}

\begin{equation}
   \raisebox{-0.4\height}{\includegraphics[height=6\baselineskip]{pics/mixed_1_1_2_2_0.png}}
   \begin{aligned}
      \quad
      {}{^1_1}H^2{}{^2_2}H(1)(2)(1,2)
   \end{aligned}
\end{equation}

\begin{equation}
   \raisebox{-0.4\height}{\includegraphics[height=6\baselineskip]{pics/mixed_1_1_2_2_1.png}}
   \begin{aligned}
      \quad
      {}{^1_1}H^2{}{^2_2}H^2(1)(2)(1,3)(2,3)
   \end{aligned}
\end{equation}

\begin{equation}
   \raisebox{-0.4\height}{\includegraphics[height=6\baselineskip]{pics/mixed_2_2_3_3_0.png}}
   \begin{aligned}
      \quad
      {}{^2_2}H{}{^3_3}H^2(1, 2)  (2, 3, 4)  (1, 3, 4)
   \end{aligned}
\end{equation}

\begin{equation}
   \raisebox{-0.4\height}{\includegraphics[height=6\baselineskip]{pics/mixed_2_2_3_3_1.png}}
   \begin{aligned}
      \quad
      {}{^2_2}H^2{}{^3_3}H^2(1, 2)  (2, 3)  (1, 4, 5)  (3, 4, 5)
   \end{aligned}
\end{equation}

\begin{equation}
   \raisebox{-0.4\height}{\includegraphics[height=6\baselineskip]{pics/mixed_2_2_3_3_2.png}}
   \begin{aligned}
      \quad
      {}{^2_2}H^2{}{^3_3}H^2(1, 2)  (3, 4)  (1, 2, 5)  (3, 4, 5)
   \end{aligned}
\end{equation}

\section{Formulas for Example~\ref{ex:minimalSpherical}}
The formulas for the minimal flexible set of
Example~\ref{ex:minimalSpherical} are exactly the ones in
\ref{a:minimalSpherical} up to order $l_m=3$.

\section{Spherical Harmonics and Irreducible Tensors}
There is a well-known relation between totally symmetric tensors and
spherical harmonics~\cite{coope1965irreducible, backus1970geometrical,
   hergl2020introduction}.

Given a 3D symmetric tensor $S$ of order $\ell$, the polynomial
generated by $S$ is defined as
\begin{equation}\label{polynomial}
   \begin{aligned}
      p_S(x)={}{^\ell}S_{i_1...i_\ell}x_{i_1}...x_{i_\ell},
   \end{aligned}
\end{equation}
a homogeneous polynomial of degree $\ell$.  There is a one-to-one
mapping between $S$ and its generated polynomial
\begin{equation}\label{polynomial2}
   \begin{aligned}
      S_{i_1...i_\ell}=\frac{1}{\ell!}\partial x_{i_1}...\partial x_{i_\ell} p_S(x).
   \end{aligned}
\end{equation}
In other words, there is an isomorphism between the linear space of
homogeneous polynomials $\mathcal P^\ell$ of degree $\ell$ and the
linear space of totally symmetric tensors $\mathcal S^\ell$ of order
$\ell$.

Homogeneous polynomials can be interpreted as functions on the
sphere. Harmonic homogeneous polynomials, i.e., polynomials whose
second derivative vanishes,
\begin{equation}\label{harmonic}
   \begin{aligned}
      \nabla^2p(x)=\Delta p(x)= \sum_{i=1}^3\frac{p(x)}{\partial x_{i}^2}=0,
   \end{aligned}
\end{equation}
are called \textbf{spherical harmonics}.  Every homogeneous polynomial
${}{^\ell}p$ of degree $\ell$ can be decomposed into a unique
series of spherical harmonics $h$
\begin{equation}
   \begin{aligned}
      {}{^\ell}p(x)={}{^\ell}h(x)+r^2{}{^{\ell-2}}h(x)+...\, .
   \end{aligned}
\end{equation}

Applying the relation~\eqref{polynomial} to a spherical harmonic
$p_H(x)=h(x)$, we see that their corresponding tensors are irreducible
tensors $H$. Their traces vanish because
\begin{equation}
   \begin{aligned}
      H{^{(1,2)}}_{i_1...i_\ell} & = \sum_{i=1}^3H_{iii_3...i_\ell}
      \\&\overset{\eqref{polynomial2}}= \sum_{i=1}^3\frac{1}{\ell!}\partial x_{i}\partial x_{i}\partial x_{i_3}...\partial x_{i_\ell} p_H(x)
      \\&\overset{\eqref{harmonic}}=\frac{1}{\ell!}\partial x_{i_3}...\partial x_{i_\ell}\Delta p_H(x)
      \\&\overset{\eqref{harmonic}}=0.
   \end{aligned}
\end{equation}
Using the spherical harmonic decomposition of a homogeneous polynomial
and the above isomorphism, we see that every symmetric tensor can be
decomposed into a unique series of irreducible tensors
\begin{equation}
   \begin{aligned}
      {}{^\ell}{M}_{i_1 \dots i_\ell}
      ={}{^\ell_{\ell}}{H}_{i_1 i_\ell}
      +{}{^\ell_{\ell-2}}{H}_{(i_1 \dots i_{\ell-2}} \delta_{i_{\ell-1}i_{\ell})}
      +{}{^\ell_{\ell-4}}{H}_{(i_1 \dots i_{\ell-4}}
      \delta_{i_{\ell-3}i_{\ell-2}}\delta_{i_{\ell-1}i_{\ell})}+\cdots.
   \end{aligned}
\end{equation}
The parentheses $(i_1 ... i_\ell)$ indicate symmetrization
\begin{equation}
   \begin{aligned}
      T_{(i_1 \dots i_\ell)}
      := \frac1{\ell!}
      \sum_{\pi \in S_\ell}
      T_{\pi(i_1)\dots\pi(i_\ell)},
   \end{aligned}
\end{equation}
where the sum is taken over all permutations $\pi$ in the symmetric group $S_\ell$.

\section{The Minimal Flexible Set}\label{a:minimal}
Even though using the minimal flexible set is not the most efficient
solution, it is the absolutely bulletproof solution. Further, it
allows a faster derivation of a basis. Instead of applying Langbein's
algorithm, a basis can be derived from it by removing mixed invariants
until the correct number from Table~\ref{t:lm} is achieved. As we have
seen, this can be done by selecting a robust irreducible tensor and
removing every mixed invariant that does not contain
it. Alternatively, mixed invariants can be removed based on their
orders, numbers of factors, or structure. Using low orders and factors
allows faster evaluation and greater robustness.

Here we present a minimal flexible set up to order six.  The
subsections structure the invariants by order. For each we provide the
irreducible decomposition with the number of summands displayed in the
sub-subsections' names, the pure invariants with the number of them,
the mixed homogeneous invariants with the number of them, and the
mixed simultaneous invariants with the number of them. The numbers
that were expected from Tables~\ref{t:l} and~\ref{t:lm} have always
been found exactly by our algorithm. We also provide the graph
visualization for the invariants.  Note the patterns of mixed
invariants of irreducible tensors of the same orders: it does not
matter which moment tensors they were originally derived from, the
shape of the invariants is always the same. For example, the mixed
homogeneous invariants of moment order three have the same structure
as the mixed simultaneous invariants between any third-rank
irreducible tensor with any first-rank irreducible tensor. Therefore
it is sufficient to look only at the highest-rank irreducible tensor
in each decomposition and its simultaneous invariants to understand
the structure of all invariants. These numbers are depicted in bold
text if the structure is new.

An interesting observation is that the third-order irreducible tensor
is thus far the only one that we have encountered that required an
exponent as large as 10.

An open-source implementation for the generation of the invariant feature sets and the graph visualizations of this paper is publicly available at \url{github.com/lanl/rotation-invariant-neural-networks}.

\subsection{Order 0}

\subsubsection{Irreducible Decomposition: 1}
\begin{equation}
   \begin{aligned}
      {}{^0}M={}{^0_0}H
   \end{aligned}
\end{equation}
\subsubsection{Pure Invariants: \textbf{1}}
\begin{equation}
   \raisebox{-0.4\height}{\includegraphics[height=6\baselineskip]{pure_0_0_0}}
   \begin{aligned}
      \quad
      {}{^0_0}H
   \end{aligned}
\end{equation}


\subsection{Order 1}

\subsubsection{Irreducible Decomposition: 1}
\begin{equation}
   \begin{aligned}
      {}{^1}M_i & ={}{^1_1}H_i
   \end{aligned}
\end{equation}

\subsubsection{Pure Invariants: \textbf{1}}
\begin{equation}
   \raisebox{-0.4\height}{\includegraphics[height=6\baselineskip]{pure_1_1_0}}
   \begin{aligned}
      \quad
      {}{^1_1}H^2(1)  (1)
   \end{aligned}
\end{equation}


\subsection{Order 2}

\subsubsection{Irreducible Decomposition: 2}
\begin{equation}
   \begin{aligned}
      {}{^2}M_{ij} & ={}{^2_2}H_{ij}+{}{_{0}^{2}}H\delta_{ij} \\
   \end{aligned}
\end{equation}

\subsubsection{Pure Invariants: 1+\textbf{2}}
\begin{equation}
   \raisebox{-0.4\height}{\includegraphics[height=6\baselineskip]{pure_2_0_0}}
   \begin{aligned}
      \quad
      {}{^2_0}H
   \end{aligned}
\end{equation}

\begin{equation}
   \raisebox{-0.4\height}{\includegraphics[height=6\baselineskip]{pics/pure_2_2_0.png}}
   \begin{aligned}
      \quad
      {}{^2_2}H^2(1,2)(1,2)
   \end{aligned}
\end{equation}

\begin{equation}
   \raisebox{-0.4\height}{\includegraphics[height=6\baselineskip]{pics/pure_2_2_1.png}}
   \begin{aligned}
      \quad
      {}{^2_2}H^3(1,2)(2,3)(1,3)
   \end{aligned}
\end{equation}


\subsubsection{Mixed Simultaneous 1 2 Invariants: \textbf{2}}
\begin{equation}
   \raisebox{-0.4\height}{\includegraphics[height=6\baselineskip]{pics/mixed_1_1_2_2_0.png}}
   \begin{aligned}
      \quad
      {}{^1_1}H^2{}{^2_2}H(1)(2)(1,2)
   \end{aligned}
\end{equation}

\begin{equation}
   \raisebox{-0.4\height}{\includegraphics[height=6\baselineskip]{pics/mixed_1_1_2_2_1.png}}
   \begin{aligned}
      \quad
      {}{^1_1}H^2{}{^2_2}H^2(1)(2)(1,3)(2,3)
   \end{aligned}
\end{equation}

\subsection{Order 3}

\subsubsection{Irreducible Decomposition: 2}
\begin{equation}
   \begin{aligned}
      {}{^3}M_{ijk} & ={}{^3_3}H_{ijk}+
      {}{^3_{1}}{H}_{(i }
      \delta_{jk)}
   \end{aligned}
\end{equation}


\subsubsection{Pure Invariants: 1+\textbf{4}}
\begin{equation}
   \raisebox{-0.4\height}{\includegraphics[height=6\baselineskip]{pics/pure_3_1_0.png}}
   \begin{aligned}
      \quad
      {}{^3_1}H^2(1)(1)
   \end{aligned}
\end{equation}

\begin{equation}
   \raisebox{-0.4\height}{\includegraphics[height=6\baselineskip]{pics/pure_3_3_0.png}}
   \begin{aligned}
      \quad
      {}{^3_3}H^2(1,2,3)(1,2,3)
   \end{aligned}
\end{equation}

\begin{equation}
   \raisebox{-0.4\height}{\includegraphics[height=6\baselineskip]{pics/pure_3_3_1.png}}
   \begin{aligned}
      \quad
      {}{^3_3}H^4(1,2,3)  (1,2,4)  (3,5,6)  (4,5,6)
   \end{aligned}
\end{equation}

\begin{equation}
   \raisebox{-0.4\height}{\includegraphics[height=6\baselineskip]{pics/pure_3_3_2.png}}
   \begin{aligned}
      \quad
      {}{^3_3}H^6(1, 2, 3)  (2, 3, 4)  (1, 4, 5)  (5, 6, 7)  (7, 8, 9)  (6, 8, 9)
   \end{aligned}
\end{equation}

\begin{equation}
   \raisebox{-0.4\height}{\includegraphics[height=6\baselineskip]{pics/pure_3_3_3.png}}
   \begin{aligned}
      \quad
      {}{^3_3}H^{10}(1, 2, 3)  (2, 3, 4)  (1, 4, 5)  (5, 6, 7)  (6, 7, 8)  (8, 9, 10) \\ (9, 10, 11)  (11, 12, 13)  (13, 14, 15)  (12, 14, 15)
   \end{aligned}
\end{equation}

\subsubsection{Mixed Homogeneous Invariants: \textbf{2}}
\begin{equation}
   \raisebox{-0.4\height}{\includegraphics[height=6\baselineskip]{pics/mixed_3_1_3_3_0.png}}
   \begin{aligned}
      \quad
      {}{^3_1}H^3{}{^3_3}H(1)(2)(3)(1, 2, 3)
   \end{aligned}
\end{equation}

\begin{equation}
   \raisebox{-0.4\height}{\includegraphics[height=6\baselineskip]{mixed_3_1_3_3_1}}
   \begin{aligned}
      \quad
      {}{^3_1}H^2{}{^3_3}H^2(1)  (2)  (1, 3, 4)  (2, 3,4)
   \end{aligned}
\end{equation}

\subsubsection{Mixed Simultaneous 1 3 Invariants: \textbf{1}+2}
\begin{equation}
   \raisebox{-0.4\height}{\includegraphics[height=6\baselineskip]{mixed_1_1_3_1_0}}
   \begin{aligned}
      \quad
      {}{^1_1}H^3{}{^3_1}H(1)(1)
   \end{aligned}
\end{equation}

\begin{equation}
   \raisebox{-0.4\height}{\includegraphics[height=6\baselineskip]{pics/mixed_1_1_3_3_0.png}}
   \begin{aligned}
      \quad
      {}{^1_1}H^3{}{^3_3}H(1)(2)(3)(1, 2, 3)
   \end{aligned}
\end{equation}

\begin{equation}
   \raisebox{-0.4\height}{\includegraphics[height=6\baselineskip]{pics/mixed_1_1_3_3_1.png}}
   \begin{aligned}
      \quad
      {}{^1_1}H^2{}{^3_3}H^2(1)  (2)  (1,3,4)  (2,3,4)
   \end{aligned}
\end{equation}

\subsubsection{Mixed Simultaneous 2 3 Invariants: 2+\textbf{3}}
\begin{equation}
   \raisebox{-0.4\height}{\includegraphics[height=6\baselineskip]{pics/mixed_3_1_2_2_0.png}}
   \begin{aligned}
      \quad
      {}{^3_1}H^2{}{^2_2}H(1)(2)(1,2)
   \end{aligned}
\end{equation}

\begin{equation}
   \raisebox{-0.4\height}{\includegraphics[height=6\baselineskip]{pics/mixed_3_1_2_2_1.png}}
   \begin{aligned}
      \quad
      {}{^3_1}H^2{}{^2_2}H^2(1)(2)(1,3)(2,3)
   \end{aligned}
\end{equation}

\begin{equation}
   \raisebox{-0.4\height}{\includegraphics[height=6\baselineskip]{pics/mixed_2_2_3_3_0.png}}
   \begin{aligned}
      \quad
      {}{^2_2}H{}{^3_3}H^2(1, 2)  (2, 3, 4)  (1, 3, 4)
   \end{aligned}
\end{equation}

\begin{equation}
   \raisebox{-0.4\height}{\includegraphics[height=6\baselineskip]{pics/mixed_2_2_3_3_1.png}}
   \begin{aligned}
      \quad
      {}{^2_2}H^2{}{^3_3}H^2(1, 2)  (2, 3)  (1, 4, 5)  (3, 4, 5)
   \end{aligned}
\end{equation}

\begin{equation}
   \raisebox{-0.4\height}{\includegraphics[height=6\baselineskip]{pics/mixed_2_2_3_3_2.png}}
   \begin{aligned}
      \quad
      {}{^2_2}H^2{}{^3_3}H^2(1, 2)  (3, 4)  (1, 2, 5)  (3, 4, 5)
   \end{aligned}
\end{equation}

\subsection{Order 4}

\subsubsection{Irreducible Decomposition: 3}
\begin{equation}
   \begin{aligned}
      {}{^4}M_{ijkl} & =
      {}{^4_4}H_{ijkl}
      +{}{^4_{2}}{H}_{(ij }  \delta_{kl)}
      +{}{^4_{0}}{H}   \delta_{(ij}\delta_{kl)}
   \end{aligned}
\end{equation}

\subsubsection{Pure Invariants: 1+2+\textbf{6}}
\begin{equation}
   \raisebox{-0.4\height}{\includegraphics[height=6\baselineskip]{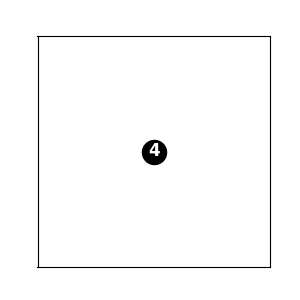}}
   \begin{aligned}
      \quad
      {}{^4_0}H
   \end{aligned}
\end{equation}

\begin{equation}
   \raisebox{-0.4\height}{\includegraphics[height=6\baselineskip]{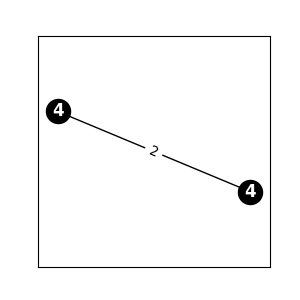}}
   \begin{aligned}
      \quad
      {}{^4_2}H^2(1,2)(1,2)
   \end{aligned}
\end{equation}

\begin{equation}
   \raisebox{-0.4\height}{\includegraphics[height=6\baselineskip]{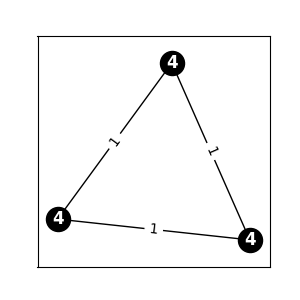}}
   \begin{aligned}
      \quad
      {}{^4_2}H^3(1,2)(2,3)(1,3)
   \end{aligned}
\end{equation}

\begin{equation}
   \raisebox{-0.4\height}{\includegraphics[height=6\baselineskip]{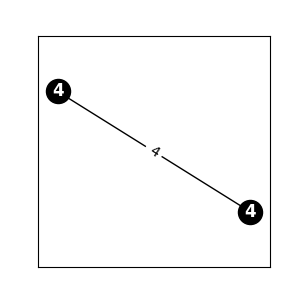}}
   \begin{aligned}
      \quad
      {}{^4_4}H^2(1, 2, 3, 4)  (1, 2, 3, 4)
   \end{aligned}
\end{equation}

\begin{equation}
   \raisebox{-0.4\height}{\includegraphics[height=6\baselineskip]{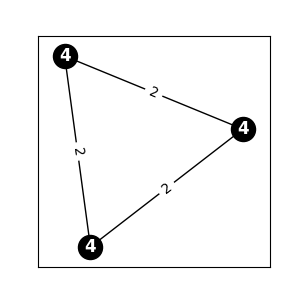}}
   \begin{aligned}
      \quad
      {}{^4_4}H^3(1, 2, 3, 4)  (3, 4, 5, 6)  (1, 2, 5, 6)
   \end{aligned}
\end{equation}

\begin{equation}
   \raisebox{-0.4\height}{\includegraphics[height=6\baselineskip]{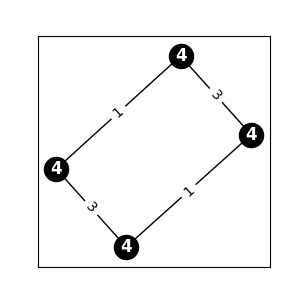}}
   \begin{aligned}
      \quad
      {}{^4_4}H^4(1, 2, 3, 4)  (2, 3, 4, 5)  (1, 6, 7, 8)  (5, 6, 7, 8)
   \end{aligned}
\end{equation}

\begin{equation}
   \raisebox{-0.4\height}{\includegraphics[height=6\baselineskip]{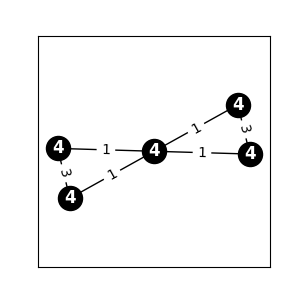}}
   \begin{aligned}
      \quad
      {}{^4_4}H^5(1, 2, 3, 4)  (2, 3, 4, 5)  (1, 5, 6, 7)  (7, 8, 9, 10)  (6, 8, 9, 10)
   \end{aligned}
\end{equation}

\begin{equation}
   \raisebox{-0.4\height}{\includegraphics[height=6\baselineskip]{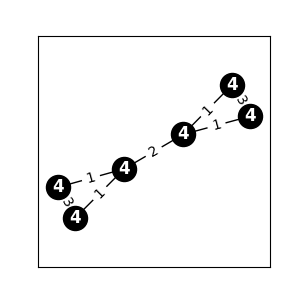}}
   \begin{aligned}
      \quad
      {}{^4_4}H^6(1, 2, 3, 4)
      (2, 3, 4, 5)
      (1, 5, 6, 7)
      (6, 7, 8, 9) \\
      (9, 10, 11, 12)
      (8, 10, 11, 12)
   \end{aligned}
\end{equation}

\begin{equation}
   \raisebox{-0.4\height}{\includegraphics[height=6\baselineskip]{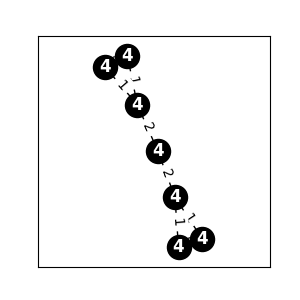}}
   \begin{aligned}
      \quad
      {}{^4_4}H^7(1, 2, 3, 4)
      (2, 3, 4, 5)
      (1, 5, 6, 7)
      (6, 7, 8, 9) \\
      (8, 9, 10, 11)
      (11, 12, 13, 14)
      (10, 12, 13, 14)
   \end{aligned}
\end{equation}

\subsubsection{Mixed Homogeneous Invariants: \textbf{3}}
\begin{equation}
   \raisebox{-0.4\height}{\includegraphics[height=6\baselineskip]{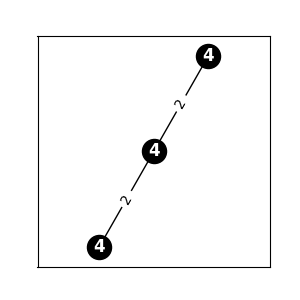}}
   \begin{aligned}
      \quad
      {}{^4_2}H^2{}{^4_4}H(1, 2)  (3, 4)  (1, 2, 3, 4)
   \end{aligned}
\end{equation}

\begin{equation}
   \raisebox{-0.4\height}{\includegraphics[height=6\baselineskip]{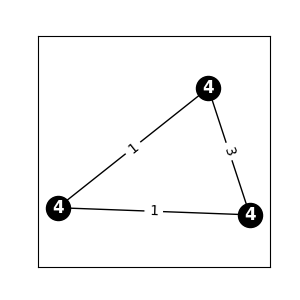}}
   \begin{aligned}
      \quad
      {}{^4_2}H{}{^4_4}H^2(1, 2)  (2, 3, 4, 5)  (1, 3, 4, 5)
   \end{aligned}
\end{equation}

\begin{equation}
   \raisebox{-0.4\height}{\includegraphics[height=6\baselineskip]{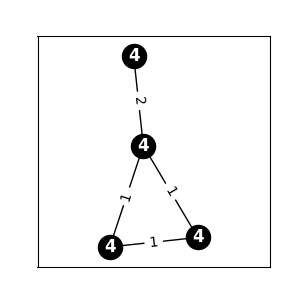}}
   \begin{aligned}
      \quad
      {}{^4_2}H^3{}{^4_4}H(1, 2)  (2, 3)  (4, 5)  (1, 3, 4, 5)
   \end{aligned}
\end{equation}

\subsubsection{Mixed Simultaneous 1 4 Invariants: \textbf{2}}
\begin{equation}
   \raisebox{-0.4\height}{\includegraphics[height=6\baselineskip]{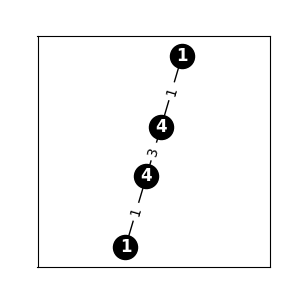}}
   \begin{aligned}
      \quad
      {}{^1_1}H^2{}{^4_4}H^2(1)  (2)  (1, 3, 4, 5)  (2, 3, 4, 5)
   \end{aligned}
\end{equation}

\begin{equation}
   \raisebox{-0.4\height}{\includegraphics[height=6\baselineskip]{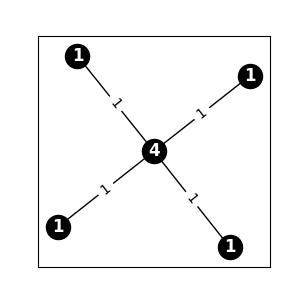}}
   \begin{aligned}
      \quad
      {}{^1_1}H^4{}{^4_4}H(1)  (2)  (3)  (4)  (1, 2, 3, 4)
   \end{aligned}
\end{equation}

\subsubsection{Mixed Simultaneous 2 4 Invariants: \textbf{3}+3}
\begin{equation}
   \raisebox{-0.4\height}{\includegraphics[height=6\baselineskip]{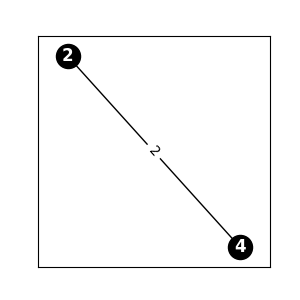}}
   \begin{aligned}
      \quad
      {}{^2_2}H{}{^4_2}H(1,2)(1,2)
   \end{aligned}
\end{equation}

\begin{equation}
   \raisebox{-0.4\height}{\includegraphics[height=6\baselineskip]{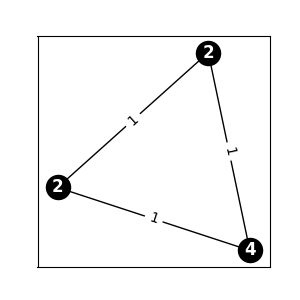}}
   \begin{aligned}
      \quad
      {}{^2_2}H^2{}{^4_2}H(1,2)(2,3)(1,3)
   \end{aligned}
\end{equation}

\begin{equation}
   \raisebox{-0.4\height}{\includegraphics[height=6\baselineskip]{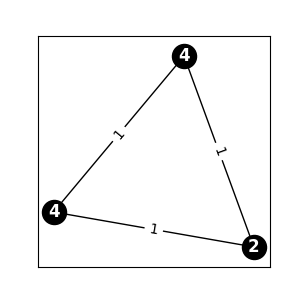}}
   \begin{aligned}
      \quad
      {}{^2_2}H{}{^4_2}H^2(1,2)(2,3)(1,3)
   \end{aligned}
\end{equation}

\begin{equation}
   \raisebox{-0.4\height}{\includegraphics[height=6\baselineskip]{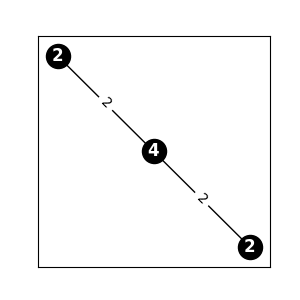}}
   \begin{aligned}
      \quad
      {}{^2_2}H^2{}{^4_4}H(1, 2)  (3, 4)  (1, 2, 3, 4)
   \end{aligned}
\end{equation}

\begin{equation}
   \raisebox{-0.4\height}{\includegraphics[height=6\baselineskip]{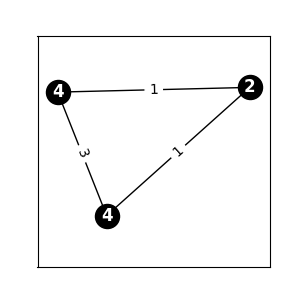}}
   \begin{aligned}
      \quad
      {}{^2_2}H{}{^4_4}H^2(1, 2)  (2, 3, 4, 5)  (1, 3, 4, 5)
   \end{aligned}
\end{equation}

\begin{equation}
   \raisebox{-0.4\height}{\includegraphics[height=6\baselineskip]{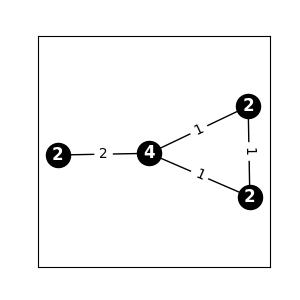}}
   \begin{aligned}
      \quad
      {}{^2_2}H^3{}{^4_4}H(1, 2)  (2, 3)  (4, 5)  (1, 3, 4, 5)
   \end{aligned}
\end{equation}

\subsubsection{Mixed Simultaneous 3 4 Invariants: 2+2+3+\textbf{3}}
\begin{equation}
   \raisebox{-0.4\height}{\includegraphics[height=6\baselineskip]{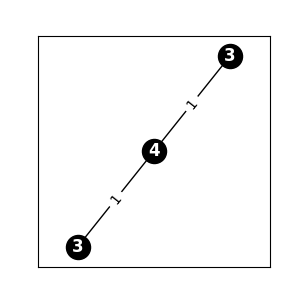}}
   \begin{aligned}
      \quad
      {}{^3_1}H^2{}{^4_2}H(1)(2)(1,2)
   \end{aligned}
\end{equation}

\begin{equation}
   \raisebox{-0.4\height}{\includegraphics[height=6\baselineskip]{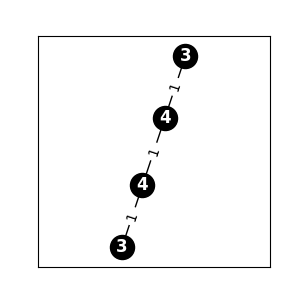}}
   \begin{aligned}
      \quad
      {}{^3_1}H^2{}{^4_2}H^2(1)(2)(1,3)(2,3)
   \end{aligned}
\end{equation}

\begin{equation}
   \raisebox{-0.4\height}{\includegraphics[height=6\baselineskip]{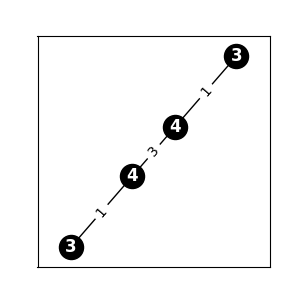}}
   \begin{aligned}
      \quad
      {}{^3_1}H^2{}{^4_4}H^2(1)  (2)  (1, 3, 4, 5)  (2, 3, 4, 5)
   \end{aligned}
\end{equation}

\begin{equation}
   \raisebox{-0.4\height}{\includegraphics[height=6\baselineskip]{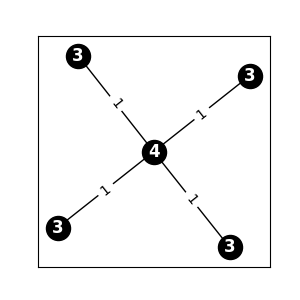}}
   \begin{aligned}
      \quad
      {}{^3_1}H^4{}{^4_4}H(1)  (2)  (3)  (4)  (1, 2, 3, 4)
   \end{aligned}
\end{equation}

\begin{equation}
   \raisebox{-0.4\height}{\includegraphics[height=6\baselineskip]{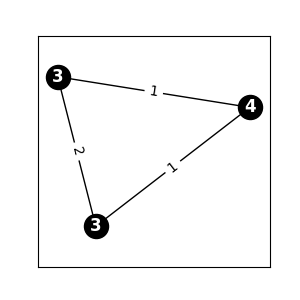}}
   \begin{aligned}
      \quad
      {}{^4_2}H{}{^3_3}H^2(1, 2)  (2, 3, 4)  (1, 3, 4)
   \end{aligned}
\end{equation}

\begin{equation}
   \raisebox{-0.4\height}{\includegraphics[height=6\baselineskip]{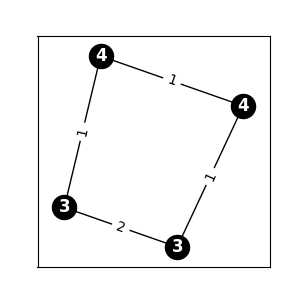}}
   \begin{aligned}
      \quad
      {}{^4_2}H^2{}{^3_3}H^2(1, 2)  (2, 3)  (1, 4, 5)  (3, 4, 5)
   \end{aligned}
\end{equation}

\begin{equation}
   \raisebox{-0.4\height}{\includegraphics[height=6\baselineskip]{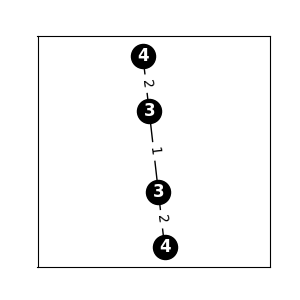}}
   \begin{aligned}
      \quad
      {}{^4_2}H^2{}{^3_3}H^2(1, 2)  (3, 4)  (1, 2, 5)  (3, 4, 5)
   \end{aligned}
\end{equation}

\begin{equation}
   \raisebox{-0.4\height}{\includegraphics[height=6\baselineskip]{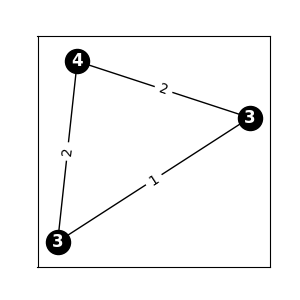}}
   \begin{aligned}
      \quad
      {}{^3_3}H^2{}{^4_4}H(1, 2, 3)  (3, 4, 5)  (1, 2, 4, 5)
   \end{aligned}
\end{equation}

\begin{equation}
   \raisebox{-0.4\height}{\includegraphics[height=6\baselineskip]{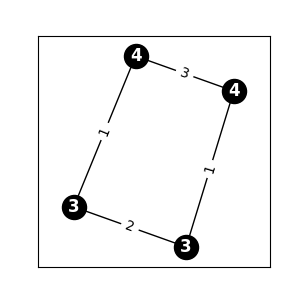}}
   \begin{aligned}
      \quad
      {}{^3_3}H^2{}{^4_4}H^2(1, 2, 3)  (2, 3, 4)  (1, 5, 6, 7)  (4, 5, 6, 7)
   \end{aligned}
\end{equation}

\begin{equation}
   \raisebox{-0.4\height}{\includegraphics[height=6\baselineskip]{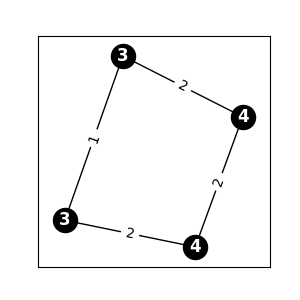}}
   \begin{aligned}
      \quad
      {}{^3_3}H^2{}{^4_4}H^2(1, 2, 3)  (3, 4, 5)  (1, 2, 6, 7)  (4, 5, 6, 7)
   \end{aligned}
\end{equation}

\subsection{Order 5}

\subsubsection{Irreducible Decomposition: 3}
\begin{equation}
   \begin{aligned}
      {}{^5} M_{ijklm}
       & =
      {}{^5_5} H_{ijklm}
      +
      {}{5_3} H_{(ijk}\delta_{lm)}
      +
      {}{^5_1} H_{(i}\delta_{jk}\delta_{lm)}
   \end{aligned}
\end{equation}

\subsubsection{Pure Invariants: 1+4+\textbf{8}}
\begin{equation}
   \raisebox{-0.4\height}{\includegraphics[height=6\baselineskip]{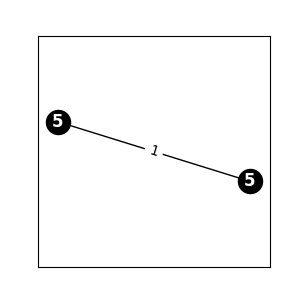}}
   \begin{aligned}
      \quad
      {}{^5_1}H^2(1)(1)
   \end{aligned}
\end{equation}

\begin{equation}
   \raisebox{-0.4\height}{\includegraphics[height=6\baselineskip]{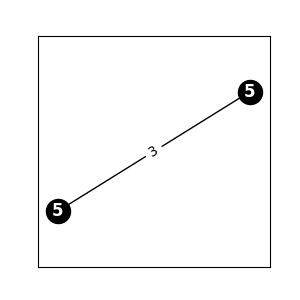}}
   \begin{aligned}
      \quad
      {}{^5_3}H^2(1,2,3)(1,2,3)
   \end{aligned}
\end{equation}

\begin{equation}
   \raisebox{-0.4\height}{\includegraphics[height=6\baselineskip]{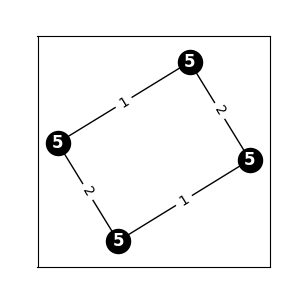}}
   \begin{aligned}
      \quad
      {}{^5_3}H^4(1,2,3)  (1,2,4)  (3,5,6)  (4,5,6)
   \end{aligned}
\end{equation}

\begin{equation}
   \raisebox{-0.4\height}{\includegraphics[height=6\baselineskip]{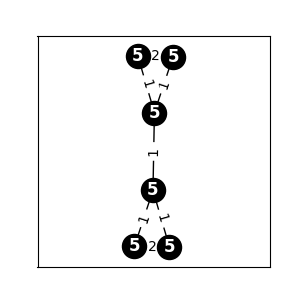}}
   \begin{aligned}
      \quad
      {}{^5_3}H^6(1, 2, 3)  (2, 3, 4)  (1, 4, 5)  (5, 6, 7)  (7, 8, 9)  (6, 8, 9)
   \end{aligned}
\end{equation}

\begin{equation}
   \raisebox{-0.4\height}{\includegraphics[height=6\baselineskip]{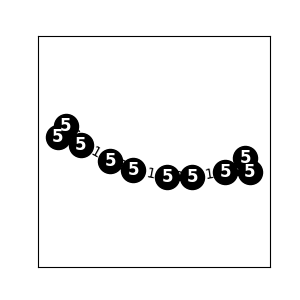}}
   \begin{aligned}
      \quad
      {}{^5_3}H^{10}(1, 2, 3)  (2, 3, 4)  (1, 4, 5)  (5, 6, 7)  (6, 7, 8)  (8, 9, 10) \\ (9, 10, 11)  (11, 12, 13)  (13, 14, 15)  (12, 14, 15)
   \end{aligned}
\end{equation}

\begin{equation}
   \raisebox{-0.4\height}{\includegraphics[height=6\baselineskip]{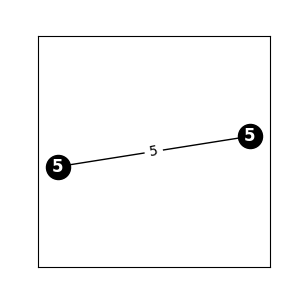}}
   \begin{aligned}
      \quad
      {}{^5_5}H^2(1, 2, 3, 4, 5)  (1, 2, 3, 4, 5)
   \end{aligned}
\end{equation}

\begin{equation}
   \raisebox{-0.4\height}{\includegraphics[height=6\baselineskip]{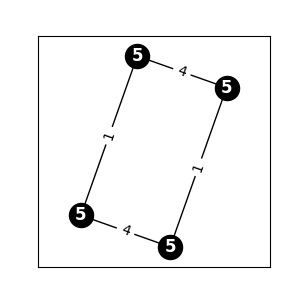}}
   \begin{aligned}
      \quad
      {}{^5_5}H^4(1, 2, 3, 4, 5)  (2, 3, 4, 5, 6)  (1, 7, 8, 9, 10)  (6, 7, 8, 9, 10)
   \end{aligned}
\end{equation}

\begin{equation}
   \raisebox{-0.4\height}{\includegraphics[height=6\baselineskip]{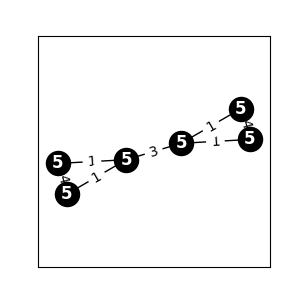}}
   \begin{aligned}
      \quad
      {}{^5_5}H^6(1, 2, 3, 4, 5)
      (2, 3, 4, 5, 6)
      (1, 6, 7, 8, 9)
      (7, 8, 9, 10, 11) \\
      (11, 12, 13, 14, 15)
      (10, 12, 13, 14, 15)
   \end{aligned}
\end{equation}

\begin{equation}
   \raisebox{-0.4\height}{\includegraphics[height=6\baselineskip]{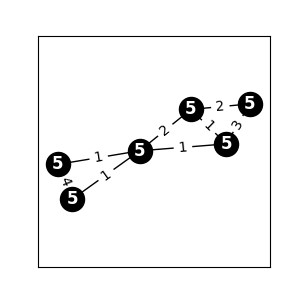}}
   \begin{aligned}
      \quad
      {}{^5_5}H^6(1, 2, 3, 4, 5)
      (3, 4, 5, 6, 7)
      (1, 2, 7, 8, 9)
      (6, 8, 9, 10, 11) \\
      (11, 12, 13, 14, 15)
      (10, 12, 13, 14, 15)
   \end{aligned}
\end{equation}

\begin{equation}
   \raisebox{-0.4\height}{\includegraphics[height=6\baselineskip]{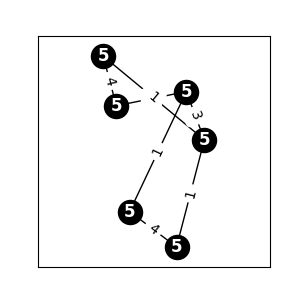}}
   \begin{aligned}
      \quad
      {}{^5_5}H^6(1, 2, 3, 4, 5)
      (2, 3, 4, 5, 6)
      (1, 7, 8, 9, 10)
      (6, 8, 9, 10, 11) \\
      (7, 12, 13, 14, 15)
      (11, 12, 13, 14, 15)
   \end{aligned}
\end{equation}

\begin{equation}
   \raisebox{-0.4\height}{\includegraphics[height=6\baselineskip]{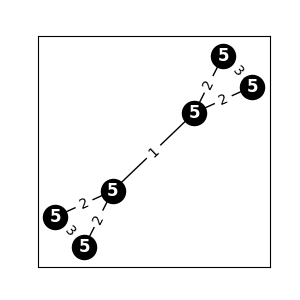}}
   \begin{aligned}
      \quad
      {}{^5_5}H^6(1, 2, 3, 4, 5)
      (3, 4, 5, 6, 7)
      (1, 2, 6, 7, 8)
      (8, 9, 10, 11, 12) \\
      (11, 12, 13, 14, 15)
      (9, 10, 13, 14, 15)
   \end{aligned}
\end{equation}

\begin{equation}
   \raisebox{-0.4\height}{\includegraphics[height=6\baselineskip]{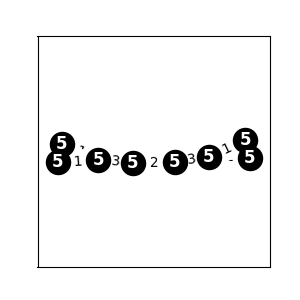}}
   \begin{aligned}
      \quad
      {}{^5_5}H^8(1, 2, 3, 4, 5)
      (2, 3, 4, 5, 6)
      (1, 6, 7, 8, 9)
      (7, 8, 9, 10, 11)    \\
      (10, 11, 12, 13, 14)
      (12, 13, 14, 15, 16) \\
      (16, 17, 18, 19, 20)
      (15, 17, 18, 19, 20)
   \end{aligned}
\end{equation}

\begin{equation}
   \raisebox{-0.4\height}{\includegraphics[height=6\baselineskip]{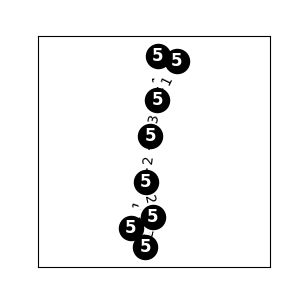}}
   \begin{aligned}
      \quad
      {}{^5_5}H^8(0, 1, 2, 3, 4)  (2, 3, 4, 5, 6)  (0, 1, 6, 7, 8)  (5, 7, 8, 9, 10) \\
      (9, 10, 11, 12, 13)  (11, 12, 13, 14, 15)                                            \\ (15, 16, 17, 18, 19)  (14, 16, 17, 18, 19)
   \end{aligned}
\end{equation}

\subsubsection{Mixed Homogeneous Invariants: 2+\textbf{2}+\textbf{3}}
\begin{equation}
   \raisebox{-0.4\height}{\includegraphics[height=6\baselineskip]{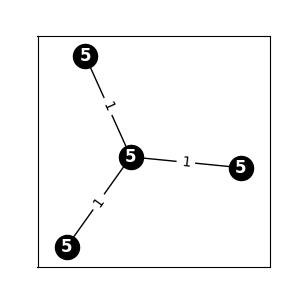}}
   \begin{aligned}
      \quad
      {}{^5_1}H^3{}{^5_3}H(1)(2)(3)(1, 2, 3)
   \end{aligned}
\end{equation}

\begin{equation}
   \raisebox{-0.4\height}{\includegraphics[height=6\baselineskip]{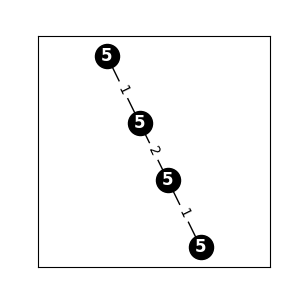}}
   \begin{aligned}
      \quad
      {}{^5_1}H^2{}{^5_3}H^2(1)  (2)  (1,3,4)  (2,3,4)
   \end{aligned}
\end{equation}

\begin{equation}
   \raisebox{-0.4\height}{\includegraphics[height=6\baselineskip]{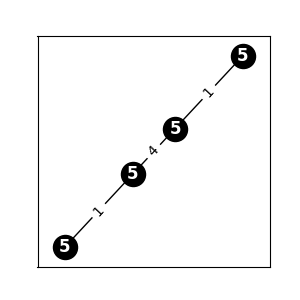}}
   \begin{aligned}
      \quad
      {}{^5_1}H^5{}{^5_5}H^2(1)(2)(3)(4)(5)(1, 2, 3,4,5)
   \end{aligned}
\end{equation}

\begin{equation}
   \raisebox{-0.4\height}{\includegraphics[height=6\baselineskip]{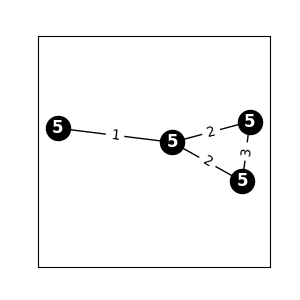}}
   \begin{aligned}
      \quad
      {}{^5_1}H^2{}{^5_5}H^2(1)(2)(1, 3, 4, 5, 6)(2, 3, 4, 5, 6)
   \end{aligned}
\end{equation}

\begin{equation}
   \raisebox{-0.4\height}{\includegraphics[height=6\baselineskip]{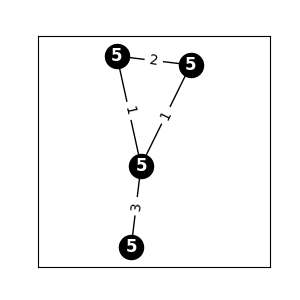}}
   \begin{aligned}
      \quad
      {}{^5_3}H^3{}{^5_5}H(1, 2, 3)  (2, 3, 4)  (5, 6, 7)  (1, 4, 5, 6, 7)
   \end{aligned}
\end{equation}

\begin{equation}
   \raisebox{-0.4\height}{\includegraphics[height=6\baselineskip]{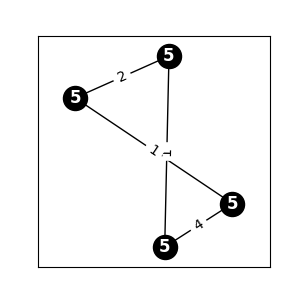}}
   \begin{aligned}
      \quad
      {}{^5_3}H^2{}{^5_5}H^2(1, 2, 3)  (2, 3, 4)  (1, 5, 6, 7, 8)  (4, 5, 6, 7, 8)
   \end{aligned}
\end{equation}

\begin{equation}
   \raisebox{-0.4\height}{\includegraphics[height=6\baselineskip]{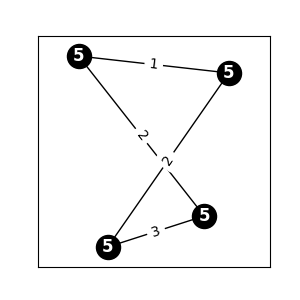}}
   \begin{aligned}
      \quad
      {}{^5_3}H^2{}{^5_5}H^2(1, 2, 3)  (3, 4, 5)  (1, 2, 6, 7, 8)  (4, 5, 6, 7, 8)
   \end{aligned}
\end{equation}

\subsubsection{Mixed Simultaneous 1 5 Invariants: \textbf{1}+2+2}
\begin{equation}
   \raisebox{-0.4\height}{\includegraphics[height=6\baselineskip]{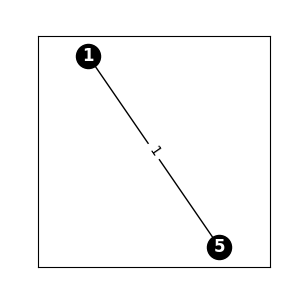}}
   \begin{aligned}
      \quad
      {}{^1_1}H{}{^5_1}H(1)(1)
   \end{aligned}
\end{equation}

\begin{equation}
   \raisebox{-0.4\height}{\includegraphics[height=6\baselineskip]{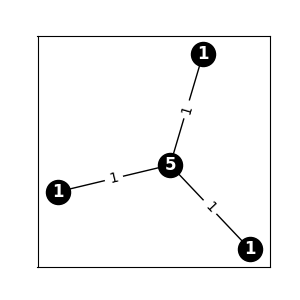}}
   \begin{aligned}
      \quad
      {}{^1_1}H^3{}{^5_3}H(1)(2)(3)(1, 2, 3)
   \end{aligned}
\end{equation}

\begin{equation}
   \raisebox{-0.4\height}{\includegraphics[height=6\baselineskip]{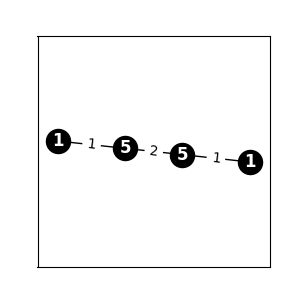}}
   \begin{aligned}
      \quad
      {}{^1_1}H^2{}{^5_3}H^2(1)  (2)  (1,3,4)  (2,3,4)
   \end{aligned}
\end{equation}

\begin{equation}
   \raisebox{-0.4\height}{\includegraphics[height=6\baselineskip]{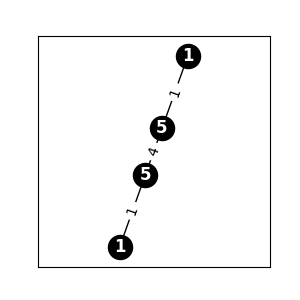}}
   \begin{aligned}
      \quad
      {}{^1_1}H^2{}{^5_5}H^2(1)(2)(1, 3, 4, 5, 6)(2, 3, 4, 5, 6)
   \end{aligned}
\end{equation}

\begin{equation}
   \raisebox{-0.4\height}{\includegraphics[height=6\baselineskip]{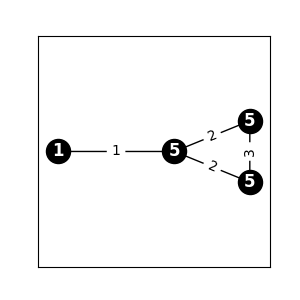}}
   \begin{aligned}
      \quad
      {}{^1_1}H{}{^5_5}H^3(1)(1, 2, 3, 4, 5)(4, 5, 6, 7, 8)(2, 3, 6, 7, 8)
   \end{aligned}
\end{equation}

\subsubsection{Mixed Simultaneous 2 5 Invariants: 2+3+\textbf{3}}
\begin{equation}
   \raisebox{-0.4\height}{\includegraphics[height=6\baselineskip]{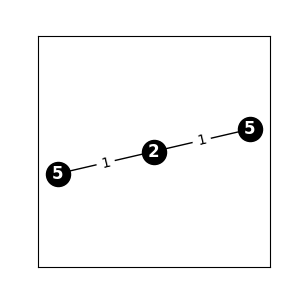}}
   \begin{aligned}
      \quad
      {}{^5_1}H^2{}{^2_2}H(1)(2)(1,2)
   \end{aligned}
\end{equation}

\begin{equation}
   \raisebox{-0.4\height}{\includegraphics[height=6\baselineskip]{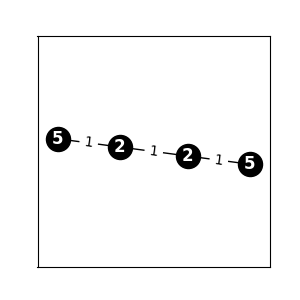}}
   \begin{aligned}
      \quad
      {}{^5_1}H^2{}{^2_2}H^2(1)(2)(1,3)(2,3)
   \end{aligned}
\end{equation}

\begin{equation}
   \raisebox{-0.4\height}{\includegraphics[height=6\baselineskip]{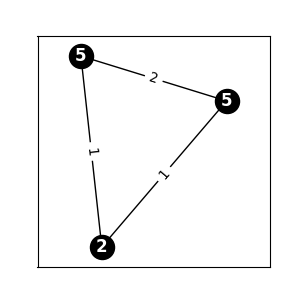}}
   \begin{aligned}
      \quad
      {}{^2_2}H{}{^5_3}H^2(1, 2)  (2, 3, 4)  (1, 3, 4)
   \end{aligned}
\end{equation}

\begin{equation}
   \raisebox{-0.4\height}{\includegraphics[height=6\baselineskip]{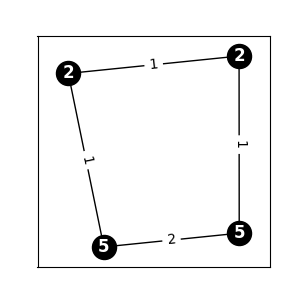}}
   \begin{aligned}
      \quad
      {}{^2_2}H^2{}{^5_3}H^2(1, 2)  (2, 3)  (1, 4, 5)  (3, 4, 5)
   \end{aligned}
\end{equation}

\begin{equation}
   \raisebox{-0.4\height}{\includegraphics[height=6\baselineskip]{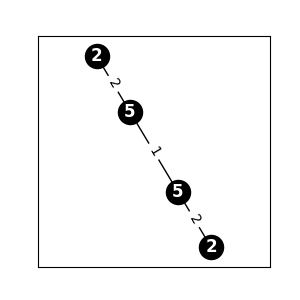}}
   \begin{aligned}
      \quad
      {}{^2_2}H^2{}{^5_3}H^2(1, 2)  (3, 4)  (1, 2, 5)  (3, 4, 5)
   \end{aligned}
\end{equation}

\begin{equation}
   \raisebox{-0.4\height}{\includegraphics[height=6\baselineskip]{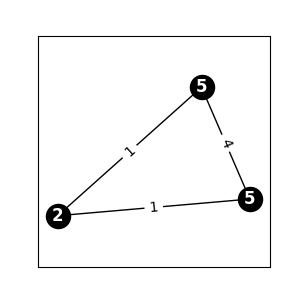}}
   \begin{aligned}
      \quad
      {}{^2_2}H{}{^5_5}H^2(1, 2)  (2, 3, 4, 5, 6)  (1, 3, 4, 5, 6)
   \end{aligned}
\end{equation}

\begin{equation}
   \raisebox{-0.4\height}{\includegraphics[height=6\baselineskip]{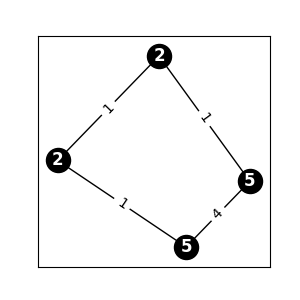}}
   \begin{aligned}
      \quad
      {}{^2_2}H^2{}{^5_5}H^2(1, 2)  (2, 3)  (1, 4, 5, 6, 7)  (3, 4, 5, 6, 7)
   \end{aligned}
\end{equation}

\begin{equation}
   \raisebox{-0.4\height}{\includegraphics[height=6\baselineskip]{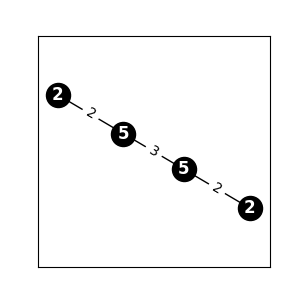}}
   \begin{aligned}
      \quad
      {}{^2_2}H^2{}{^5_5}H^2(1, 2)  (3, 4)  (1, 2, 5, 6, 7)  (3, 4, 5, 6, 7)
   \end{aligned}
\end{equation}

\subsubsection{Mixed Simultaneous 3 5 Invariants: 1+2+2+2+\textbf{3}+3}
\begin{equation}
   \raisebox{-0.4\height}{\includegraphics[height=6\baselineskip]{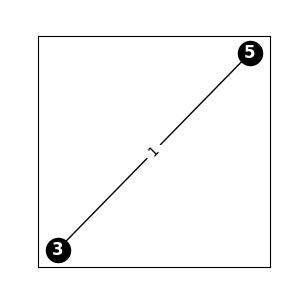}}
   \begin{aligned}
      \quad
      {}{^3_1}H^3{}{^5_1}H(1)(1)
   \end{aligned}
\end{equation}

\begin{equation}
   \raisebox{-0.4\height}{\includegraphics[height=6\baselineskip]{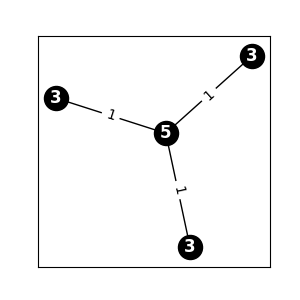}}
   \begin{aligned}
      \quad
      {}{^3_1}H^3{}{^5_3}H(1)(2)(3)(1, 2, 3)
   \end{aligned}
\end{equation}

\begin{equation}
   \raisebox{-0.4\height}{\includegraphics[height=6\baselineskip]{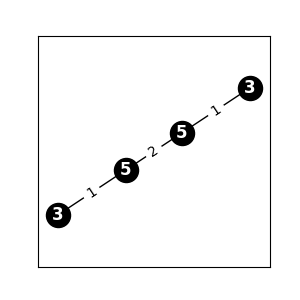}}
   \begin{aligned}
      \quad
      {}{^3_1}H^2{}{^5_3}H^2(1)  (2)  (1,3,4)  (2,3,4)
   \end{aligned}
\end{equation}

\begin{equation}
   \raisebox{-0.4\height}{\includegraphics[height=6\baselineskip]{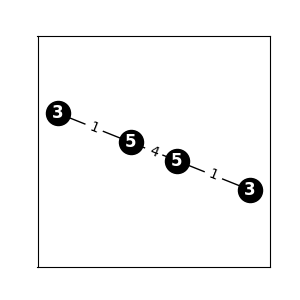}}
   \begin{aligned}
      \quad
      {}{^3_1}H^2{}{^5_5}H^2(1)(2)(1, 3, 4, 5, 6)(2, 3, 4, 5, 6)
   \end{aligned}
\end{equation}

\begin{equation}
   \raisebox{-0.4\height}{\includegraphics[height=6\baselineskip]{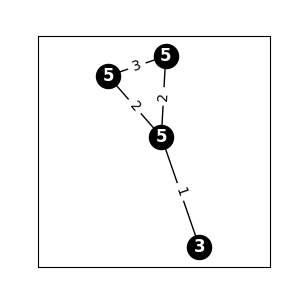}}
   \begin{aligned}
      \quad
      {}{^3_1}H{}{^5_5}H^3(1)(1, 2, 3, 4, 5)(4, 5, 6, 7, 8)(2, 3, 6, 7, 8)
   \end{aligned}
\end{equation}

\begin{equation}
   \raisebox{-0.4\height}{\includegraphics[height=6\baselineskip]{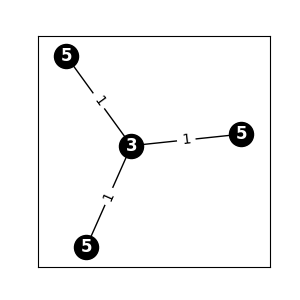}}
   \begin{aligned}
      \quad
      {}{^5_1}H^3{}{^3_3}H(1)(2)(3)(1, 2, 3)
   \end{aligned}
\end{equation}

\begin{equation}
   \raisebox{-0.4\height}{\includegraphics[height=6\baselineskip]{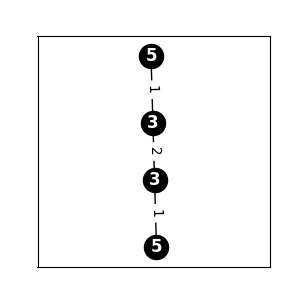}}
   \begin{aligned}
      \quad
      {}{^5_1}H^2{}{^3_3}H^2(1)  (2)  (1,3,4)  (2,3,4)
   \end{aligned}
\end{equation}

\begin{equation}
   \raisebox{-0.4\height}{\includegraphics[height=6\baselineskip]{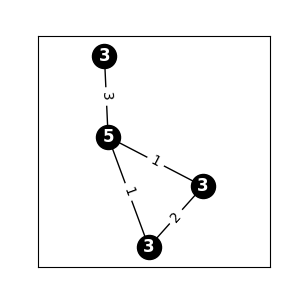}}
   \begin{aligned}
      \quad
      {}{^3_3}H^3{}{^5_3}H
   \end{aligned}
\end{equation}

\begin{equation}
   \raisebox{-0.4\height}{\includegraphics[height=6\baselineskip]{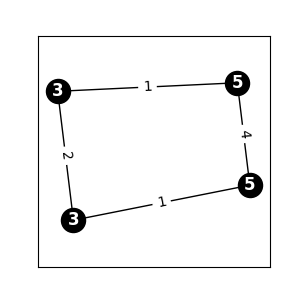}}
   \begin{aligned}
      \quad
      {}{^3_3}H^2{}{^5_3}H^2
   \end{aligned}
\end{equation}

\begin{equation}
   \raisebox{-0.4\height}{\includegraphics[height=6\baselineskip]{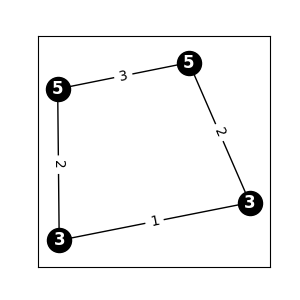}}
   \begin{aligned}
      \quad
      {}{^3_3}H{}{^5_3}H^3
   \end{aligned}
\end{equation}

\begin{equation}
   \raisebox{-0.4\height}{\includegraphics[height=6\baselineskip]{pics/mixed_3_3_5_5_0.png}}
   \begin{aligned}
      \quad
      {}{^3_3}H^3{}{^5_5}H(1, 2, 3)  (2, 3, 4)  (5, 6, 7)  (1, 4, 5, 6, 7)
   \end{aligned}
\end{equation}

\begin{equation}
   \raisebox{-0.4\height}{\includegraphics[height=6\baselineskip]{pics/mixed_3_3_5_5_1.png}}
   \begin{aligned}
      \quad
      {}{^3_3}H^2{}{^5_5}H^2(1, 2, 3)  (2, 3, 4)  (1, 5, 6, 7, 8)  (4, 5, 6, 7, 8)
   \end{aligned}
\end{equation}

\begin{equation}
   \raisebox{-0.4\height}{\includegraphics[height=6\baselineskip]{pics/mixed_3_3_5_5_2.png}}
   \begin{aligned}
      \quad
      {}{^3_3}H^2{}{^5_5}H^2(1, 2, 3)  (3, 4, 5)  (1, 2, 6, 7, 8)  (4, 5, 6, 7, 8)
   \end{aligned}
\end{equation}

\subsubsection{Mixed Simultaneous 4 5 Invariants: 2+3+3+2+3+\textbf{3}}
\begin{equation}
   \raisebox{-0.4\height}{\includegraphics[height=6\baselineskip]{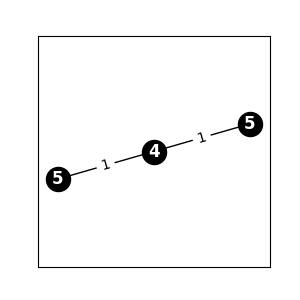}}
   \begin{aligned}
      \quad
      {}{^5_1}H^2{}{^4_2}H(1)(2)(1,2)
   \end{aligned}
\end{equation}

\begin{equation}
   \raisebox{-0.4\height}{\includegraphics[height=6\baselineskip]{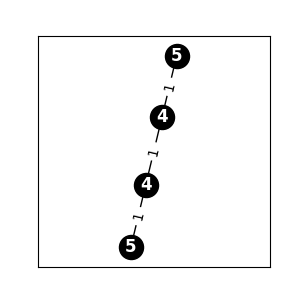}}
   \begin{aligned}
      \quad
      {}{^5_1}H^2{}{^4_2}H^2(1)(2)(1,3)(2,3)
   \end{aligned}
\end{equation}

\begin{equation}
   \raisebox{-0.4\height}{\includegraphics[height=6\baselineskip]{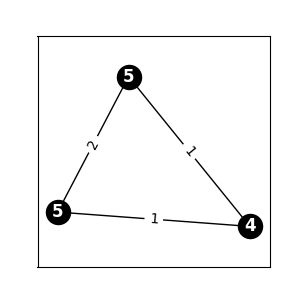}}
   \begin{aligned}
      \quad
      {}{^4_2}H{}{^5_3}H^2(1, 2)  (2, 3, 4)  (1, 3, 4)
   \end{aligned}
\end{equation}

\begin{equation}
   \raisebox{-0.4\height}{\includegraphics[height=6\baselineskip]{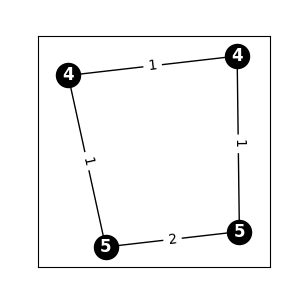}}
   \begin{aligned}
      \quad
      {}{^4_2}H^2{}{^5_3}H^2(1, 2)  (2, 3)  (1, 4, 5)  (3, 4, 5)
   \end{aligned}
\end{equation}

\begin{equation}
   \raisebox{-0.4\height}{\includegraphics[height=6\baselineskip]{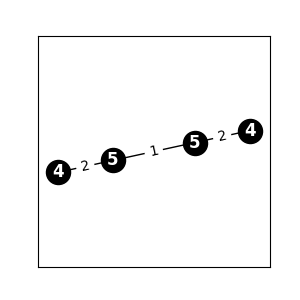}}
   \begin{aligned}
      \quad
      {}{^4_2}H^2{}{^5_3}H^2(1, 2)  (3, 4)  (1, 2, 5)  (3, 4, 5)
   \end{aligned}
\end{equation}

\begin{equation}
   \raisebox{-0.4\height}{\includegraphics[height=6\baselineskip]{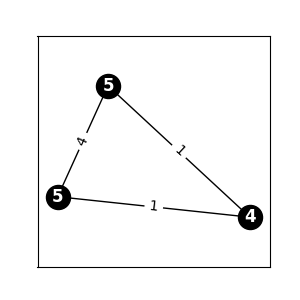}}
   \begin{aligned}
      \quad
      {}{^4_2}H{}{^5_5}H^2(1, 2)  (2, 3, 4, 5, 6)  (1, 3, 4, 5, 6)
   \end{aligned}
\end{equation}

\begin{equation}
   \raisebox{-0.4\height}{\includegraphics[height=6\baselineskip]{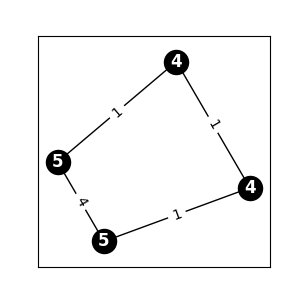}}
   \begin{aligned}
      \quad
      {}{^4_2}H^2{}{^5_5}H^2(1, 2)  (2, 3)  (1, 4, 5, 6, 7)  (3, 4, 5, 6, 7)
   \end{aligned}
\end{equation}

\begin{equation}
   \raisebox{-0.4\height}{\includegraphics[height=6\baselineskip]{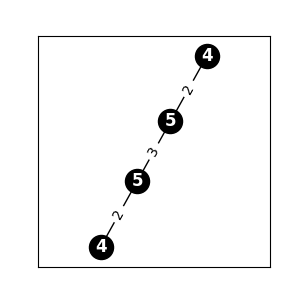}}
   \begin{aligned}
      \quad
      {}{^4_2}H^2{}{^5_5}H^2(1, 2)  (3, 4)  (1, 2, 5, 6, 7)  (3, 4, 5, 6, 7)
   \end{aligned}
\end{equation}

\begin{equation}
   \raisebox{-0.4\height}{\includegraphics[height=6\baselineskip]{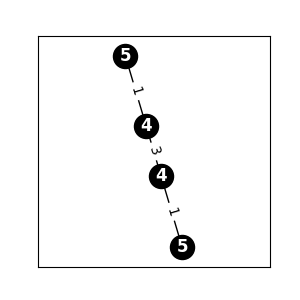}}
   \begin{aligned}
      \quad
      {}{^5_1}H^2{}{^4_4}H^2(1)  (2)  (1, 3, 4, 5)  (2, 3, 4, 5)
   \end{aligned}
\end{equation}

\begin{equation}
   \raisebox{-0.4\height}{\includegraphics[height=6\baselineskip]{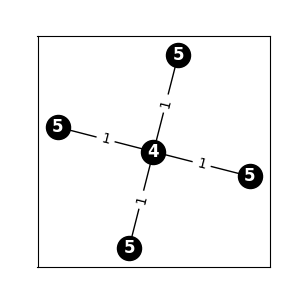}}
   \begin{aligned}
      \quad
      {}{^5_1}H^4{}{^4_4}H(1)  (2)  (3)  (4)  (1, 2, 3, 4)
   \end{aligned}
\end{equation}

\begin{equation}
   \raisebox{-0.4\height}{\includegraphics[height=6\baselineskip]{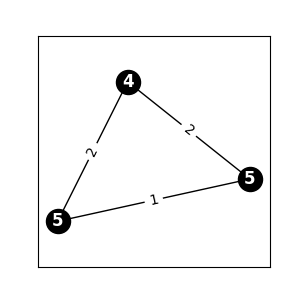}}
   \begin{aligned}
      \quad
      {}{^5_3}H^2{}{^4_4}H(1, 2, 3)  (3, 4, 5)  (1, 2, 4, 5)
   \end{aligned}
\end{equation}

\begin{equation}
   \raisebox{-0.4\height}{\includegraphics[height=6\baselineskip]{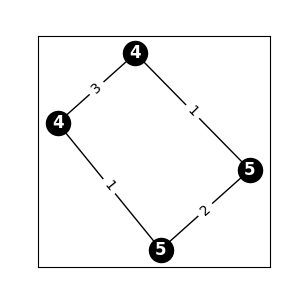}}
   \begin{aligned}
      \quad
      {}{^5_3}H^2{}{^4_4}H^2(1, 2, 3)  (2, 3, 4)  (1, 5, 6, 7)  (4, 5, 6, 7)
   \end{aligned}
\end{equation}

\begin{equation}
   \raisebox{-0.4\height}{\includegraphics[height=6\baselineskip]{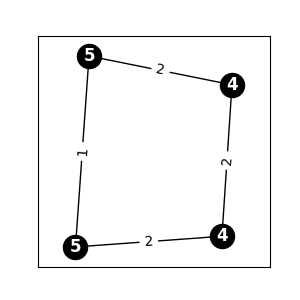}}
   \begin{aligned}
      \quad
      {}{^5_3}H^2{}{^4_4}H^2(1, 2, 3)  (3, 4, 5)  (1, 2, 6, 7)  (4, 5, 6, 7)
   \end{aligned}
\end{equation}

\begin{equation}
   \raisebox{-0.4\height}{\includegraphics[height=6\baselineskip]{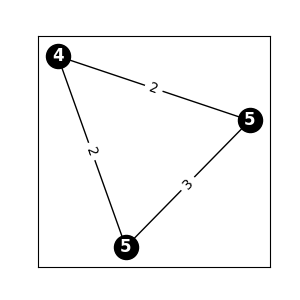}}
   \begin{aligned}
      \quad
      {}{^4_4}H{}{^5_5}H^2(1, 2, 3, 4)  (3, 4, 5, 6, 7)  (1, 2, 5, 6, 7)
   \end{aligned}
\end{equation}

\begin{equation}
   \raisebox{-0.4\height}{\includegraphics[height=6\baselineskip]{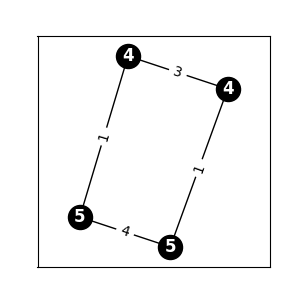}}
   \begin{aligned}
      \quad
      {}{^4_4}H^2{}{^5_5}H^2(1, 2, 3, 4)  (2, 3, 4, 5)  (1, 6, 7, 8, 9)  (5, 6, 7, 8, 9)
   \end{aligned}
\end{equation}

\begin{equation}
   \raisebox{-0.4\height}{\includegraphics[height=6\baselineskip]{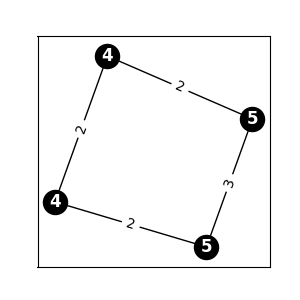}}
   \begin{aligned}
      \quad
      {}{^4_4}H^2{}{^5_5}H^2(1, 2, 3, 4)  (3, 4, 5, 6)  (1, 2, 7, 8, 9)  (5, 6, 7, 8, 9)
   \end{aligned}
\end{equation}

\subsection{Order 6}

\subsubsection{Irreducible Decomposition: 3}
\begin{equation}
      {}{^6} M_{ijklmn}
      =
      {}{^6_6} H_{ijklmn}
      +
      {}{^6_4} H_{(ijkl}\delta_{mn)}
      +
      {}{^6_2} H_{(ij}\delta_{kl}\delta_{mn)}
      +
      {}{^6_0} H\delta_{(ij}\delta_{kl}\delta_{mn)}
\end{equation}

\subsubsection{Pure Invariants: 1+2+6+\textbf{10}}
\begin{equation}
   \raisebox{-0.4\height}{\includegraphics[height=6\baselineskip]{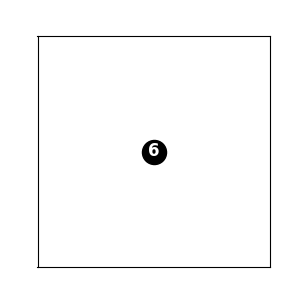}}
   \begin{aligned}
      \quad
      {}{^6_0}H
   \end{aligned}
\end{equation}

\begin{equation}
   \raisebox{-0.4\height}{\includegraphics[height=6\baselineskip]{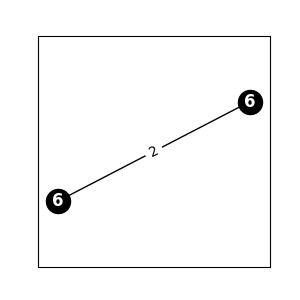}}
   \begin{aligned}
      \quad
      {}{^6_2}H^2(1,2)(1,2)
   \end{aligned}
\end{equation}

\begin{equation}
   \raisebox{-0.4\height}{\includegraphics[height=6\baselineskip]{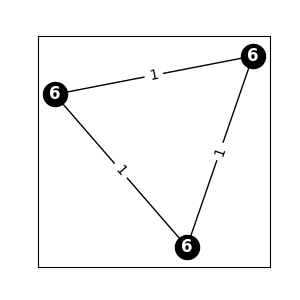}}
   \begin{aligned}
      \quad
      {}{^6_2}H^3(1,2)(2,3)(1,3)
   \end{aligned}
\end{equation}

\begin{equation}
   \raisebox{-0.4\height}{\includegraphics[height=6\baselineskip]{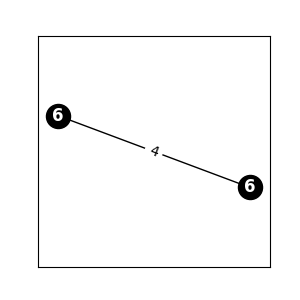}}
   \begin{aligned}
      \quad
      {}{^6_4}H^2(1, 2, 3, 4)  (1, 2, 3, 4)
   \end{aligned}
\end{equation}

\begin{equation}
   \raisebox{-0.4\height}{\includegraphics[height=6\baselineskip]{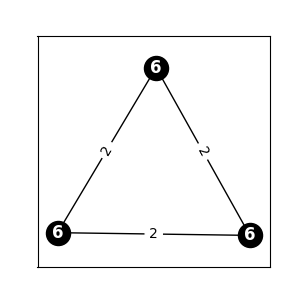}}
   \begin{aligned}
      \quad
      {}{^6_4}H^3(1, 2, 3, 4)  (3, 4, 5, 6)  (1, 2, 5, 6)
   \end{aligned}
\end{equation}

\begin{equation}
   \raisebox{-0.4\height}{\includegraphics[height=6\baselineskip]{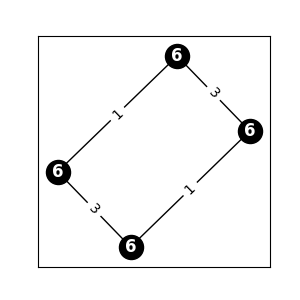}}
   \begin{aligned}
      \quad
      {}{^6_4}H^4(1, 2, 3, 4)  (2, 3, 4, 5)  (1, 6, 7, 8)  (5, 6, 7, 8)
   \end{aligned}
\end{equation}

\begin{equation}
   \raisebox{-0.4\height}{\includegraphics[height=6\baselineskip]{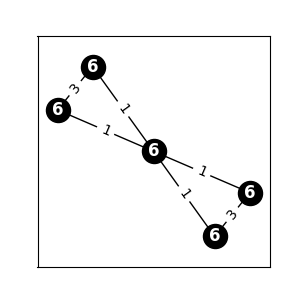}}
   \begin{aligned}
      \quad
      {}{^6_4}H^5(1, 2, 3, 4)  (2, 3, 4, 5)  (1, 5, 6, 7) \\
      (7, 8, 9, 10)  (6, 8, 9, 10)
   \end{aligned}
\end{equation}

\begin{equation}
   \raisebox{-0.4\height}{\includegraphics[height=6\baselineskip]{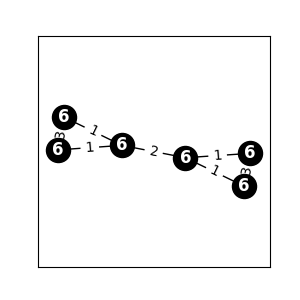}}
   \begin{aligned}
      \quad
      {}{^6_4}H^6(1, 2, 3, 4)
      (2, 3, 4, 5)
      (1, 5, 6, 7)
      (6, 7, 8, 9) \\
      (9, 10, 11, 12)
      (8, 10, 11, 12)
   \end{aligned}
\end{equation}

\begin{equation}
   \raisebox{-0.4\height}{\includegraphics[height=6\baselineskip]{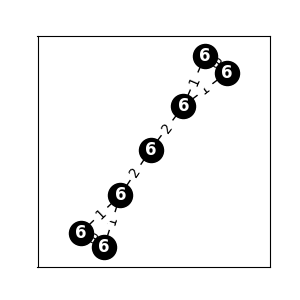}}
   \begin{aligned}
      \quad
      {}{^6_4}H^7(1, 2, 3, 4)
      (2, 3, 4, 5)
      (1, 5, 6, 7)
      (6, 7, 8, 9) \\
      (8, 9, 10, 11)
      (11, 12, 13, 14)
      (10, 12, 13, 14)
   \end{aligned}
\end{equation}

\begin{equation}
   \raisebox{-0.4\height}{\includegraphics[height=6\baselineskip]{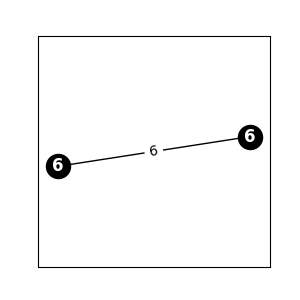}}
   \begin{aligned}
      \quad
      {}{^6_6}H^2(1, 2, 3, 4, 5, 6)  (1, 2, 3, 4, 5, 6)
   \end{aligned}
\end{equation}

\begin{equation}
   \raisebox{-0.4\height}{\includegraphics[height=6\baselineskip]{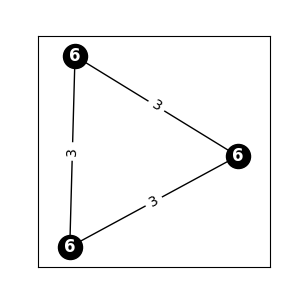}}
   \begin{aligned}
      \quad
      {}{^6_6}H^3(1, 2, 3, 4, 5, 6), (4, 5, 6, 7, 8, 9), (1, 2, 3, 7, 8, 9)
   \end{aligned}
\end{equation}

\begin{equation}
   \raisebox{-0.4\height}{\includegraphics[height=6\baselineskip]{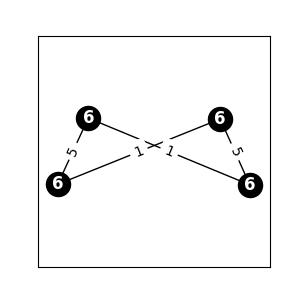}}
   \begin{aligned}
      \quad
      {}{^6_6}H^4(1, 2, 3, 4, 5, 6)
      (2, 3, 4, 5, 6, 7)
      (1, 8, 9, 10, 11, 12) \\
      (7, 8, 9, 10, 11, 12)
   \end{aligned}
\end{equation}

\begin{equation}
   \raisebox{-0.4\height}{\includegraphics[height=6\baselineskip]{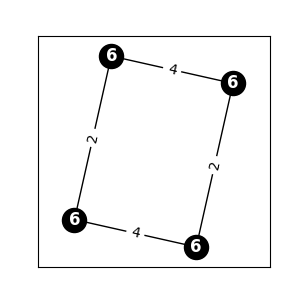}}
   \begin{aligned}
      \quad
      {}{^6_6}H^4(1, 2, 3, 4, 5, 6)
      (3, 4, 5, 6, 7, 8)
      (1, 2, 9, 10, 11, 12) \\
      (7, 8, 9, 10, 11, 12)
   \end{aligned}
\end{equation}

\begin{equation}
   \raisebox{-0.4\height}{\includegraphics[height=6\baselineskip]{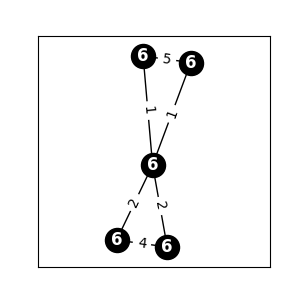}}
   \begin{aligned}
      \quad
      {}{^6_6}H^5(1, 2, 3, 4, 5, 6)
      (3, 4, 5, 6, 7, 8)
      (1, 2, 7, 8, 9, 10) \\
      (10, 11, 12, 13, 14, 15)
      (9, 11, 12, 13, 14, 15)
   \end{aligned}
\end{equation}

\begin{equation}
   \raisebox{-0.4\height}{\includegraphics[height=6\baselineskip]{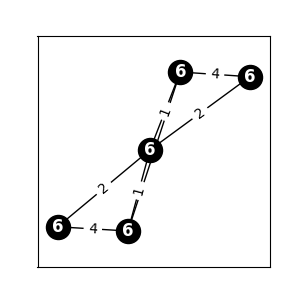}}
   \begin{aligned}
      \quad
      {}{^6_6}H^5(1, 2, 3, 4, 5, 6)
      (3, 4, 5, 6, 7, 8)
      (1, 2, 8, 9, 10, 11) \\
      (7, 11, 12, 13, 14, 15)
      (9, 10, 12, 13, 14, 15)
   \end{aligned}
\end{equation}

\begin{equation}
   \raisebox{-0.4\height}{\includegraphics[height=6\baselineskip]{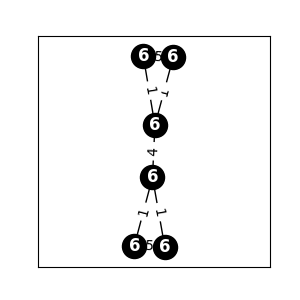}}
   \begin{aligned}
      \quad
      {}{^6_6}H^6(1, 2, 3, 4, 5, 6)
      (2, 3, 4, 5, 6, 7)
      (1, 7, 8, 9, 10, 11) \\
      (8, 9, 10, 11, 12, 13)
      (13, 14, 15, 16, 17, 18) \\
      (12, 14, 15, 16, 17, 18)
   \end{aligned}
\end{equation}

\begin{equation}
   \raisebox{-0.4\height}{\includegraphics[height=6\baselineskip]{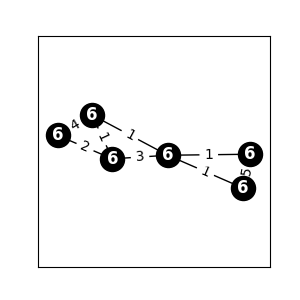}}
   \begin{aligned}
      \quad
      {}{^6_6}H^6(1, 2, 3, 4, 5, 6)
      (3, 4, 5, 6, 7, 8)
      (1, 2, 8, 9, 10, 11)     \\
      (7, 9, 10, 11, 12, 13)
      (13, 14, 15, 16, 17, 18) \\
      (12, 14, 15, 16, 17, 18)
   \end{aligned}
\end{equation}

\begin{equation}
   \raisebox{-0.4\height}{\includegraphics[height=6\baselineskip]{pure_6_6_8}}
   \begin{aligned}
      \quad
      {}{^6_6}H^6(1, 2, 3, 4, 5, 6)
      (2, 3, 4, 5, 6, 7)
      (1, 8, 9, 10, 11, 12)   \\
      (7, 9, 10, 11, 12, 13)
      (8, 14, 15, 16, 17, 18) \\
      (13, 14, 15, 16, 17, 18)
   \end{aligned}
\end{equation}

\begin{equation}
   \raisebox{-0.4\height}{\includegraphics[height=6\baselineskip]{pure_6_6_9}}
   \begin{aligned}
      \quad
      {}{^6_6}H^6(1, 2, 3, 4, 5, 6)
      (3, 4, 5, 6, 7, 8)
      (1, 2, 7, 8, 9, 10)      \\
      (9, 10, 11, 12, 13, 14)
      (13, 14, 15, 16, 17, 18) \\
      (11, 12, 15, 16, 17, 18)
   \end{aligned}
\end{equation}

\subsubsection{Mixed Homogeneous Invariants: 3+\textbf{3}+\textbf{3}}
\begin{equation}
   \raisebox{-0.4\height}{\includegraphics[height=6\baselineskip]{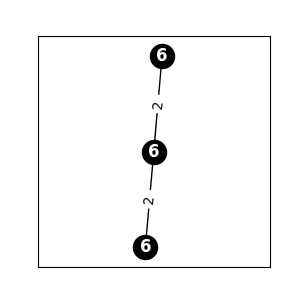}}
   \begin{aligned}
      \quad
      {}{^6_2}H^2{}{^6_4}H(1, 2)  (3, 4)  (1, 2, 3, 4)
   \end{aligned}
\end{equation}

\begin{equation}
   \raisebox{-0.4\height}{\includegraphics[height=6\baselineskip]{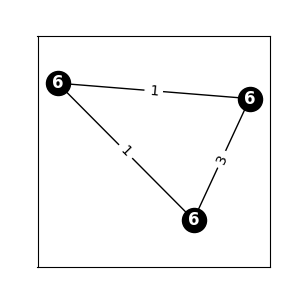}}
   \begin{aligned}
      \quad
      {}{^6_2}H{}{^6_4}H^2(1, 2)  (2, 3, 4, 5)  (1, 3, 4, 5)
   \end{aligned}
\end{equation}

\begin{equation}
   \raisebox{-0.4\height}{\includegraphics[height=6\baselineskip]{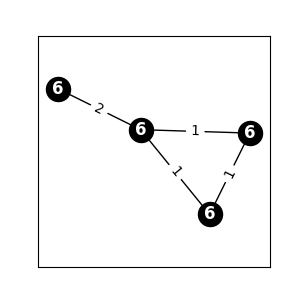}}
   \begin{aligned}
      \quad
      {}{^6_2}H^3{}{^6_4}H(1, 2)  (2, 3)  (4, 5)  (1, 3, 4, 5)
   \end{aligned}
\end{equation}

\begin{equation}
   \raisebox{-0.4\height}{\includegraphics[height=6\baselineskip]{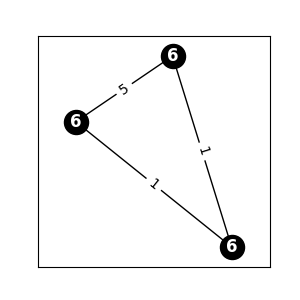}}
   \begin{aligned}
      \quad
      {}{^6_2}H{}{^6_6}H^2(1, 2)  (2, 3, 4, 5, 6, 7)  (1, 3, 4, 5, 6, 7)
   \end{aligned}
\end{equation}

\begin{equation}
   \raisebox{-0.4\height}{\includegraphics[height=6\baselineskip]{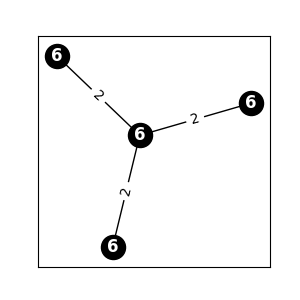}}
   \begin{aligned}
      \quad
      {}{^6_2}H^3{}{^6_6}H(1, 2)  (3, 4)  (5, 6)  (1, 2, 3, 4, 5, 6)
   \end{aligned}
\end{equation}

\begin{equation}
   \raisebox{-0.4\height}{\includegraphics[height=6\baselineskip]{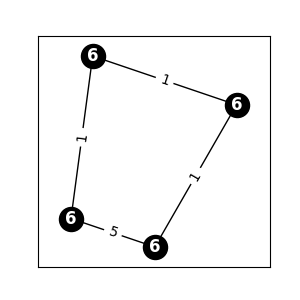}}
   \begin{aligned}
      \quad
      {}{^6_2}H^2{}{^6_6}H^2(1, 2)  (2, 3)  (1, 4, 5, 6, 7, 8)  (3, 4, 5, 6, 7, 8)
   \end{aligned}
\end{equation}

\begin{equation}
   \raisebox{-0.4\height}{\includegraphics[height=6\baselineskip]{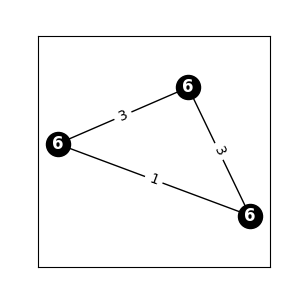}}
   \begin{aligned}
      \quad
      {}{^6_4}H^2{}{^6_6}H (1, 2, 3, 4)  (4, 5, 6, 7)  (1, 2, 3, 5, 6, 7)
   \end{aligned}
\end{equation}

\begin{equation}
   \raisebox{-0.4\height}{\includegraphics[height=6\baselineskip]{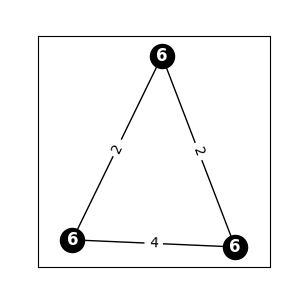}}
   \begin{aligned}
      \quad
      {}{^6_4}H{}{^6_6}H^2 (1, 2, 3, 4)  (3, 4, 5, 6, 7, 8)  (1, 2, 5, 6, 7, 8)
   \end{aligned}
\end{equation}

\begin{equation}
   \raisebox{-0.4\height}{\includegraphics[height=6\baselineskip]{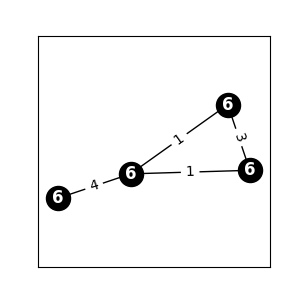}}
   \begin{aligned}
      \quad
      {}{^6_4}H^3{}{^6_6}H^2(1, 2, 3, 4)  (2, 3, 4, 5)  (6, 7, 8, 9)  (1, 5, 6, 7, 8, 9)
   \end{aligned}
\end{equation}

\subsubsection{Mixed Simultaneous 1 6 Invariants: 2+2+\textbf{2}}
\begin{equation}
   \raisebox{-0.4\height}{\includegraphics[height=6\baselineskip]{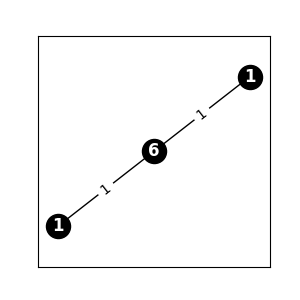}}
   \begin{aligned}
      \quad
      {}{^1_1}H^2{}{^6_2}H(1)  (2)  (1, 2)
   \end{aligned}
\end{equation}

\begin{equation}
   \raisebox{-0.4\height}{\includegraphics[height=6\baselineskip]{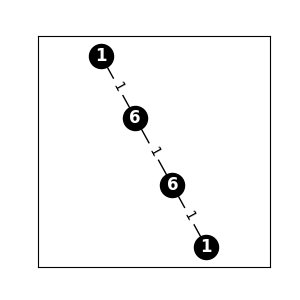}}
   \begin{aligned}
      \quad
      {}{^1_1}H^2{}{^6_2}H^2(1)  (2)  (1,3)  (2, 3)
   \end{aligned}
\end{equation}

\begin{equation}
   \raisebox{-0.4\height}{\includegraphics[height=6\baselineskip]{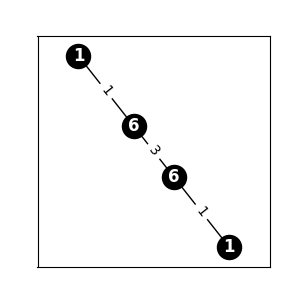}}
   \begin{aligned}
      \quad
      {}{^1_1}H^2{}{^6_4}H^2(1)  (2)  (1, 3, 4, 5)  (2, 3, 4, 5)
   \end{aligned}
\end{equation}

\begin{equation}
   \raisebox{-0.4\height}{\includegraphics[height=6\baselineskip]{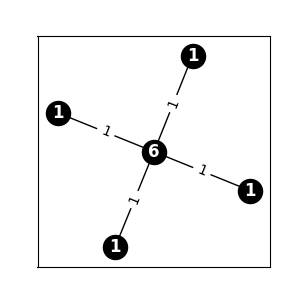}}
   \begin{aligned}
      \quad
      {}{^1_1}H^4{}{^6_4}H(1)  (2)  (3)  (4)  (1, 2, 3, 4)
   \end{aligned}
\end{equation}

\begin{equation}
   \raisebox{-0.4\height}{\includegraphics[height=6\baselineskip]{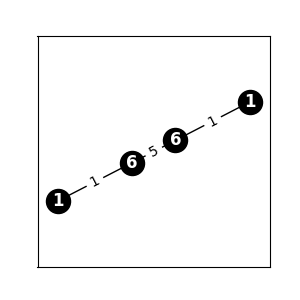}}
   \begin{aligned}
      \quad
      {}{^1_1}H^2{}{^6_6}H^2(1)  (2)  (1, 3, 4, 5, 6, 7)  (2, 3, 4, 5, 6, 7)
   \end{aligned}
\end{equation}

\begin{equation}
   \raisebox{-0.4\height}{\includegraphics[height=6\baselineskip]{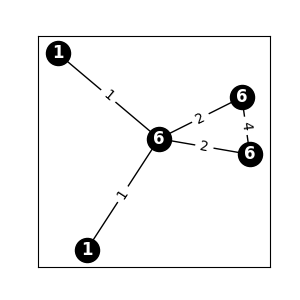}}
   \begin{aligned}
      \quad
      {}{^1_1}H^2{}{^6_6}H^3(1)  (2)  (1, 2, 3, 4, 5, 6)  (5, 6, 7, 8, 9, 10) \\
      (3, 4, 7, 8, 9, 10)
   \end{aligned}
\end{equation}

\subsubsection{Mixed Simultaneous 2 6 Invariants: 3+3+3}
\begin{equation}
   \raisebox{-0.4\height}{\includegraphics[height=6\baselineskip]{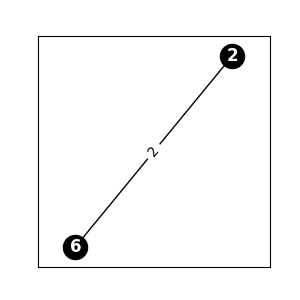}}
   \begin{aligned}
      \quad
      {}{^2_2}H{}{^6_2}H(1,2)(1,2)
   \end{aligned}
\end{equation}

\begin{equation}
   \raisebox{-0.4\height}{\includegraphics[height=6\baselineskip]{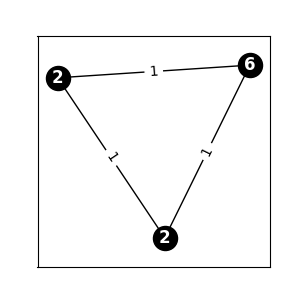}}
   \begin{aligned}
      \quad
      {}{^2_2}H^2{}{^6_2}H(1,2)(2,3)(1,3)
   \end{aligned}
\end{equation}

\begin{equation}
   \raisebox{-0.4\height}{\includegraphics[height=6\baselineskip]{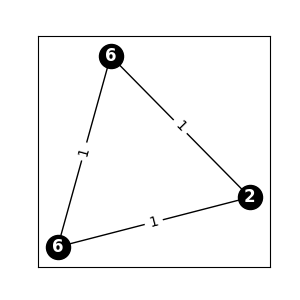}}
   \begin{aligned}
      \quad
      {}{^2_2}H{}{^6_2}H^2(1,2)(2,3)(1,3)
   \end{aligned}
\end{equation}

\begin{equation}
   \raisebox{-0.4\height}{\includegraphics[height=6\baselineskip]{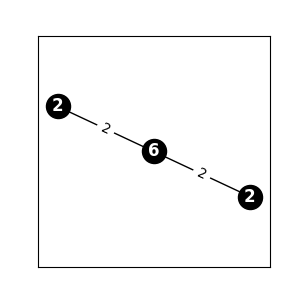}}
   \begin{aligned}
      \quad
      {}{^2_2}H^2{}{^6_4}H(1, 2)  (3, 4)  (1, 2, 3, 4)
   \end{aligned}
\end{equation}

\begin{equation}
   \raisebox{-0.4\height}{\includegraphics[height=6\baselineskip]{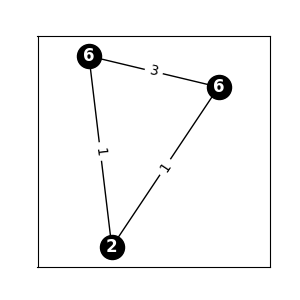}}
   \begin{aligned}
      \quad
      {}{^2_2}H{}{^6_4}H^2(1, 2)  (2, 3, 4, 5)  (1, 3, 4, 5)
   \end{aligned}
\end{equation}

\begin{equation}
   \raisebox{-0.4\height}{\includegraphics[height=6\baselineskip]{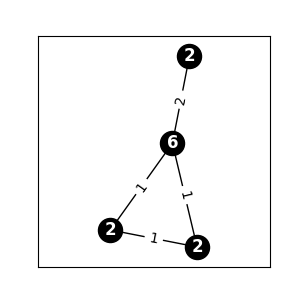}}
   \begin{aligned}
      \quad
      {}{^2_2}H^3{}{^6_4}H(1, 2)  (2, 3)  (4, 5)  (1, 3, 4, 5)
   \end{aligned}
\end{equation}

\begin{equation}
   \raisebox{-0.4\height}{\includegraphics[height=6\baselineskip]{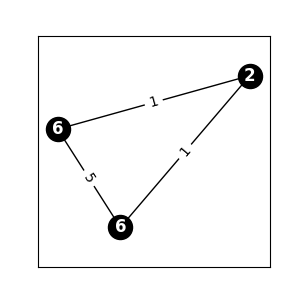}}
   \begin{aligned}
      \quad
      {}{^2_2}H{}{^6_6}H^2(1, 2)  (2, 3, 4, 5, 6, 7)  (1, 3, 4, 5, 6, 7)
   \end{aligned}
\end{equation}

\begin{equation}
   \raisebox{-0.4\height}{\includegraphics[height=6\baselineskip]{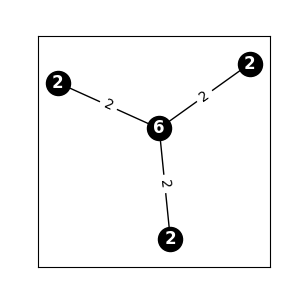}}
   \begin{aligned}
      \quad
      {}{^2_2}H^3{}{^6_6}H(1, 2)  (3, 4)  (5, 6)  (1, 2, 3, 4, 5, 6)
   \end{aligned}
\end{equation}

\begin{equation}
   \raisebox{-0.4\height}{\includegraphics[height=6\baselineskip]{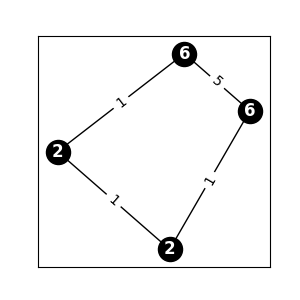}}
   \begin{aligned}
      \quad
      {}{^2_2}H^2{}{^6_6}H^2(1, 2)  (2, 3)  (1, 4, 5, 6, 7, 8)  (3, 4, 5, 6, 7, 8)
   \end{aligned}
\end{equation}

\subsubsection{Mixed Simultaneous 3 6 Invariants: 2+2+2+3+3+\textbf{3}}
\begin{equation}
   \raisebox{-0.4\height}{\includegraphics[height=6\baselineskip]{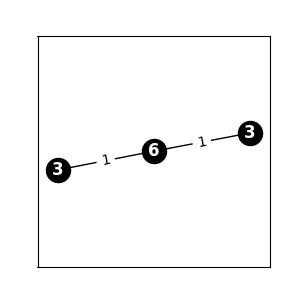}}
   \begin{aligned}
      \quad
      {}{^3_1}H^2{}{^6_2}H(1)(2)(1,2)
   \end{aligned}
\end{equation}

\begin{equation}
   \raisebox{-0.4\height}{\includegraphics[height=6\baselineskip]{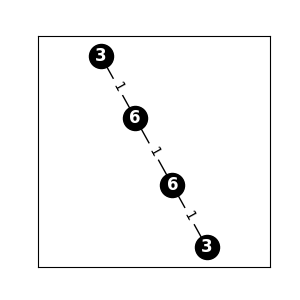}}
   \begin{aligned}
      \quad
      {}{^3_1}H^2{}{^6_2}H^2(1)(2)(1,3)(2,3)
   \end{aligned}
\end{equation}

\begin{equation}
   \raisebox{-0.4\height}{\includegraphics[height=6\baselineskip]{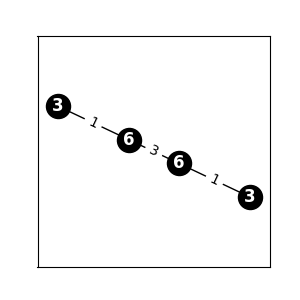}}
   \begin{aligned}
      \quad
      {}{^3_1}H^2{}{^6_4}H^2(1)  (2)  (1, 3, 4, 5)  (2, 3, 4, 5)
   \end{aligned}
\end{equation}

\begin{equation}
   \raisebox{-0.4\height}{\includegraphics[height=6\baselineskip]{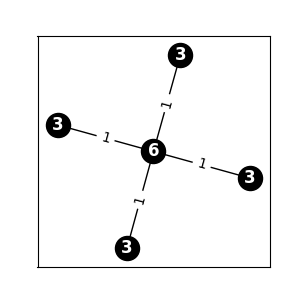}}
   \begin{aligned}
      \quad
      {}{^3_1}H^4{}{^6_4}H(1)  (2)  (3)  (4)  (1, 2, 3, 4)
   \end{aligned}
\end{equation}

\begin{equation}
   \raisebox{-0.4\height}{\includegraphics[height=6\baselineskip]{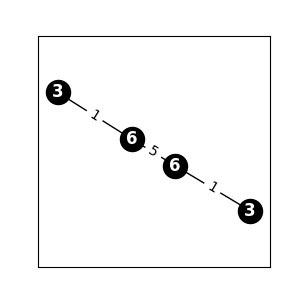}}
   \begin{aligned}
      \quad
      {}{^3_1}H^2{}{^6_6}H^2(1)  (2)  (1, 3, 4, 5, 6, 7)  (2, 3, 4, 5, 6, 7)
   \end{aligned}
\end{equation}

\begin{equation}
   \raisebox{-0.4\height}{\includegraphics[height=6\baselineskip]{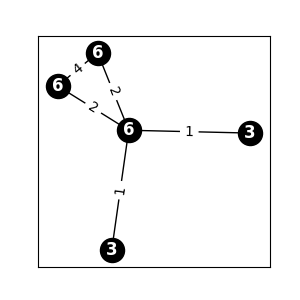}}
   \begin{aligned}
      \quad
      {}{^3_1}H^2{}{^6_6}H^3(1)  (2)  (1, 2, 3, 4, 5, 6)  (5, 6, 7, 8, 9, 10) \\
      (3, 4, 7, 8, 9, 10)
   \end{aligned}
\end{equation}

\begin{equation}
   \raisebox{-0.4\height}{\includegraphics[height=6\baselineskip]{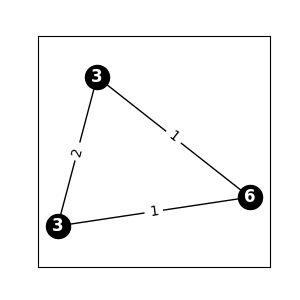}}
   \begin{aligned}
      \quad
      {}{^6_2}H{}{^3_3}H^2(1, 2)  (2, 3, 4)  (1, 3, 4)
   \end{aligned}
\end{equation}

\begin{equation}
   \raisebox{-0.4\height}{\includegraphics[height=6\baselineskip]{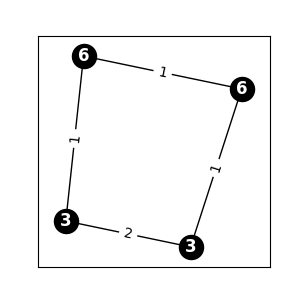}}
   \begin{aligned}
      \quad
      {}{^6_2}H^2{}{^3_3}H^2(1, 2)  (2, 3)  (1, 4, 5)  (3, 4, 5)
   \end{aligned}
\end{equation}

\begin{equation}
   \raisebox{-0.4\height}{\includegraphics[height=6\baselineskip]{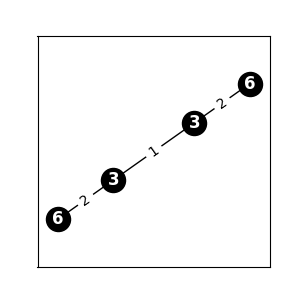}}
   \begin{aligned}
      \quad
      {}{^6_2}H^2{}{^3_3}H^2(1, 2)  (3, 4)  (1, 2, 5)  (3, 4, 5)
   \end{aligned}
\end{equation}

\begin{equation}
   \raisebox{-0.4\height}{\includegraphics[height=6\baselineskip]{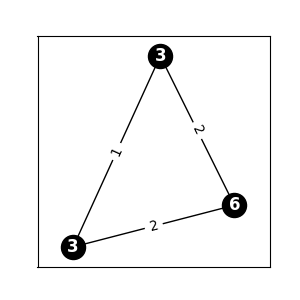}}
   \begin{aligned}
      \quad
      {}{^3_3}H^2{}{^6_4}H(1, 2, 3)  (3, 4, 5)  (1, 2, 4, 5)
   \end{aligned}
\end{equation}

\begin{equation}
   \raisebox{-0.4\height}{\includegraphics[height=6\baselineskip]{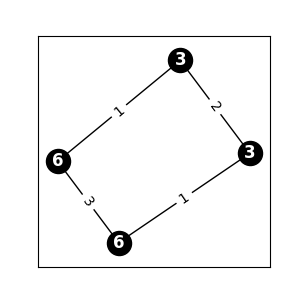}}
   \begin{aligned}
      \quad
      {}{^3_3}H^2{}{^6_4}H^2(1, 2, 3)  (2, 3, 4)  (1, 5, 6, 7)  (4, 5, 6, 7)
   \end{aligned}
\end{equation}

\begin{equation}
   \raisebox{-0.4\height}{\includegraphics[height=6\baselineskip]{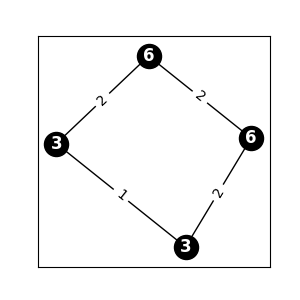}}
   \begin{aligned}
      \quad
      {}{^3_3}H^2{}{^6_4}H^2(1, 2, 3)  (3, 4, 5)  (1, 2, 6, 7)  (4, 5, 6, 7)
   \end{aligned}
\end{equation}

\begin{equation}
   \raisebox{-0.4\height}{\includegraphics[height=6\baselineskip]{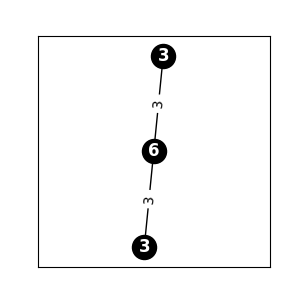}}
   \begin{aligned}
      \quad
      {}{^3_3}H^2{}{^6_6}H(1, 2, 3)  (4, 5, 6)  (1, 2, 3, 4, 5, 6)
   \end{aligned}
\end{equation}

\begin{equation}
   \raisebox{-0.4\height}{\includegraphics[height=6\baselineskip]{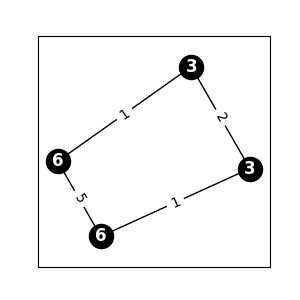}}
   \begin{aligned}
      \quad
      {}{^3_3}H^2{}{^6_6}H^2(1, 2, 3)  (2, 3, 4)  (1, 5, 6, 7, 8, 9)  (4, 5, 6, 7, 8, 9)
   \end{aligned}
\end{equation}

\begin{equation}
   \raisebox{-0.4\height}{\includegraphics[height=6\baselineskip]{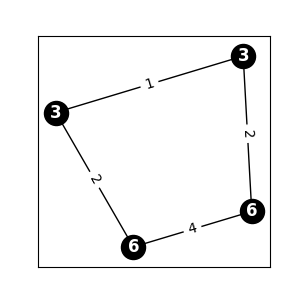}}
   \begin{aligned}
      \quad
      {}{^3_3}H^2{}{^6_6}H^2(1, 2, 3)  (3, 4, 5)  (1, 2, 6, 7, 8, 9)  (4, 5, 6, 7, 8, 9)
   \end{aligned}
\end{equation}

\subsubsection{Mixed Simultaneous 4 6 Invariants: 3+3+3+3+\textbf{3}+3}
\begin{equation}
   \raisebox{-0.4\height}{\includegraphics[height=6\baselineskip]{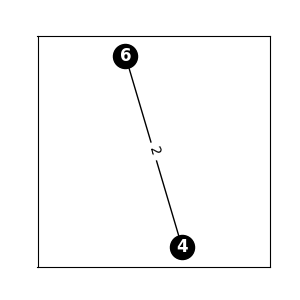}}
   \begin{aligned}
      \quad
      {}{^4_2}H{}{^6_2}H(1,2)(1,2)
   \end{aligned}
\end{equation}

\begin{equation}
   \raisebox{-0.4\height}{\includegraphics[height=6\baselineskip]{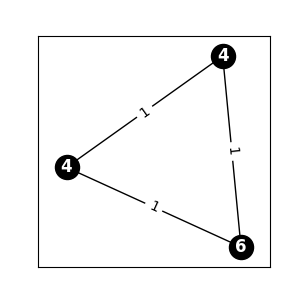}}
   \begin{aligned}
      \quad
      {}{^4_2}H^2{}{^6_2}H(1,2)(2,3)(1,3)
   \end{aligned}
\end{equation}

\begin{equation}
   \raisebox{-0.4\height}{\includegraphics[height=6\baselineskip]{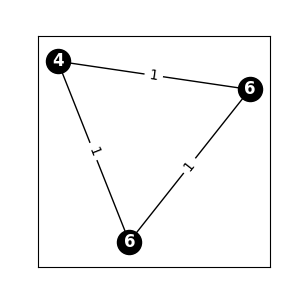}}
   \begin{aligned}
      \quad
      {}{^4_2}H{}{^6_2}H^2(1,2)(2,3)(1,3)
   \end{aligned}
\end{equation}

\begin{equation}
   \raisebox{-0.4\height}{\includegraphics[height=6\baselineskip]{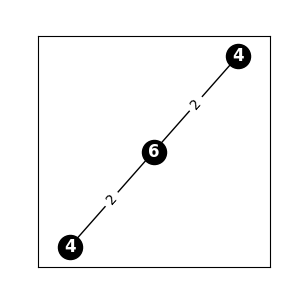}}
   \begin{aligned}
      \quad
      {}{^4_2}H^2{}{^6_4}H(1, 2)  (3, 4)  (1, 2, 3, 4)
   \end{aligned}
\end{equation}

\begin{equation}
   \raisebox{-0.4\height}{\includegraphics[height=6\baselineskip]{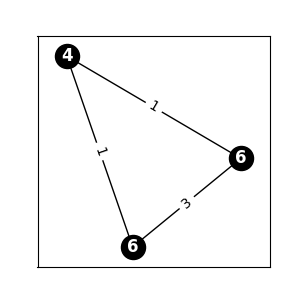}}
   \begin{aligned}
      \quad
      {}{^4_2}H{}{^6_4}H^2(1, 2)  (2, 3, 4, 5)  (1, 3, 4, 5)
   \end{aligned}
\end{equation}

\begin{equation}
   \raisebox{-0.4\height}{\includegraphics[height=6\baselineskip]{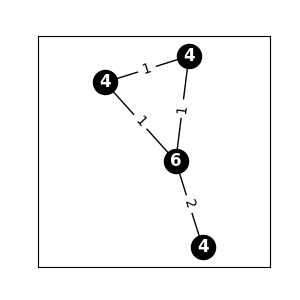}}
   \begin{aligned}
      \quad
      {}{^4_2}H^3{}{^6_4}H(1, 2)  (2, 3)  (4, 5)  (1, 3, 4, 5)
   \end{aligned}
\end{equation}

\begin{equation}
   \raisebox{-0.4\height}{\includegraphics[height=6\baselineskip]{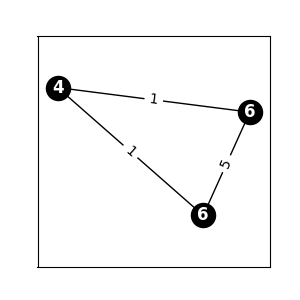}}
   \begin{aligned}
      \quad
      {}{^4_2}H{}{^6_6}H^2(1, 2)  (2, 3, 4, 5, 6, 7)  (1, 3, 4, 5, 6, 7)
   \end{aligned}
\end{equation}

\begin{equation}
   \raisebox{-0.4\height}{\includegraphics[height=6\baselineskip]{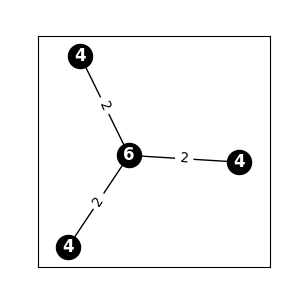}}
   \begin{aligned}
      \quad
      {}{^4_2}H^3{}{^6_6}H(1, 2)  (3, 4)  (5, 6)  (1, 2, 3, 4, 5, 6)
   \end{aligned}
\end{equation}

\begin{equation}
   \raisebox{-0.4\height}{\includegraphics[height=6\baselineskip]{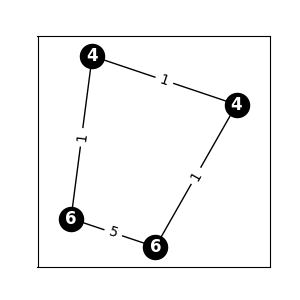}}
   \begin{aligned}
      \quad
      {}{^4_2}H^2{}{^6_6}H^2(1, 2)  (2, 3)  (1, 4, 5, 6, 7, 8)  (3, 4, 5, 6, 7, 8)
   \end{aligned}
\end{equation}

\begin{equation}
   \raisebox{-0.4\height}{\includegraphics[height=6\baselineskip]{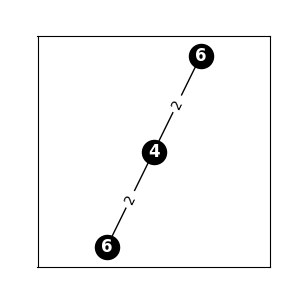}}
   \begin{aligned}
      \quad
      {}{^6_2}H^2{}{^4_4}H(1, 2)  (3, 4)  (1, 2, 3, 4)
   \end{aligned}
\end{equation}

\begin{equation}
   \raisebox{-0.4\height}{\includegraphics[height=6\baselineskip]{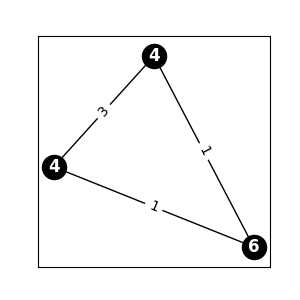}}
   \begin{aligned}
      \quad
      {}{^6_2}H{}{^4_4}H^2(1, 2)  (2, 3, 4, 5)  (1, 3, 4, 5)
   \end{aligned}
\end{equation}

\begin{equation}
   \raisebox{-0.4\height}{\includegraphics[height=6\baselineskip]{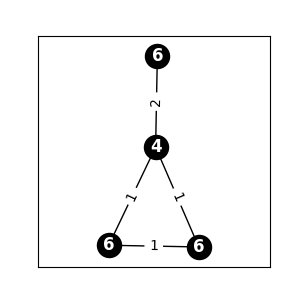}}
   \begin{aligned}
      \quad
      {}{^6_2}H^3{}{^4_4}H(1, 2)  (2, 3)  (4, 5)  (1, 3, 4, 5)
   \end{aligned}
\end{equation}

\begin{equation}
   \raisebox{-0.4\height}{\includegraphics[height=6\baselineskip]{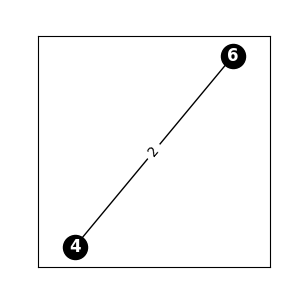}}
   \begin{aligned}
      \quad
      {}{^4_4}H{}{^6_4}H (1,2,3,4) (1,2,3,4)
   \end{aligned}
\end{equation}

\begin{equation}
   \raisebox{-0.4\height}{\includegraphics[height=6\baselineskip]{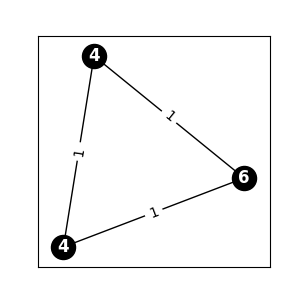}}
   \begin{aligned}
      \quad
      {}{^4_4}H^2{}{^6_4}H (1,2,3,4) (3,4,5,6) (1,2,5,6)
   \end{aligned}
\end{equation}

\begin{equation}
   \raisebox{-0.4\height}{\includegraphics[height=6\baselineskip]{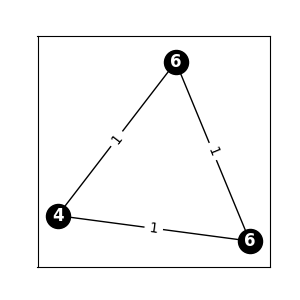}}
   \begin{aligned}
      \quad
      {}{^4_4}H{}{^6_4} H^2 (1,2,3,4) (3,4,5,6) (1,2,5,6)
   \end{aligned}
\end{equation}

\begin{equation}
   \raisebox{-0.4\height}{\includegraphics[height=6\baselineskip]{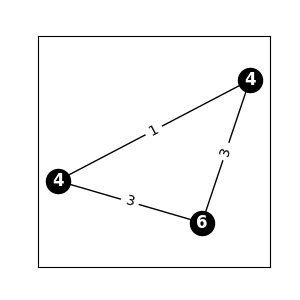}}
   \begin{aligned}
      \quad
      {}{^4_4}H^2{}{^6_6}H (1, 2, 3, 4)  (4, 5, 6, 7)  (1, 2, 3, 5, 6, 7)
   \end{aligned}
\end{equation}

\begin{equation}
   \raisebox{-0.4\height}{\includegraphics[height=6\baselineskip]{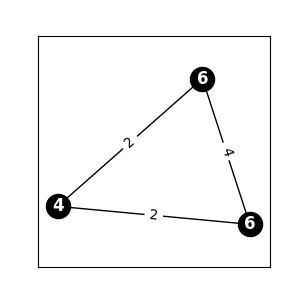}}
   \begin{aligned}
      \quad
      {}{^4_4}H{}{^6_6}H^2 (1, 2, 3, 4)  (3, 4, 5, 6, 7, 8)  (1, 2, 5, 6, 7, 8)
   \end{aligned}
\end{equation}

\begin{equation}
   \raisebox{-0.4\height}{\includegraphics[height=6\baselineskip]{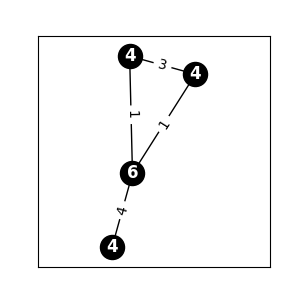}}
   \begin{aligned}
      \quad
      {}{^4_4}H^3{}{^6_6}H^2(1, 2, 3, 4)  (2, 3, 4, 5)  (6, 7, 8, 9)  (1, 5, 6, 7, 8, 9)
   \end{aligned}
\end{equation}

\subsubsection{Mixed Simultaneous 5 6 Invariants: 2+3+3+2+3+3+2+3+\textbf{3}}
\begin{equation}
   \raisebox{-0.4\height}{\includegraphics[height=6\baselineskip]{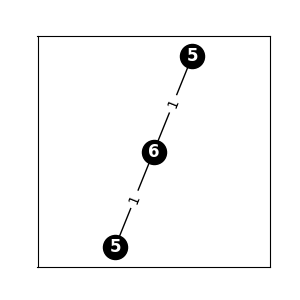}}
   \begin{aligned}
      \quad
      {}{^5_1}H^2{}{^6_2}H(1)(2)(1,2)
   \end{aligned}
\end{equation}

\begin{equation}
   \raisebox{-0.4\height}{\includegraphics[height=6\baselineskip]{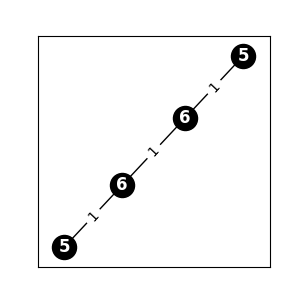}}
   \begin{aligned}
      \quad
      {}{^5_1}H^2{}{^6_2}H^2(1)(2)(1,3)(2,3)
   \end{aligned}
\end{equation}

\begin{equation}
   \raisebox{-0.4\height}{\includegraphics[height=6\baselineskip]{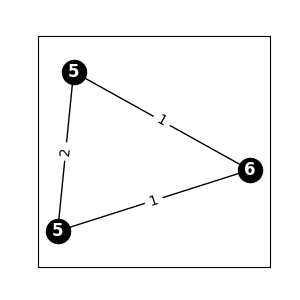}}
   \begin{aligned}
      \quad
      {}{^6_2}H{}{^5_3}H^2(1, 2)  (2, 3, 4)  (1, 3, 4)
   \end{aligned}
\end{equation}

\begin{equation}
   \raisebox{-0.4\height}{\includegraphics[height=6\baselineskip]{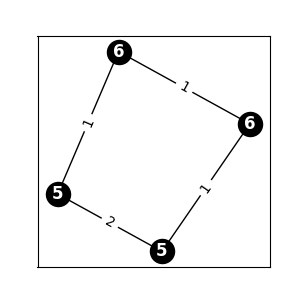}}
   \begin{aligned}
      \quad
      {}{^6_2}H^2{}{^5_3}H^2(1, 2)  (2, 3)  (1, 4, 5)  (3, 4, 5)
   \end{aligned}
\end{equation}

\begin{equation}
   \raisebox{-0.4\height}{\includegraphics[height=6\baselineskip]{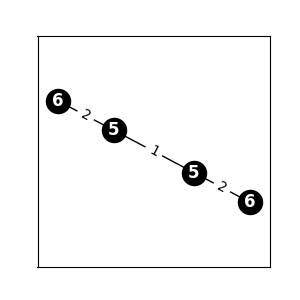}}
   \begin{aligned}
      \quad
      {}{^6_2}H^2{}{^5_3}H^2(1, 2)  (3, 4)  (1, 2, 5)  (3, 4, 5)
   \end{aligned}
\end{equation}

\begin{equation}
   \raisebox{-0.4\height}{\includegraphics[height=6\baselineskip]{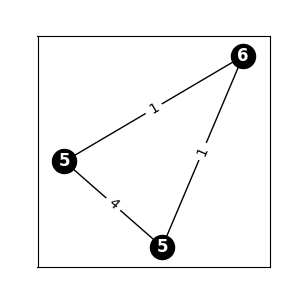}}
   \begin{aligned}
      \quad
      {}{^6_2}H{}{^5_5}H^2(1, 2)  (2, 3, 4, 5, 6)  (1, 3, 4, 5, 6)
   \end{aligned}
\end{equation}

\begin{equation}
   \raisebox{-0.4\height}{\includegraphics[height=6\baselineskip]{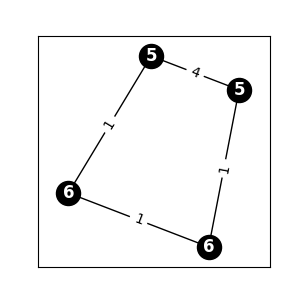}}
   \begin{aligned}
      \quad
      {}{^6_2}H^2{}{^5_5}H^2(1, 2)  (2, 3)  (1, 4, 5, 6, 7)  (3, 4, 5, 6, 7)
   \end{aligned}
\end{equation}

\begin{equation}
   \raisebox{-0.4\height}{\includegraphics[height=6\baselineskip]{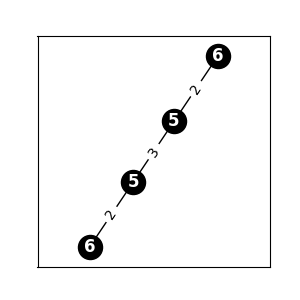}}
   \begin{aligned}
      \quad
      {}{^6_2}H^2{}{^5_5}H^2(1, 2)  (3, 4)  (1, 2, 5, 6, 7)  (3, 4, 5, 6, 7)
   \end{aligned}
\end{equation}

\begin{equation}
   \raisebox{-0.4\height}{\includegraphics[height=6\baselineskip]{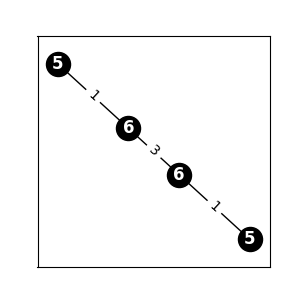}}
   \begin{aligned}
      \quad
      {}{^5_1}H^2{}{^6_4}H^2(1)  (2)  (1, 3, 4, 5)  (2, 3, 4, 5)
   \end{aligned}
\end{equation}

\begin{equation}
   \raisebox{-0.4\height}{\includegraphics[height=6\baselineskip]{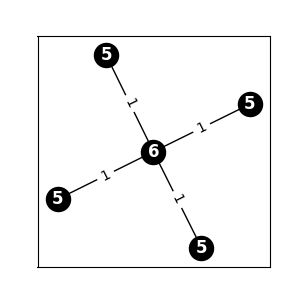}}
   \begin{aligned}
      \quad
      {}{^5_1}H^4{}{^6_4}H(1)  (2)  (3)  (4)  (1, 2, 3, 4)
   \end{aligned}
\end{equation}

\begin{equation}
   \raisebox{-0.4\height}{\includegraphics[height=6\baselineskip]{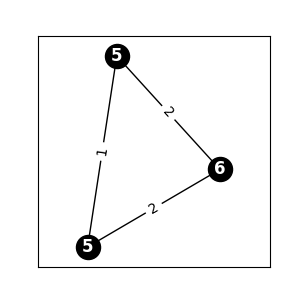}}
   \begin{aligned}
      \quad
      {}{^5_3}H^2{}{^6_4}H(1, 2, 3)  (3, 4, 5)  (1, 2, 4, 5)
   \end{aligned}
\end{equation}

\begin{equation}
   \raisebox{-0.4\height}{\includegraphics[height=6\baselineskip]{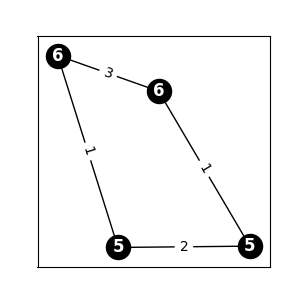}}
   \begin{aligned}
      \quad
      {}{^5_3}H^2{}{^6_4}H^2(1, 2, 3)  (2, 3, 4)  (1, 5, 6, 7)  (4, 5, 6, 7)
   \end{aligned}
\end{equation}

\begin{equation}
   \raisebox{-0.4\height}{\includegraphics[height=6\baselineskip]{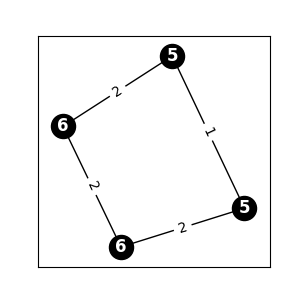}}
   \begin{aligned}
      \quad
      {}{^5_3}H^2{}{^6_4}H^2(1, 2, 3)  (3, 4, 5)  (1, 2, 6, 7)  (4, 5, 6, 7)
   \end{aligned}
\end{equation}

\begin{equation}
   \raisebox{-0.4\height}{\includegraphics[height=6\baselineskip]{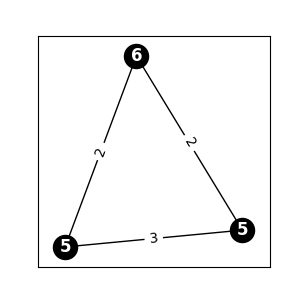}}
   \begin{aligned}
      \quad
      {}{^6_4}H{}{^5_5}H^2(1, 2, 3, 4)  (3, 4, 5, 6, 7)  (1, 2, 5, 6, 7)
   \end{aligned}
\end{equation}

\begin{equation}
   \raisebox{-0.4\height}{\includegraphics[height=6\baselineskip]{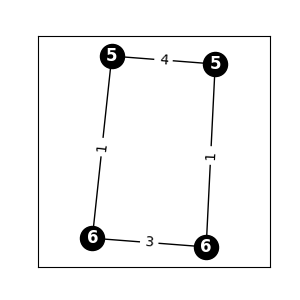}}
   \begin{aligned}
      \quad
      {}{^6_4}H^2{}{^5_5}H^2(1, 2, 3, 4)  (2, 3, 4, 5)  (1, 6, 7, 8, 9)  (5, 6, 7, 8, 9)
   \end{aligned}
\end{equation}

\begin{equation}
   \raisebox{-0.4\height}{\includegraphics[height=6\baselineskip]{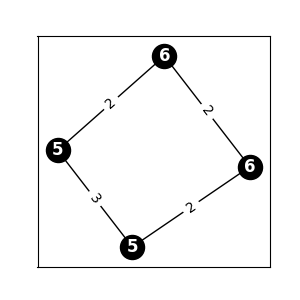}}
   \begin{aligned}
      \quad
      {}{^6_4}H^2{}{^5_5}H^2(1, 2, 3, 4)  (3, 4, 5, 6)  (1, 2, 7, 8, 9)  (5, 6, 7, 8, 9)
   \end{aligned}
\end{equation}

\begin{equation}
   \raisebox{-0.4\height}{\includegraphics[height=6\baselineskip]{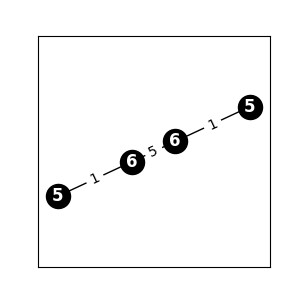}}
   \begin{aligned}
      \quad
      {}{^5_1}H^2{}{^6_6}H^2(1)  (2)  (1, 3, 4, 5, 6, 7)  (2, 3, 4, 5, 6, 7)
   \end{aligned}
\end{equation}

\begin{equation}
   \raisebox{-0.4\height}{\includegraphics[height=6\baselineskip]{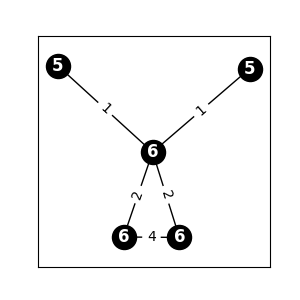}}
   \begin{aligned}
      \quad
      {}{^5_1}H^2{}{^6_6}H^3(1)  (2)  (1, 2, 3, 4, 5, 6)  (5, 6, 7, 8, 9, 10) \\
      (3, 4, 7, 8, 9, 10)
   \end{aligned}
\end{equation}

\begin{equation}
   \raisebox{-0.4\height}{\includegraphics[height=6\baselineskip]{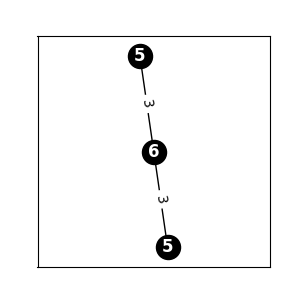}}
   \begin{aligned}
      \quad
      {}{^5_3}H^2{}{^6_6}H(1, 2, 3)  (4, 5, 6)  (1, 2, 3, 4, 5, 6)
   \end{aligned}
\end{equation}

\begin{equation}
   \raisebox{-0.4\height}{\includegraphics[height=6\baselineskip]{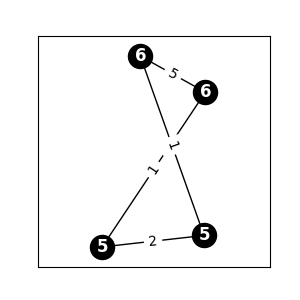}}
   \begin{aligned}
      \quad
      {}{^5_3}H^2{}{^6_6}H^2(1, 2, 3)  (2, 3, 4)  (1, 5, 6, 7, 8, 9)  (4, 5, 6, 7, 8, 9)
   \end{aligned}
\end{equation}

\begin{equation}
   \raisebox{-0.4\height}{\includegraphics[height=6\baselineskip]{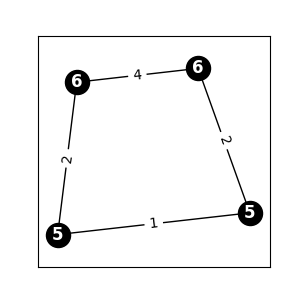}}
   \begin{aligned}
      \quad
      {}{^5_3}H^2{}{^6_6}H^2(1, 2, 3)  (3, 4, 5)  (1, 2, 6, 7, 8, 9)  (4, 5, 6, 7, 8, 9)
   \end{aligned}
\end{equation}

\begin{equation}
   \raisebox{-0.4\height}{\includegraphics[height=6\baselineskip]{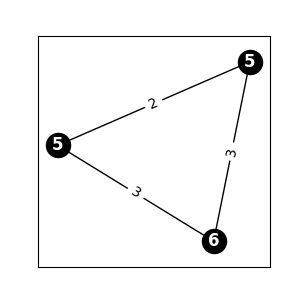}}
   \begin{aligned}
      \quad
      {}{^5_5}H^2{}{^6_6}H(1, 2, 3, 4, 5)  (4, 5, 6, 7, 8)  (1, 2, 3, 6, 7, 8)
   \end{aligned}
\end{equation}

\begin{equation}
   \raisebox{-0.4\height}{\includegraphics[height=6\baselineskip]{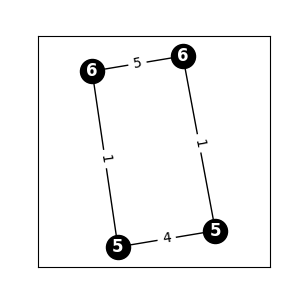}}
   \begin{aligned}
      \quad
      {}{^5_5}H^2{}{^6_6}H^2(1, 2, 3, 4, 5) (2, 3, 4, 5, 6)  (1, 7, 8, 9, 10, 11) \\
      (6, 7, 8, 9, 10, 11)
   \end{aligned}
\end{equation}

\begin{equation}
   \raisebox{-0.4\height}{\includegraphics[height=6\baselineskip]{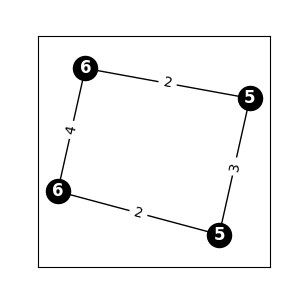}}
   \begin{aligned}
      \quad
      {}{^5_5}H^2{}{^6_6}H^2(1, 2, 3, 4, 5)  (3, 4, 5, 6, 7)  (1, 2, 8, 9, 10, 11) \\
      (6, 7, 8, 9, 10, 11)
   \end{aligned}
\end{equation}





\section{Minimal Flexible Sets for Spherical Functions}\label{a:minimalSpherical}
Assume that the functions in question are all spherical, i.e., that they do not depend on $r$ in Eq.~\eqref{spherical}. Then we know from Sec~\ref{s:analysis} that all irreducible decompositions are trivial, i.e., that only the first irreducible tensor, whose low-left index equals its upper-left index, is non-zero and is identical to the moment tensor. In this setting, the minimal flexible set reduces significantly, such that there are no longer two invariants with the same contraction structure.

\subsection{Order 0}

\subsubsection{Irreducible Decomposition: 1}
\begin{equation}
   \begin{aligned}
      {}{^0}{\hat M}={}{^0_0}H
   \end{aligned}
\end{equation}

\subsubsection{Independent Elements of the Irreducible Decomposition: 1}
\begin{equation}
   \begin{aligned}
      {}{^0_0}H_i
   \end{aligned}
\end{equation}

\subsubsection{Pure Invariants: \textbf{1}}
\begin{equation}
   \raisebox{-0.4\height}{\includegraphics[height=6\baselineskip]{pure_0_0_0}}
   \begin{aligned}
      \quad
      {}{^0_0}H
   \end{aligned}
\end{equation}


\subsection{Order 1}

\subsubsection{Irreducible Decomposition: 1}
\begin{equation}
   \begin{aligned}
      {}{^1}{\hat M}_i & ={}{^1_1}H_i
   \end{aligned}
\end{equation}

\subsubsection{Independent Elements of the Irreducible Decomposition: 1}
\begin{equation}
   \begin{aligned}
      {}{^1_1}H_i
   \end{aligned}
\end{equation}

\subsubsection{Pure Invariants: \textbf{1}}
\begin{equation}
   \raisebox{-0.4\height}{\includegraphics[height=6\baselineskip]{pics/pure_1_1_0.png}}
   \begin{aligned}
      \quad
      {}{^1_1}H^2(1)  (1)
   \end{aligned}
\end{equation}


\subsection{Order 2}

\subsubsection{Irreducible Decomposition: 2}
\begin{equation}
   \begin{aligned}
      {}{^2}{\hat M}_{ij} & ={}{^2_2}H_{ij}+{}{_{0}^{2}}H\delta_{ij}
   \end{aligned}
\end{equation}

\subsubsection{Independent Elements of the Irreducible Decomposition: 1}
\begin{equation}
   \begin{aligned}
      {}{^2_2}H_{ij}
   \end{aligned}
\end{equation}

\subsubsection{Pure Invariants: \textbf{2}}
\begin{equation}
   \raisebox{-0.4\height}{\includegraphics[height=6\baselineskip]{pics/pure_2_2_0.png}}
   \begin{aligned}
      \quad
      {}{^2_2}H^2(1,2)(1,2)
   \end{aligned}
\end{equation}

\begin{equation}
   \raisebox{-0.4\height}{\includegraphics[height=6\baselineskip]{pics/pure_2_2_1.png}}
   \begin{aligned}
      \quad
      {}{^2_2}H^3(1,2)(2,3)(1,3)
   \end{aligned}
\end{equation}

\subsubsection{Mixed Simultaneous 1 2 Invariants: \textbf{2}}
\begin{equation}
   \raisebox{-0.4\height}{\includegraphics[height=6\baselineskip]{pics/mixed_1_1_2_2_0.png}}
   \begin{aligned}
      \quad
      {}{^1_1}H^2{}{^2_2}H(1)(2)(1,2)
   \end{aligned}
\end{equation}

\begin{equation}
   \raisebox{-0.4\height}{\includegraphics[height=6\baselineskip]{pics/mixed_1_1_2_2_1.png}}
   \begin{aligned}
      \quad
      {}{^1_1}H^2{}{^2_2}H^2(1)(2)(1,3)(2,3)
   \end{aligned}
\end{equation}

\subsection{Order 3}

\subsubsection{Irreducible Decomposition: 2}
\begin{equation}
   \begin{aligned}
      {}{^3}{\hat M}_{ijk} & ={}{^3_3}H_{ijk}+ {}{^3_{1}}{H}_{(i }   \delta_{jk)}
   \end{aligned}
\end{equation}

\subsubsection{Independent Elements of the Irreducible Decomposition: 1}
\begin{equation}
   \begin{aligned}
      {}{^3_3}H_i
   \end{aligned}
\end{equation}

\subsubsection{Pure Invariants: \textbf{4}}
\begin{equation}
   \raisebox{-0.4\height}{\includegraphics[height=6\baselineskip]{pics/pure_3_3_0.png}}
   \begin{aligned}
      \quad
      {}{^3_3}H^2(1,2,3)(1,2,3)
   \end{aligned}
\end{equation}

\begin{equation}
   \raisebox{-0.4\height}{\includegraphics[height=6\baselineskip]{pics/pure_3_3_1.png}}
   \begin{aligned}
      \quad
      {}{^3_3}H^4(1,2,3)  (1,2,4)  (3,5,6)  (4,5,6)
   \end{aligned}
\end{equation}

\begin{equation}
   \raisebox{-0.4\height}{\includegraphics[height=6\baselineskip]{pics/pure_3_3_2.png}}
   \begin{aligned}
      \quad
      {}{^3_3}H^6(1, 2, 3)  (2, 3, 4)  (1, 4, 5)  (5, 6, 7)  (7, 8, 9)  (6, 8, 9)
   \end{aligned}
\end{equation}

\begin{equation}
   \raisebox{-0.4\height}{\includegraphics[height=6\baselineskip]{pics/pure_3_3_3.png}}
   \begin{aligned}
      \quad
      {}{^3_3}H^{10}(1, 2, 3)  (2, 3, 4)  (1, 4, 5)  (5, 6, 7)  (6, 7, 8)  (8, 9, 10) \\
      (9, 10, 11)  (11, 12, 13)  (13, 14, 15)  (12, 14, 15)
   \end{aligned}
\end{equation}

\subsubsection{Mixed Simultaneous 1 3 Invariants: \textbf{2}}
\begin{equation}
   \raisebox{-0.4\height}{\includegraphics[height=6\baselineskip]{pics/mixed_1_1_3_3_0.png}}
   \begin{aligned}
      \quad
      {}{^1_1}H^3{}{^3_3}H(1)(2)(3)(1, 2, 3)
   \end{aligned}
\end{equation}

\begin{equation}
   \raisebox{-0.4\height}{\includegraphics[height=6\baselineskip]{pics/mixed_1_1_3_3_1.png}}
   \begin{aligned}
      \quad
      {}{^1_1}H^2{}{^3_3}H^2(1)  (2)  (1,3,4)  (2,3,4)
   \end{aligned}
\end{equation}

\subsubsection{Mixed Simultaneous 2 3 Invariants: \textbf{3}}
\begin{equation}
   \raisebox{-0.4\height}{\includegraphics[height=6\baselineskip]{pics/mixed_2_2_3_3_0.png}}
   \begin{aligned}
      \quad
      {}{^2_2}H{}{^3_3}H^2(1, 2)  (2, 3, 4)  (1, 3, 4)
   \end{aligned}
\end{equation}

\begin{equation}
   \raisebox{-0.4\height}{\includegraphics[height=6\baselineskip]{pics/mixed_2_2_3_3_1.png}}
   \begin{aligned}
      \quad
      {}{^2_2}H^2{}{^3_3}H^2(1, 2)  (2, 3)  (1, 4, 5)  (3, 4, 5)
   \end{aligned}
\end{equation}

\begin{equation}
   \raisebox{-0.4\height}{\includegraphics[height=6\baselineskip]{pics/mixed_2_2_3_3_2.png}}
   \begin{aligned}
      \quad
      {}{^2_2}H^2{}{^3_3}H^2(1, 2)  (3, 4)  (1, 2, 5)  (3, 4, 5)
   \end{aligned}
\end{equation}

\subsection{Order 4}

\subsubsection{Irreducible Decomposition: 3}
\begin{equation}
   \begin{aligned}
      {}{^4}{\hat M}_{ijkl} & =
      {}{^4_4}H_{ijkl}
      +{}{^4_{2}}{H}_{(ij }  \delta_{kl)}
      +{}{^4_{0}}{H}   \delta_{(ij}\delta_{kl)}
   \end{aligned}
\end{equation}

\subsubsection{Independent Elements of the Irreducible Decomposition: 1}
\begin{equation}
   \begin{aligned}
      {}{^4_4}H_i
   \end{aligned}
\end{equation}

\subsubsection{Pure Invariants:\textbf{6}}
\begin{equation}
   \raisebox{-0.4\height}{\includegraphics[height=6\baselineskip]{pure_4_4_0}}
   \begin{aligned}
      \quad
      {}{^4_4}H^2(1, 2, 3, 4)  (1, 2, 3, 4)
   \end{aligned}
\end{equation}

\begin{equation}
   \raisebox{-0.4\height}{\includegraphics[height=6\baselineskip]{pics/pure_4_4_1.png}}
   \begin{aligned}
      \quad
      {}{^4_4}H^3(1, 2, 3, 4)  (3, 4, 5, 6)  (1, 2, 5, 6)
   \end{aligned}
\end{equation}

\begin{equation}
   \raisebox{-0.4\height}{\includegraphics[height=6\baselineskip]{pics/pure_4_4_2.png}}
   \begin{aligned}
      \quad
      {}{^4_4}H^4(1, 2, 3, 4)  (2, 3, 4, 5)  (1, 6, 7, 8)  (5, 6, 7, 8)
   \end{aligned}
\end{equation}

\begin{equation}
   \raisebox{-0.4\height}{\includegraphics[height=6\baselineskip]{pics/pure_4_4_3.png}}
   \begin{aligned}
      \quad
      {}{^4_4}H^5(1, 2, 3, 4)  (2, 3, 4, 5)  (1, 5, 6, 7) \\
      (7, 8, 9, 10)  (6, 8, 9, 10)
   \end{aligned}
\end{equation}

\begin{equation}
   \raisebox{-0.4\height}{\includegraphics[height=6\baselineskip]{pics/pure_4_4_4.png}}
   \begin{aligned}
      \quad
      {}{^4_4}H^6(1, 2, 3, 4)
      (2, 3, 4, 5)
      (1, 5, 6, 7)
      (6, 7, 8, 9) \\
      (9, 10, 11, 12)
      (8, 10, 11, 12)
   \end{aligned}
\end{equation}

\begin{equation}
   \raisebox{-0.4\height}{\includegraphics[height=6\baselineskip]{pics/pure_4_4_5.png}}
   \begin{aligned}
      \quad
      {}{^4_4}H^7(1, 2, 3, 4)
      (2, 3, 4, 5)
      (1, 5, 6, 7)
      (6, 7, 8, 9) \\
      (8, 9, 10, 11) 
      (11, 12, 13, 14)
      (10, 12, 13, 14)
   \end{aligned}
\end{equation}

\subsubsection{Mixed Simultaneous 1 4 Invariants: \textbf{2}}
\begin{equation}
   \raisebox{-0.4\height}{\includegraphics[height=6\baselineskip]{pics/mixed_1_1_4_4_0.png}}
   \begin{aligned}
      \quad
      {}{^1_1}H^2{}{^4_4}H^2(1)  (2)  (1, 3, 4, 5)  (2, 3, 4, 5)
   \end{aligned}
\end{equation}

\begin{equation}
   \raisebox{-0.4\height}{\includegraphics[height=6\baselineskip]{pics/mixed_1_1_4_4_1.png}}
   \begin{aligned}
      \quad
      {}{^1_1}H^4{}{^4_4}H(1)  (2)  (3)  (4)  (1, 2, 3, 4)
   \end{aligned}
\end{equation}

\subsubsection{Mixed Simultaneous 2 4 Invariants: \textbf{3}}
\begin{equation}
   \raisebox{-0.4\height}{\includegraphics[height=6\baselineskip]{pics/mixed_2_2_4_4_0.png}}
   \begin{aligned}
      \quad
      {}{^2_2}H^2{}{^4_4}H(1, 2)  (3, 4)  (1, 2, 3, 4)
   \end{aligned}
\end{equation}

\begin{equation}
   \raisebox{-0.4\height}{\includegraphics[height=6\baselineskip]{pics/mixed_2_2_4_4_1.png}}
   \begin{aligned}
      \quad
      {}{^2_2}H{}{^4_4}H^2(1, 2)  (2, 3, 4, 5)  (1, 3, 4, 5)
   \end{aligned}
\end{equation}

\begin{equation}
   \raisebox{-0.4\height}{\includegraphics[height=6\baselineskip]{pics/mixed_2_2_4_4_2.png}}
   \begin{aligned}
      \quad
      {}{^2_2}H^3{}{^4_4}H(1, 2)  (2, 3)  (4, 5)  (1, 3, 4, 5)
   \end{aligned}
\end{equation}

\subsubsection{Mixed Simultaneous 3 4 Invariants: \textbf{3}}
\begin{equation}
   \raisebox{-0.4\height}{\includegraphics[height=6\baselineskip]{pics/mixed_3_3_4_4_0.png}}
   \begin{aligned}
      \quad
      {}{^3_3}H^2{}{^4_4}H(1, 2, 3)  (3, 4, 5)  (1, 2, 4, 5)
   \end{aligned}
\end{equation}

\begin{equation}
   \raisebox{-0.4\height}{\includegraphics[height=6\baselineskip]{pics/mixed_3_3_4_4_1.png}}
   \begin{aligned}
      \quad
      {}{^3_3}H^2{}{^4_4}H^2(1, 2, 3)  (2, 3, 4)  (1, 5, 6, 7)  (4, 5, 6, 7)
   \end{aligned}
\end{equation}

\begin{equation}
   \raisebox{-0.4\height}{\includegraphics[height=6\baselineskip]{pics/mixed_3_3_4_4_2.png}}
   \begin{aligned}
      \quad
      {}{^3_3}H^2{}{^4_4}H^2(1, 2, 3)  (3, 4, 5)  (1, 2, 6, 7)  (4, 5, 6, 7)
   \end{aligned}
\end{equation}

\subsection{Order 5}

\subsubsection{Irreducible Decomposition: 3}
\begin{equation}
   \begin{aligned}
      {}{^5}{\hat M}_{ijklm}
       & =
      {}{^5_5} H_{ijklm}
      +
      {}{5_3} H_{(ijk}\delta_{lm)}
      +
      {}{^5_1} H_{(i}\delta_{jk}\delta_{lm)}
   \end{aligned}
\end{equation}

\subsubsection{Independent Elements of the Irreducible Decomposition: 1}
\begin{equation}
   \begin{aligned}
      {}{^5_5}H_i
   \end{aligned}
\end{equation}

\subsubsection{Pure Invariants: \textbf{8}}
\begin{equation}
   \raisebox{-0.4\height}{\includegraphics[height=6\baselineskip]{pics/pure_5_5_0.png}}
   \begin{aligned}
      \quad
      {}{^5_5}H^2(1, 2, 3, 4, 5)  (1, 2, 3, 4, 5)
   \end{aligned}
\end{equation}

\begin{equation}
   \raisebox{-0.4\height}{\includegraphics[height=6\baselineskip]{pics/pure_5_5_1.png}}
   \begin{aligned}
      \quad
      {}{^5_5}H^4(1, 2, 3, 4, 5)  (2, 3, 4, 5, 6) \\ (1, 7, 8, 9, 10)  (6, 7, 8, 9, 10)
   \end{aligned}
\end{equation}

\begin{equation}
   \raisebox{-0.4\height}{\includegraphics[height=6\baselineskip]{pics/pure_5_5_2.png}}
   \begin{aligned}
      \quad
      {}{^5_5}H^6(1, 2, 3, 4, 5)
      (2, 3, 4, 5, 6)
      (1, 6, 7, 8, 9) \\
      (7, 8, 9, 10, 11) 
      (11, 12, 13, 14, 15) \\
      (10, 12, 13, 14, 15)
   \end{aligned}
\end{equation}

\begin{equation}
   \raisebox{-0.4\height}{\includegraphics[height=6\baselineskip]{pics/pure_5_5_3.png}}
   \begin{aligned}
      \quad
      {}{^5_5}H^6(1, 2, 3, 4, 5)
      (3, 4, 5, 6, 7)
      (1, 2, 7, 8, 9) \\
      (6, 8, 9, 10, 11) 
      (11, 12, 13, 14, 15) \\
      (10, 12, 13, 14, 15)
   \end{aligned}
\end{equation}

\begin{equation}
   \raisebox{-0.4\height}{\includegraphics[height=6\baselineskip]{pics/pure_5_5_4.png}}
   \begin{aligned}
      \quad
      {}{^5_5}H^6(1, 2, 3, 4, 5)
      (2, 3, 4, 5, 6)
      (1, 7, 8, 9, 10) \\
      (6, 8, 9, 10, 11)
      (7, 12, 13, 14, 15) \\
      (11, 12, 13, 14, 15)
   \end{aligned}
\end{equation}

\begin{equation}
   \raisebox{-0.4\height}{\includegraphics[height=6\baselineskip]{pics/pure_5_5_5.png}}
   \begin{aligned}
      \quad
      {}{^5_5}H^6(1, 2, 3, 4, 5)
      (3, 4, 5, 6, 7)
      (1, 2, 6, 7, 8) \\
      (8, 9, 10, 11, 12) 
      (11, 12, 13, 14, 15) \\
      (9, 10, 13, 14, 15)
   \end{aligned}
\end{equation}

\begin{equation}
   \raisebox{-0.4\height}{\includegraphics[height=6\baselineskip]{pics/pure_5_5_6.png}}
   \begin{aligned}
      \quad
      {}{^5_5}H^8(1, 2, 3, 4, 5)
      (2, 3, 4, 5, 6)
      (1, 6, 7, 8, 9) \\
      (7, 8, 9, 10, 11)
      (10, 11, 12, 13, 14) \\
      (12, 13, 14, 15, 16) 
      (16, 17, 18, 19, 20) \\
      (15, 17, 18, 19, 20)
   \end{aligned}
\end{equation}

\begin{equation}
   \raisebox{-0.4\height}{\includegraphics[height=6\baselineskip]{pics/pure_5_5_7.png}}
   \begin{aligned}
      \quad
      {}{^5_5}H^8(5, 5, 5, 5, 5, 5, 5, 5) (0, 1, 2, 3, 4)  (2, 3, 4, 5, 6) \\
      (0, 1, 6, 7, 8)  (5, 7, 8, 9, 10)  (9, 10, 11, 12, 13)                     \\
      (11, 12, 13, 14, 15)  (15, 16, 17, 18, 19)                                 \\
      (14, 16, 17, 18, 19)
   \end{aligned}
\end{equation}

\subsubsection{Mixed Simultaneous 1 5 Invariants: \textbf{2}}
\begin{equation}
   \raisebox{-0.4\height}{\includegraphics[height=6\baselineskip]{pics/mixed_1_1_5_5_0.png}}
   \begin{aligned}
      \quad
      {}{^1_1}H^2{}{^5_5}H^2((1)(2)(1, 3, 4, 5, 6)(2, 3, 4, 5, 6))
   \end{aligned}
\end{equation}

\begin{equation}
   \raisebox{-0.4\height}{\includegraphics[height=6\baselineskip]{pics/mixed_1_1_5_5_1.png}}
   \begin{aligned}
      \quad
      {}{^1_1}H{}{^5_5}H^3(1)(1, 2, 3, 4, 5)(4, 5, 6, 7, 8)(2, 3, 6, 7, 8)
   \end{aligned}
\end{equation}

\subsubsection{Mixed Simultaneous 2 5 Invariants: \textbf{3}}
\begin{equation}
   \raisebox{-0.4\height}{\includegraphics[height=6\baselineskip]{pics/mixed_2_2_5_5_0.png}}
   \begin{aligned}
      \quad
      {}{^2_2}H{}{^5_5}H^2(1, 2)  (2, 3, 4, 5, 6)  (1, 3, 4, 5, 6)
   \end{aligned}
\end{equation}

\begin{equation}
   \raisebox{-0.4\height}{\includegraphics[height=6\baselineskip]{pics/mixed_2_2_5_5_1.png}}
   \begin{aligned}
      \quad
      {}{^2_2}H^2{}{^5_5}H^2(1, 2)  (2, 3)  (1, 4, 5, 6, 7)  (3, 4, 5, 6, 7)
   \end{aligned}
\end{equation}

\begin{equation}
   \raisebox{-0.4\height}{\includegraphics[height=6\baselineskip]{pics/mixed_2_2_5_5_2.png}}
   \begin{aligned}
      \quad
      {}{^2_2}H^2{}{^5_5}H^2(1, 2)  (3, 4)  (1, 2, 5, 6, 7)  (3, 4, 5, 6, 7)
   \end{aligned}
\end{equation}

\subsubsection{Mixed Simultaneous 3 5 Invariants: \textbf{3}}
\begin{equation}
   \raisebox{-0.4\height}{\includegraphics[height=6\baselineskip]{pics/mixed_3_3_5_5_0.png}}
   \begin{aligned}
      \quad
      {}{^3_3}H^3{}{^5_5}H(1, 2, 3)  (2, 3, 4)  (5, 6, 7)  (1, 4, 5, 6, 7)
   \end{aligned}
\end{equation}

\begin{equation}
   \raisebox{-0.4\height}{\includegraphics[height=6\baselineskip]{pics/mixed_3_3_5_5_1.png}}
   \begin{aligned}
      \quad
      {}{^3_3}H^2{}{^5_5}H^2(1, 2, 3)  (2, 3, 4)  (1, 5, 6, 7, 8)  (4, 5, 6, 7, 8)
   \end{aligned}
\end{equation}

\begin{equation}
   \raisebox{-0.4\height}{\includegraphics[height=6\baselineskip]{pics/mixed_3_3_5_5_2.png}}
   \begin{aligned}
      \quad
      {}{^3_3}H^2{}{^5_5}H^2(1, 2, 3)  (3, 4, 5)  (1, 2, 6, 7, 8)  (4, 5, 6, 7, 8)
   \end{aligned}
\end{equation}

\subsubsection{Mixed Simultaneous 4 5 Invariants: \textbf{3}}
\begin{equation}
   \raisebox{-0.4\height}{\includegraphics[height=6\baselineskip]{pics/mixed_4_4_5_5_0.png}}
   \begin{aligned}
      \quad
      {}{^4_4}H{}{^5_5}H^2(1, 2, 3, 4)  (3, 4, 5, 6, 7)  (1, 2, 5, 6, 7)
   \end{aligned}
\end{equation}

\begin{equation}
   \raisebox{-0.4\height}{\includegraphics[height=6\baselineskip]{pics/mixed_4_4_5_5_1.png}}
   \begin{aligned}
      \quad
      {}{^4_4}H^2{}{^5_5}H^2(1, 2, 3, 4)  (2, 3, 4, 5)  (1, 6, 7, 8, 9)  (5, 6, 7, 8, 9)
   \end{aligned}
\end{equation}

\begin{equation}
   \raisebox{-0.4\height}{\includegraphics[height=6\baselineskip]{pics/mixed_4_4_5_5_2.png}}
   \begin{aligned}
      \quad
      {}{^4_4}H^2{}{^5_5}H^2(1, 2, 3, 4)  (3, 4, 5, 6)  (1, 2, 7, 8, 9)  (5, 6, 7, 8, 9)
   \end{aligned}
\end{equation}

\subsection{Order 6}

\subsubsection{Irreducible Decomposition: 1}
\begin{equation}
   \begin{aligned}
      {}{^6}{\hat M}_{ijklmn}
       & =
      {}{^6_6} H_{ijklmn}
      +
      {}{^6_4} H_{(ijkl}\delta_{mn)}
      +
      {}{^6_2} H_{(ij}\delta_{kl}\delta_{mn)}
      +
      {}{^6_0} H\delta_{(ij}\delta_{kl}\delta_{mn)}
   \end{aligned}
\end{equation}

\subsubsection{Independent Elements of the Irreducible Decomposition: 1}
\begin{equation}
   \begin{aligned}
      {}{^6_6}H_i
   \end{aligned}
\end{equation}

\subsubsection{Pure Invariants: \textbf{10}}
\begin{equation}
   \raisebox{-0.4\height}{\includegraphics[height=6\baselineskip]{pics/pure_6_6_0.png}}
   \begin{aligned}
      \quad
      {}{^6_6}H^2(1, 2, 3, 4, 5, 6)  (1, 2, 3, 4, 5, 6)
   \end{aligned}
\end{equation}

\begin{equation}
   \raisebox{-0.4\height}{\includegraphics[height=6\baselineskip]{pics/pure_6_6_1.png}}
   \begin{aligned}
      \quad
      {}{^6_6}H^3(1, 2, 3, 4, 5, 6), (4, 5, 6, 7, 8, 9), (1, 2, 3, 7, 8, 9)
   \end{aligned}
\end{equation}

\begin{equation}
   \raisebox{-0.4\height}{\includegraphics[height=6\baselineskip]{pics/pure_6_6_2.png}}
   \begin{aligned}
      \quad
      {}{^6_6}H^4(1, 2, 3, 4, 5, 6)
      (2, 3, 4, 5, 6, 7)
      (1, 8, 9, 10, 11, 12) \\
      (7, 8, 9, 10, 11, 12)
   \end{aligned}
\end{equation}

\begin{equation}
   \raisebox{-0.4\height}{\includegraphics[height=6\baselineskip]{pics/pure_6_6_3.png}}
   \begin{aligned}
      \quad
      {}{^6_6}H^4(1, 2, 3, 4, 5, 6)
      (3, 4, 5, 6, 7, 8)
      (1, 2, 9, 10, 11, 12) \\
      (7, 8, 9, 10, 11, 12)
   \end{aligned}
\end{equation}

\begin{equation}
   \raisebox{-0.4\height}{\includegraphics[height=6\baselineskip]{pics/pure_6_6_4.png}}
   \begin{aligned}
      \quad
      {}{^6_6}H^5(1, 2, 3, 4, 5, 6)
      (3, 4, 5, 6, 7, 8)
      (1, 2, 7, 8, 9, 10) \\
      (10, 11, 12, 13, 14, 15)
      (9, 11, 12, 13, 14, 15)
   \end{aligned}
\end{equation}

\begin{equation}
   \raisebox{-0.4\height}{\includegraphics[height=6\baselineskip]{pics/pure_6_6_5.png}}
   \begin{aligned}
      \quad
      {}{^6_6}H^5(1, 2, 3, 4, 5, 6)
      (3, 4, 5, 6, 7, 8)
      (1, 2, 8, 9, 10, 11) \\
      (7, 11, 12, 13, 14, 15)
      (9, 10, 12, 13, 14, 15)
   \end{aligned}
\end{equation}

\begin{equation}
   \raisebox{-0.4\height}{\includegraphics[height=6\baselineskip]{pics/pure_6_6_6.png}}
   \begin{aligned}
      \quad
      {}{^6_6}H^6(1, 2, 3, 4, 5, 6)
      (2, 3, 4, 5, 6, 7)
      (1, 7, 8, 9, 10, 11)     \\
      (8, 9, 10, 11, 12, 13)
      (13, 14, 15, 16, 17, 18) \\
      (12, 14, 15, 16, 17, 18)
   \end{aligned}
\end{equation}

\begin{equation}
   \raisebox{-0.4\height}{\includegraphics[height=6\baselineskip]{pics/pure_6_6_7.png}}
   \begin{aligned}
      \quad
      {}{^6_6}H^6(1, 2, 3, 4, 5, 6)
      (3, 4, 5, 6, 7, 8)
      (1, 2, 8, 9, 10, 11)     \\
      (7, 9, 10, 11, 12, 13)
      (13, 14, 15, 16, 17, 18) \\
      (12, 14, 15, 16, 17, 18)
   \end{aligned}
\end{equation}

\begin{equation}
   \raisebox{-0.4\height}{\includegraphics[height=6\baselineskip]{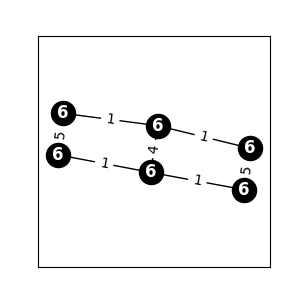}}
   \begin{aligned}
      \quad
      {}{^6_6}H^6(1, 2, 3, 4, 5, 6)
      (2, 3, 4, 5, 6, 7)
      (1, 8, 9, 10, 11, 12)   \\
      (7, 9, 10, 11, 12, 13)
      (8, 14, 15, 16, 17, 18) \\
      (13, 14, 15, 16, 17, 18)
   \end{aligned}
\end{equation}

\begin{equation}
   \raisebox{-0.4\height}{\includegraphics[height=6\baselineskip]{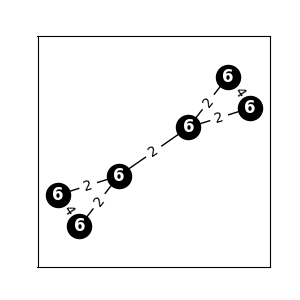}}
   \begin{aligned}
      \quad
      {}{^6_6}H^6(1, 2, 3, 4, 5, 6)
      (3, 4, 5, 6, 7, 8)
      (1, 2, 7, 8, 9, 10)      \\
      (9, 10, 11, 12, 13, 14)
      (13, 14, 15, 16, 17, 18) \\
      (11, 12, 15, 16, 17, 18)
   \end{aligned}
\end{equation}

\subsubsection{Mixed Simultaneous 1 6 Invariants: \textbf{2}}
\begin{equation}
   \raisebox{-0.4\height}{\includegraphics[height=6\baselineskip]{pics/mixed_1_1_6_6_0.png}}
   \begin{aligned}
      \quad
      {}{^1_1}H^2{}{^6_6}H^2(1)  (2)  (1, 3, 4, 5, 6, 7)  (2, 3, 4, 5, 6, 7)
   \end{aligned}
\end{equation}

\begin{equation}
   \raisebox{-0.4\height}{\includegraphics[height=6\baselineskip]{pics/mixed_1_1_6_6_1.png}}
   \begin{aligned}
      \quad
      {}{^1_1}H^2{}{^6_6}H^3(1)  (2)  (1, 2, 3, 4, 5, 6)  (5, 6, 7, 8, 9, 10) \\
      (3, 4, 7, 8, 9, 10)
   \end{aligned}
\end{equation}

\subsubsection{Mixed Simultaneous 2 6 Invariants: \textbf{3}}
\begin{equation}
   \raisebox{-0.4\height}{\includegraphics[height=6\baselineskip]{pics/mixed_2_2_6_6_0.png}}
   \begin{aligned}
      \quad
      {}{^2_2}H{}{^6_6}H^2(1, 2)  (2, 3, 4, 5, 6, 7)  (1, 3, 4, 5, 6, 7)
   \end{aligned}
\end{equation}

\begin{equation}
   \raisebox{-0.4\height}{\includegraphics[height=6\baselineskip]{pics/mixed_2_2_6_6_1.png}}
   \begin{aligned}
      \quad
      {}{^2_2}H^3{}{^6_6}H(1, 2)  (3, 4)  (5, 6)  (1, 2, 3, 4, 5, 6)
   \end{aligned}
\end{equation}

\begin{equation}
   \raisebox{-0.4\height}{\includegraphics[height=6\baselineskip]{pics/mixed_2_2_6_6_2.png}}
   \begin{aligned}
      \quad
      {}{^2_2}H^2{}{^6_6}H^2(1, 2)  (2, 3)  (1, 4, 5, 6, 7, 8)  (3, 4, 5, 6, 7, 8)
   \end{aligned}
\end{equation}

\subsubsection{Mixed Simultaneous 3 6 Invariants: \textbf{3}}
\begin{equation}
   \raisebox{-0.4\height}{\includegraphics[height=6\baselineskip]{pics/mixed_3_3_6_6_0.png}}
   \begin{aligned}
      \quad
      {}{^3_3}H^2{}{^6_6}H(1, 2, 3)  (4, 5, 6)  (1, 2, 3, 4, 5, 6)
   \end{aligned}
\end{equation}

\begin{equation}
   \raisebox{-0.4\height}{\includegraphics[height=6\baselineskip]{pics/mixed_3_3_6_6_1.png}}
   \begin{aligned}
      \quad
      {}{^3_3}H^2{}{^6_6}H^2(1, 2, 3)  (2, 3, 4)  (1, 5, 6, 7, 8, 9)  (4, 5, 6, 7, 8, 9)
   \end{aligned}
\end{equation}

\begin{equation}
   \raisebox{-0.4\height}{\includegraphics[height=6\baselineskip]{pics/mixed_3_3_6_6_2.png}}
   \begin{aligned}
      \quad
      {}{^3_3}H^2{}{^6_6}H^2(1, 2, 3)  (3, 4, 5)  (1, 2, 6, 7, 8, 9)  (4, 5, 6, 7, 8, 9)
   \end{aligned}
\end{equation}

\subsubsection{Mixed Simultaneous 4 6 Invariants: \textbf{3}}
\begin{equation}
   \raisebox{-0.4\height}{\includegraphics[height=6\baselineskip]{pics/mixed_4_4_6_6_0.png}}
   \begin{aligned}
      \quad
      {}{^4_4}H^2{}{^6_6}H (1, 2, 3, 4)  (4, 5, 6, 7)  (1, 2, 3, 5, 6, 7)
   \end{aligned}
\end{equation}

\begin{equation}
   \raisebox{-0.4\height}{\includegraphics[height=6\baselineskip]{pics/mixed_4_4_6_6_1.png}}
   \begin{aligned}
      \quad
      {}{^4_4}H{}{^6_6}H^2 (1, 2, 3, 4)  (3, 4, 5, 6, 7, 8)  (1, 2, 5, 6, 7, 8)
   \end{aligned}
\end{equation}

\begin{equation}
   \raisebox{-0.4\height}{\includegraphics[height=6\baselineskip]{pics/mixed_4_4_6_6_2.png}}
   \begin{aligned}
      \quad
      {}{^4_4}H^3{}{^6_6}H^2(1, 2, 3, 4)  (2, 3, 4, 5)  (6, 7, 8, 9)  (1, 5, 6, 7, 8, 9)
   \end{aligned}
\end{equation}

\subsubsection{Mixed Simultaneous 5 6 Invariants: \textbf{3}}
\begin{equation}
   \raisebox{-0.4\height}{\includegraphics[height=6\baselineskip]{pics/mixed_5_5_6_6_0.png}}
   \begin{aligned}
      \quad
      {}{^5_5}H^2{}{^6_6}H(1, 2, 3, 4, 5)  (4, 5, 6, 7, 8)  (1, 2, 3, 6, 7, 8)
   \end{aligned}
\end{equation}

\begin{equation}
   \raisebox{-0.4\height}{\includegraphics[height=6\baselineskip]{pics/mixed_5_5_6_6_1.png}}
   \begin{aligned}
      \quad
      {}{^5_5}H^2{}{^6_6}H^2(1, 2, 3, 4, 5)  (2, 3, 4, 5, 6)  (1, 7, 8, 9, 10, 11) \\
      (6, 7, 8, 9, 10, 11)
   \end{aligned}
\end{equation}

\begin{equation}
   \raisebox{-0.4\height}{\includegraphics[height=6\baselineskip]{pics/mixed_5_5_6_6_2.png}}
   \begin{aligned}
      \quad
      {}{^5_5}H^2{}{^6_6}H^2(1, 2, 3, 4, 5)  (3, 4, 5, 6, 7)  (1, 2, 8, 9, 10, 11) \\
      (6, 7, 8, 9, 10, 11)
   \end{aligned}
\end{equation}

\end{document}